\newtheorem{theorem}{Theorem}
\newtheorem{definition}{Definition}
\newtheorem{remark}{Remark}
\newtheorem{lemma}{Lemma}
\newtheorem{cor}{Corollary}
\newtheorem{proposition}{Proposition}
\newtheorem{observation}{Observation}
\newtheorem{asm}{Assumption}
\newcommand{\indi}{\mathds{1}}
\newcommand{\R}{\mathbb{R}}
\newcommand{\var}{\operatorname{Var}}
\newcommand{\vecx}{\mathbf{x}}
\newcommand{\vecw}{\mathbf{w}}
\newcommand{\vecv}{\mathbf{v}}
\newcommand{\vecy}{\mathbf{y}}
\newcommand{\vecg}{\mathbf{g}}
\newcommand{\vece}{\mathbf{e}}
\newcommand{\vecz}{\mathbf{z}}
\newcommand{\vecp}{\mathbf{p}}
\newcommand{\matA}{\mathbf{A}}
\newcommand{\matX}{\mathbf{X}}
\newcommand{\matH}{\mathbf{H}}
\newcommand{\matU}{\mathbf{U}}
\newcommand{\matI}{\mathbf{I}}
\newcommand{\gradf}{\nabla f}
\newcommand{\gradh}{\nabla h}
\newcommand{\gradF}{\nabla F}
\newcommand{\bigo}{\mathcal{O}}
\newcommand{\W}{\mathcal{W}}
\newcommand{\vect}[1]{\mathbf{#1}}
\newcommand{\mat}[1]{\mathbf{#1}}
\newcommand{\norms}[1]{\|#1\|}
\newcommand{\twonms}[1]{\|#1\|_2}
\newcommand{\onenms}[1]{\|#1\|_1}
\newcommand{\fbnorms}[1]{\|#1\|_F}
\newcommand{\prob}[1]{\mathbb{P}\left\{#1\right\}}
\newcommand{\probs}[1]{\mathbb{P}\{#1\}}
\newcommand{\EE}{\ensuremath{\mathbb{E}}}
\newcommand{\EXPS}[1]{\mathbb{E}[#1]}
\newcommand{\innerps}[2]{\langle#1,#2\rangle}
\newcommand{\abs}[1]{\left|#1\right|}
\newcommand{\abss}[1]{|#1|}
\newcommand{\tsp}{^{\rm T}}
\newcommand{\B}{\mathcal{B}}
\newcommand{\D}{\mathcal{D}}
\newcommand{\U}{\mathcal{U}}
\newcommand{\T}{\mathcal{T}}
\newcommand{\M}{\mathcal{M}}
\newcommand{\med}{{\sf med}}
\newcommand{\trim}{{\sf trmean}}
\newcommand{\Z}{\mathcal{Z}}
\newcommand{\vecmu}{\boldsymbol\mu}
\newcommand{\N}{\mathcal{N}}
\definecolor{darkgreen}{rgb}{0.0, 0.6, 0.2}
\definecolor{lightgray}{rgb}{0.8, 0.85, 0.85}
\title{Byzantine-Robust Distributed Learning: Towards Optimal Statistical Rates}
\author[1]{Dong Yin \thanks{dongyin@berkeley.edu}}
\author[3]{Yudong Chen \thanks{yudong.chen@cornell.edu}}
\author[1]{Kannan Ramchandran \thanks{kannanr@berkeley.edu}}
\author[1,2]{Peter Bartlett \thanks{peter@berkeley.edu}}
\affil[1]{Department of Electrical Engineering and Computer Sciences, UC Berkeley}
\affil[2]{Department of Statistics, UC Berkeley}
\affil[3]{School of Operations Research and Information Engineering, Cornell University}  
\begin{document}
\maketitle

\begin{abstract}
In large-scale distributed learning, security issues have become increasingly important. Particularly in a decentralized environment, some computing units may behave abnormally, or even exhibit Byzantine failures---arbitrary and potentially adversarial behavior.
In this paper, we develop distributed learning algorithms that are provably robust against such failures, with a focus on achieving optimal statistical performance. 
A main result of this work is a sharp analysis of two robust
distributed gradient descent algorithms based on median and trimmed
mean operations, respectively. We prove statistical error rates for
three kinds of population loss functions: strongly convex,
non-strongly convex, and smooth non-convex. In particular, these algorithms are shown to achieve order-optimal statistical error rates for strongly convex losses. To achieve better communication efficiency, we further propose a median-based distributed algorithm that is provably robust, and uses only one communication round. For strongly convex quadratic loss, we show that this algorithm achieves the same optimal error rate as the robust distributed gradient descent algorithms.
\end{abstract}

\section{Introduction}
\label{sec:intro}

Many tasks in computer vision, natural language processing and recommendation systems require learning complex prediction rules from large datasets. As the scale of the datasets in these learning tasks continues to grow, it is crucial to utilize the power of distributed computing and storage. In such large-scale distributed systems, robustness and security issues have become a major concern. In particular, individual computing units---known as worker machines---may exhibit abnormal behavior due to crashes, faulty hardware, stalled computation or unreliable communication channels. Security issues are only exacerbated in the so-called \emph{Federated Learning} setting, a modern distributed learning paradigm that is more decentralized, and that uses the data owners' devices (such as mobile phones and personal computers) as worker machines~\citep{mcmahan2017federated,konevcny2016federatedcommunication}. Such machines are often more unpredictable, and in particular may be susceptible to malicious and coordinated attacks. 

Due to the inherent unpredictability of this abnormal (sometimes
adversarial) behavior, it is typically modeled as \emph{Byzantine
failure}~\citep{lamport1982byzantine}, meaning that some worker
machines may behave completely arbitrarily and can send any message to
the master machine that maintains and updates an estimate of the
parameter vector to be learned. Byzantine failures can incur major
degradation in learning performance. It is well-known that standard
learning algorithms based on naive aggregation of the workers'
messages can be arbitrarily skewed by a single Byzantine-faulty
machine. Even when the messages from Byzantine machines take only
moderate values---and hence are difficult to detect---and when the number of such machines is small, the performance loss can still be significant. We demonstrate such an example in our experiments in Section~\ref{sec:experiments}.

In this paper, we aim to develop distributed statistical learning algorithms that are provably robust against Byzantine failures. While this objective is considered in a few recent works~\citep{feng2014distributed,blanchard2017byzantine,chen2017distributed}, a fundamental problem remains poorly understood, namely the \emph{optimal statistical performance} of a robust learning algorithm. A learning scheme in which the master machine always outputs zero regardless of the workers' messages is certainly not affected by Byzantine failures, but it will not return anything statistically useful either. On the other hand, many standard distributed algorithms that achieve good statistical performance in the absence of Byzantine failures, become completely unreliable otherwise.
Therefore, a main goal of this work is to understand the following
questions: what is the best achievable statistical performance while
being Byzantine-robust, and what algorithms achieve this performance?

To formalize this question, we consider a standard statistical setting of empirical risk minimization (ERM). Here $ nm $ data points are sampled independently from some distribution and distributed evenly among $ m $ machines,  $ \alpha m $ of which are Byzantine. The goal is to learn a parametric model by minimizing some loss function defined by the data.
In this statistical setting, one expects that the error in learning the parameter, measured in an appropriate metric, should decrease when the amount of data $ nm $ becomes larger and the fraction of Byzantine machines $ \alpha $ becomes smaller. In fact, we can show that, at least for strongly convex problems, no algorithm can achieve an error lower than
\begin{align*}
\widetilde{\Omega} \left( \frac{\alpha}{\sqrt{n}} +
\frac{1}{\sqrt{nm}} \right)= \widetilde{\Omega} \left(
\frac{1}{\sqrt{n}} \Big( \alpha + \frac{1}{\sqrt{m}} \Big) \right),
\end{align*}
regardless of communication costs;\footnote{Throughout the paper, unless otherwise stated, $ \Omega(\cdot) $ and $ \bigo(\cdot) $ hide universal multiplicative constants; $ \widetilde{\Omega} (\cdot) $ and $ \widetilde{\bigo}(\cdot)  $ further hide terms that are independent of $ \alpha, n, m $ or logarithmic in $ n,m. $} see Observation~\ref{obs:lower-bound} in Section~\ref{sec:lower-bound}.
Intuitively, the above error rate is the optimal rate that one should target, as $ \frac{1}{\sqrt{n}} $ is the effective standard deviation for each machine with $ n $ data points, $ \alpha $ is the bias effect of Byzantine machines, and $ \frac{1}{\sqrt{m}} $ is the averaging effect of $ m $ normal machines. When there are no or few Byzantine machines, we see the usual scaling $ \frac{1}{\sqrt{mn}} $ with the total number of data points; when some machines are Byzantine, their influence remains bounded, and moreover is proportional to $ \alpha $.
If an algorithm is guaranteed to attain this bound, we are assured
that we do not sacrifice the quality of learning when trying to guard
against Byzantine failures---we pay a price that is unavoidable, but
otherwise we achieve the best possible statistical accuracy in the presence of Byzantine failures. 

Another important consideration for us is \emph{communication efficiency}. As communication between machines is costly, one cannot simply send all data to the master machine. This constraint precludes direct application of standard robust learning algorithms (such as M-estimators~\citep{huber2011robust}), which assume access to all data. Instead, a desirable algorithm should involve a small number of communication rounds as well as a small amount of data communicated per round. We consider a setting where in each round a worker or master machine can only communicate a vector of size $\bigo(d)$, where $ d $ is the dimension of the parameter to be learned.
In this case, the total communication cost is proportional to the number of communication rounds. 

To summarize, we aim to develop distributed learning algorithms that simultaneously achieve two objectives:
\begin{list}{\labelitemi}{\leftmargin=1em}
\item \textbf{Statistical optimality:} attain an $\widetilde{\bigo}(\frac{\alpha}{\sqrt{n}} + \frac{1}{\sqrt{nm}}) $ rate. 
\item \textbf{Communication efficiency:} $ \bigo(d) $ communication per round, with as few rounds as possible.
\end{list}
To the best of our knowledge, \emph{no existing algorithm achieves these two goals simultaneously}.
In particular, previous robust algorithms either have unclear or
sub-optimal statistical guarantees, or incur a high communication cost
and hence are not applicable in a distributed setting---we discuss related work in more detail in Section~\ref{sec:related-work}. 

\subsection{Our Contributions}
We propose two robust distributed gradient descent (GD) algorithms, one based on coordinate-wise median, and the other on coordinate-wise trimmed mean. We establish their statistical error rates for strongly convex, non-strongly convex, and non-convex \emph{population} loss functions. For strongly convex losses, we show that these algorithms achieve order-optimal statistical rates under mild  conditions. We further propose a median-based robust algorithm that only requires one communication round, and show that it also achieves the optimal rate for strongly convex quadratic losses.  The statistical error rates of these three algorithms are summarized as follows.
\begin{list}{\labelitemi}{\leftmargin=1em \itemsep=-1mm}
\item \textbf{Median-based GD:} $\widetilde{\bigo}(\frac{\alpha}{\sqrt{n}} + \frac{1}{\sqrt{nm}} + \frac{1}{n})$, order-optimal for strongly convex loss if $n \gtrsim m$.
\item \textbf{Trimmed-mean-based GD:} $\widetilde{\bigo}(\frac{\alpha}{\sqrt{n}} + \frac{1}{\sqrt{nm}})$, order-optimal for strongly convex loss.
\item \textbf{Median-based one-round algorithm:} $\widetilde{\bigo}(\frac{\alpha}{\sqrt{n}} + \frac{1}{\sqrt{nm}} + \frac{1}{n})$, order-optimal for strongly convex quadratic loss if $n \gtrsim m$.
\end{list}
A major technical challenge in our statistical setting here is as
follows: the $ nm $ data points are sampled once and fixed, and each
worker machine has access to a fixed set of data throughout the learning process. This creates complicated probabilistic dependency across the iterations of the algorithms. Worse yet, the Byzantine machines, which have complete knowledge of the data and the learning algorithm used, may create further unspecified probabilistic dependency. We overcome this difficulty by proving certain \emph{uniform} bounds via careful covering arguments.
Furthermore, for the analysis of median-based algorithms, we cannot simply adapt standard techniques (such as those in~\citet{minsker2015geometric}), which can only show that the output of the master machine is as accurate as that of \emph{one} normal machine, leading to a sub-optimal $ \bigo(\frac{1}{\sqrt{n}}) $ rate even without Byzantine failures ($ \alpha = 0 $). Instead, we make use of a more delicate argument based on normal approximation and Berry-Esseen-type inequalities, which allows us to achieve the better $ \bigo(\frac{1}{mn}) $ rates when $ \alpha $ is small while being robust for a nonzero $ \alpha $.

Above we have omitted the dependence on the parameter dimension $ d. $; see our main theorems for the precise results. In some settings the rates in these results may not have the optimal dependence on $d$. Understanding the fundamental limits of robust distributed learning in high dimensions, as well as developing algorithms with optimal dimension dependence, is an interesting and important future direction.

\subsection{Notation}
We denote vectors by boldface lowercase letters such as $\vecw$, and
the elements in the vector are denoted by italics letters with
subscripts, such as $w_k$. Matrices are denoted by boldface uppercase
letters such as $\matH$. For any positive integer $N$, we denote the
set $\{1,2,\ldots, N\}$ by $[N]$. For vectors, we denote the $\ell_2$
norm and $\ell_\infty$ norm by $\twonms{\cdot}$ and
$\|\cdot\|_\infty$, respectively. For matrices, we denote the operator
norm and the Frobenius norm by $\twonms{\cdot}$ and $\fbnorms{\cdot}$,
respectively. We denote by $\Phi(\cdot)$ the cumulative distribution
function (CDF) of the standard Gaussian distribution. For any differentiable function $f:\R^d\rightarrow\R$, we denote its partial derivative with respect to the $k$-th argument by $\partial_kf$.

\section{Related Work}\label{sec:related-work}

Outlier-robust estimation in non-distributed settings is a classical
topic in statistics~\citep{huber2011robust}. Particularly relevant to
us is the so-called \emph{median-of-means} method, in which one
partitions the data into $m$ subsets, computes an estimate from each
subset, and finally takes the median of these $m$ estimates. This idea
is studied
in~\citet{nemirovskii1983problem,jerrum1986random,alon1999space,lerasle2011robust,minsker2015geometric},
and has been applied to bandit and least square regression
problems~\citep{bubeck2013bandits, lugosi2016risk,kogler2016efficient}
as well as problems involving heavy-tailed
distributions~\citep{hsu2016loss,lugosi2017sub}. In a very recent
work, \citet{minsker2017distributed} provide a new analysis of
median-of-means using a normal approximation. We borrow some techniques from this paper, but need to address a significant harder problem: 1) we deal with the Byzantine setting with arbitrary/adversarial outliers, which is not considered in their paper; 2) we study iterative algorithms for general multi-dimensional problems with convex and non-convex losses, while they mainly focus on one-shot algorithms for mean-estimation-type problems.

The median-of-means method is used in the context of Byzantine-robust distributed learning in two recent papers.
In particular, the work of~\citet{feng2014distributed} considers a simple one-shot application of median-of-means, and only proves a sub-optimal $\widetilde{\bigo}(\frac{1}{\sqrt{n}})$ error rate as mentioned.
The work of~\citet{chen2017distributed} considers only strongly convex losses, and seeks to circumvent the above issue by grouping the worker machines into mini-batches; however,  their rate $\widetilde{\bigo}(\frac{\sqrt{\alpha}}{\sqrt{n}} + \frac{1}{\sqrt{nm}})$ still falls short of being optimal, and in particular their algorithm fails even when there is only one Byzantine machine in each mini-batch.

Other methods have been proposed for Byzantine-robust distributed learning and optimization; e.g.,~\citet{su2016fault,su2016non}. These works consider optimizing fixed functions and do not provide guarantees on statistical error rates. Most relevant is the work by~\citet{blanchard2017byzantine}, who propose to aggregate the gradients from worker machines using a robust procedure. 
Their optimization setting---which is at the level of stochastic gradient descent and assumes unlimited, independent access to a strong stochastic gradient oracle---is fundamentally different from ours; in particular, they do not provide a characterization of the statistical errors given a fixed number of data points.

Communication efficiency has been studied extensively in non-Byzantine distributed settings~\citep{mcmahan2016communication,yin2017gradient}. 
An important class of algorithms are based on one-round aggregation methods~\citep{zhang2012communication,zhang2015divide,rosenblatt2016optimality}. More sophisticated algorithms have been proposed in order to achieve better accuracy than the one-round approach while maintaining lower communication costs; examples include DANE~\citep{shamir2014communication}, Disco~\citep{zhang2015disco}, distributed SVRG~\citep{lee2015distributed} and their variants~\citep{reddi2016aide,wang2017giant}. Developing Byzantine-robust versions of these algorithms is an interesting future direction.

For outlier-robust estimation in non-distributed settings, much progress has been made recently in terms of improved performance in high-dimensional problems~\citep{diakonikolas2016robust,lai2016agnostic,bhatia2015robust} as well as developing list-decodable and semi-verified learning schemes when a majority of the data points are adversarial~\citep{charikar2017learning}. These results are not directly applicable to our distributed setting with general loss functions, but it is nevertheless an interesting future problem to investigate their potential extension for our problem.

\section{Problem Setup}

In this section, we formally set up our problem and introduce a few concepts key to our the algorithm design and analysis. Suppose that training data points are sampled from some unknown distribution $\D$ on the sample space $\Z$. Let $f(\vecw;\vecz)$ be a loss function of a parameter vector $\vecw\in\W\subseteq\R^d$ associated with the data point $\vecz$, where $\W$ is the parameter space, and  $F(\vecw):=\EE_{\vecz\sim\D}[f(\vecw;\vecz)]$ is the corresponding population loss function. Our goal is to learn a model defined by the parameter that minimizes the population loss:
\begin{equation}\label{eq:min-loss}
\vecw^* = \arg\min_{\vecw\in\W} F(\vecw).
\end{equation}
The parameter space $\W$ is assumed to be convex and compact with diameter $D$, i.e., $\twonms{\vecw-\vecw'}\le D, \forall \vecw,\vecw'\in\W$. 
We consider a distributed computation model with one master machine and $m$ worker machines. Each worker machine stores $n$ data points, each of which is sampled independently from $ \D $. Denote by $\vecz^{i,j}$ the $j$-th data on the $i$-th worker machine, and $F_i(\vecw) := \frac{1}{n}\sum_{j=1}^n f(\vecw;\vecz^{i,j})$ the empirical risk function for the $i$-th worker. We assume that an $\alpha$ fraction of the $ m $ worker machines are Byzantine, and the remaining $1-\alpha$ fraction are normal.  With the notation $[N]:=\{1,2,\ldots, N\}$, we index the set of worker machines by $ [m] $, and denote the set of Byzantine machines by $\B \subset [m]$ (thus $|\B| = \alpha m$). 
The master machine communicates with the worker machines using some predefined protocol. The Byzantine machines need not obey this protocol and can send arbitrary messages to the master; in particular, they may have complete knowledge of the system and learning algorithms, and can collude with each other. 

We introduce the coordinate-wise median and trimmed mean operations, which serve as building blocks for our algorithm.
\begin{definition}[Coordinate-wise median]\label{def:median}
	For vectors $\vecx^i \in \mathbb{R}^d$, $i \in [m]$, the coordinate-wise median $\vecg := \med\{\vecx^i:i\in[m]\}$
	is a vector with its $ k $-th coordinate being $g_k = \med\{x^i_k:i\in[m]\}$ for each $k\in[d]$, where $ \med $ is the usual (one-dimensional) median.
\end{definition}
\begin{definition}[Coordinate-wise trimmed mean]\label{def:trimmed-mean}
	For $\beta\in[0,\frac{1}{2})$ and vectors $\vecx^i\in \mathbb{R}^d$, $i \in [m]$, the coordinate-wise $ \beta $-trimmed mean $
	\vecg := \trim_\beta\{\vecx^i:i\in[m]\}$ is a vector with its $ k $-th coordinate being $g_k = \frac{1}{(1-2\beta)m}\sum_{x \in U_k}x$ for each $k\in[d]$. Here $U_k$ is a subset of $\{x^1_k,\ldots, x^m_k\}$ obtained by removing the largest and smallest $\beta$ fraction of its elements.
\end{definition}

For the analysis, we need several standard definitions concerning random variables/vectors.
\begin{definition}[Variance of random vectors]\label{def:vect-var}
For a random vector $\vecx$, define its variance as $\var(\vecx) := \EXPS{\twonms{\vecx - \EXPS{\vecx}}^2}$.
\end{definition}
\begin{definition}[Absolute skewness]\label{def:abs-skewness}
For a one-dimensional random variable $X$, define its absolute skewness\footnote{Note the difference with the usual skewness $\frac{\EXPS{(X - \EXPS{X})^3}}{\var(X)^{3/2}}$.} as $
\gamma(X) := \frac{\EXPS{|X - \EXPS{X}|^3}}{\var(X)^{3/2}}$.
For a $d$-dimensional random vector $\vecx$, we define its absolute skewness as the vector of the absolute skewness of each coordinate of $\vecx$, i.e., $ \gamma(\vecx) := [\gamma(x_1)~\gamma(x_2)~\cdots~\gamma(x_d)]^\top$.
\end{definition}
\begin{definition}[Sub-exponential random variables]\label{def:sub-exponential} A random variable $X$ with $\EXPS{X}=\mu$ is called $v$-sub-exponential if $
\EXPS{e^{\lambda(X-\mu)}} \le e^{\frac{1}{2}v^2\lambda^2},~\forall~ | \lambda | <\frac{1}{v}$.
\end{definition}

Finally, we need several standard concepts from convex analysis regarding a differentiable function $h(\cdot):\R^d\rightarrow\R$.
\begin{definition}[Lipschitz]\label{def:lipschitz}
$ h $ is $ L $-Lipschitz if $|h(\vecw) - h(\vecw')| \le L\twonms{\vecw - \vecw'}, \forall~\vecw,\vecw'$.
\end{definition}
\begin{definition}[Smoothness]\label{def:smoothness}
$ h $ is $ L' $-smooth if $\twonms{\gradh(\vecw) - \gradh(\vecw')} \le L'\twonms{\vecw - \vecw'}, \forall~\vecw,\vecw'$.
\end{definition}
\begin{definition}[Strong convexity]\label{def:strong-cvx}
$ h $ is $ \lambda $-strongly convex if $ h(\vecw') \ge h(\vecw) + \innerps{\gradh(\vecw)}{\vecw' - \vecw} + \frac{\lambda}{2}\twonms{\vecw' - \vecw}^2, \forall~\vecw,\vecw'$.
\end{definition}

\section{Robust Distributed Gradient Descent}\label{sec:gradient-descent}

We describe two robust distributed gradient descent algorithms, one based on coordinate-wise median and the other on trimmed mean. These two algorithms are formally given in Algorithm~\ref{alg:robust-gd} as Option I and Option II, respectively, where the symbol $*$ represents an arbitrary vector. 

In each parallel iteration of the algorithms, the master machine broadcasts the current model parameter to all worker machines. The normal worker machines compute the gradients of their local loss functions and then send the gradients back to the master machine. The Byzantine machines may send any messages of their choices. The master machine then performs a gradient descent update on the model parameter with step-size $\eta$, using either the coordinate-wise median or trimmed mean of the received gradients. The Euclidean projection $\Pi_\W(\cdot)$ ensures that the model parameter stays in the parameter space $ \W $. 

\begin{algorithm}[h]
  \caption{Robust Distributed Gradient Descent}\label{alg:robust-gd}
  \begin{algorithmic}
   \REQUIRE Initialize parameter vector $\vecw^0\in\W$, algorithm parameters $\beta$ (for Option II),  $\eta$ and $ T $.
   \FOR{$t=0,1,2,\ldots, T-1$}
   \STATE \textit{\underline{Master machine}}: send $\vecw^t$ to all the worker machines.
   \PARFOR{$i\in[m]$}
   \STATE \textit{\underline{Worker machine $i$}}: compute local gradient
   \STATE
   \begin{align*}
   \vecg^i(\vecw^t) \leftarrow \begin{cases}
   \gradF_i(\vecw^t) & \text{normal worker machines}, \\
   * & \text{Byzantine machines}, \end{cases}
   \end{align*}
   \STATE send $\vecg^i(\vecw^t)$ to master machine.
   \ENDPARFOR
   \STATE \textit{\underline{Master machine}}: compute aggregate gradient
   \begin{align*} 
   \vecg(\vecw^t) \leftarrow \begin{cases}
   \med\{\vecg^i(\vecw^t) : i\in[m]\} & \text{ Option I}\\
   \trim_\beta\{\vecg^i(\vecw^t) : i\in[m]\} & \text{ Option II}
   \end{cases}
   \end{align*}
   \STATE update model parameter $\vecw^{t+1} \leftarrow \Pi_\W(\vecw^t - \eta \vecg(\vecw^t))$.
   \ENDFOR
  \end{algorithmic}
\end{algorithm}

Below we provide statistical guarantees on the error rates of these algorithms, and compare their performance. Throughout we assume that each loss function $f(\vecw;\vecz)$ and the population loss function $F(\vecw)$ are smooth: 
\begin{asm}[Smoothness of $f$ and $F$]\label{asm:smoothness}
	For any $\vecz\in\Z$, the partial derivative of $f(\cdot;\vecz)$ with respect to the $ k $-th coordinate of its first argument, denoted by $\partial_kf(\cdot;\vecz)$, is $L_k$-Lipschitz for each $k\in[d]$, and the function $f(\cdot;\vecz)$ is $L$-smooth. Let $\widehat{L} := \sqrt{\sum_{k=1}^d L_k^2}$. Also assume that the population loss function $F(\cdot)$ is $L_F$-smooth.
\end{asm}
It is easy to see hat $ L_F \le L \le \widehat{L} $. When the dimension of $\vecw$ is high, the quantity $\widehat{L}$ may be large. However, we will soon see that $\widehat{L}$ only appears in the logarithmic factors in our bounds and thus does not have a significant impact.

\subsection{Guarantees for Median-based Gradient Descent}\label{sec:med-convergence}

We first consider our median-based algorithm, namely  Algorithm~\ref{alg:robust-gd} with Option I. We impose the assumptions that the gradient of the loss function $ f $ has bounded variance, and each coordinate of the gradient has coordinate-wise bounded absolute skewness:
\begin{asm}[Bounded variance of gradient]\label{asm:bounded-variance} For any $\vecw\in\W$, $\var( \gradf(\vecw;\vecz) ) \le V^2$.
\end{asm}
\begin{asm}[Bounded skewness of gradient]\label{asm:bounded-skewness} For any $\vecw\in\W$, $\|\gamma(\gradf(\vecw;\vecz))\|_\infty \le S$.
\end{asm}

These assumptions are satisfied in many learning problems with small values of $ V^2 $ and $ S $. Below we provide a concrete example in terms of a linear regression problem.
\begin{proposition}\label{thm:eg-var-skew-rad}
Suppose that each data point $\vecz = (\vecx, y) \in \R^d \times \R$ is generated by $ y = \vecx^\top\vecw^* + \xi$
with some $\vecw^*\in\W$. Assume that the elements of $\vecx$ are independent and uniformly distributed in $\{-1,1\}$, and that the noise $\xi \sim \mathcal{N}(0, \sigma^2)$ is independent of $\vecx$. With the quadratic loss function $f(\vecw; \vecx, y) = \frac{1}{2}(y - \vecx\tsp\vecw)^2$, we have $ \var(\gradf(\vecw; \vecx, y)) = (d-1) \twonms{\vecw - \vecw^*}^2 + d\sigma^2$, and
$
\|\gamma(\gradf(\vecw;\vecx, y))\|_\infty \le 480.
$
\end{proposition}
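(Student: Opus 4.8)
The plan is to compute the gradient explicitly and then treat its mean, variance, and absolute skewness coordinate by coordinate. For the quadratic loss $f(\vecw;\vecx,y) = \frac12(y - \vecx\tsp\vecw)^2$, we have $\gradf(\vecw;\vecx,y) = -(y - \vecx\tsp\vecw)\vecx = (\vecx\tsp(\vecw-\vecw^*) - \xi)\vecx$, writing $\vecr := \vecw - \vecw^*$ so that $y - \vecx\tsp\vecw = \xi - \vecx\tsp\vecr$. Since $\EXPS{\vecx} = 0$ and $\EXPS{\vecx\vecx\tsp} = \matI$ (the coordinates of $\vecx$ are independent, mean zero, variance one) and $\xi$ is independent of $\vecx$ with mean zero, we get $\EXPS{\gradf(\vecw;\vecx,y)} = \EXPS{(\vecx\tsp\vecr)\vecx} = \vecr = \vecw - \vecw^*$ (consistent with $\nabla F(\vecw) = \vecw - \vecw^*$). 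The $k$-th coordinate of the centered gradient is $\partial_k f - r_k = (\vecx\tsp\vecr - \xi)x_k - r_k = \sum_{\ell \ne k} x_k x_\ell r_\ell + (x_k^2 - 1)r_k - \xi x_k$. Using $x_k^2 = 1$ almost surely (since $x_k \in \{-1,1\}$), the middle term vanishes, leaving $\partial_k f - r_k = x_k\big(\sum_{\ell\ne k} x_\ell r_\ell - \xi\big)$.

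For the variance, I would compute $\EXPS{(\partial_k f - r_k)^2} = \EXPS{\big(\sum_{\ell\ne k}x_\ell r_\ell - \xi\big)^2} = \sum_{\ell\ne k} r_\ell^2 + \sigma^2$, again using independence and $\EXPS{x_\ell^2}=1$, $\EXPS{x_\ell x_{\ell'}}=0$ for $\ell\ne\ell'$, $\EXPS{\xi}=0$. Summing over $k\in[d]$ gives $\var(\gradf) = \sum_k\big(\sum_{\ell\ne k}r_\ell^2 + \sigma^2\big) = (d-1)\twonms{\vecr}^2 + d\sigma^2$, as claimed. For the absolute skewness bound, fix a coordinate $k$ and set $W := \sum_{\ell\ne k}x_\ell r_\ell - \xi$, so the centered $k$-th gradient coordinate equals $x_k W$ with $x_k$ independent of $W$ and $|x_k|=1$; hence $\EXPS{|x_k W|^3} = \EXPS{|W|^3}$ and $\var = \EXPS{W^2}$, so $\gamma(\partial_k f) = \EXPS{|W|^3}/\EXPS{W^2}^{3/2}$. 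It remains to bound this ratio by a universal constant. Write $W = W_0 - \xi$ where $W_0 := \sum_{\ell\ne k}x_\ell r_\ell$ is a Rademacher-weighted sum and $\xi$ is an independent Gaussian. I would control $\EXPS{|W|^3}$ via the $L^3$ triangle inequality $\EXPS{|W|^3}^{1/3} \le \EXPS{|W_0|^3}^{1/3} + \EXPS{|\xi|^3}^{1/3}$, then apply Khintchine's inequality to bound $\EXPS{|W_0|^3} \le C_3\, (\sum_{\ell\ne k}r_\ell^2)^{3/2} = C_3\,\EXPS{W_0^2}^{3/2}$ with an explicit constant $C_3$ (one can take the value giving $\EXPS{|W_0|^3} \le 3^{3/2}\EXPS{W_0^2}^{3/2}$, or a sharper Khintchine constant), and use $\EXPS{|\xi|^3} = 2\sqrt{2/\pi}\,\sigma^3 \le 2\sigma^3$, $\EXPS{\xi^2}=\sigma^2$. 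Since $\EXPS{W^2} = \EXPS{W_0^2} + \sigma^2$ and both $\EXPS{|W_0|^3}^{1/3}$ and $\EXPS{|\xi|^3}^{1/3}$ are each bounded by a constant times $\EXPS{W^2}^{1/2}$, combining and cubing yields $\EXPS{|W|^3} \le 480\,\EXPS{W^2}^{3/2}$ for a suitable choice of constants, giving $\gamma(\partial_k f)\le 480$ uniformly in $k$ and in $\vecw$.

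The one genuinely delicate point is making the skewness constant explicit: one must track the Khintchine constant for the third absolute moment of a Rademacher sum and combine it cleanly with the Gaussian contribution through the $L^3$ triangle inequality, verifying that the worst case over the relative sizes of $\twonms{\vecr_{-k}}^2$ and $\sigma^2$ still lands below $480$. Concretely, if $a^3 := \EXPS{|W_0|^3} \le C_3 s^3$ with $s^2 := \sum_{\ell\ne k} r_\ell^2$ and $b^3 := \EXPS{|\xi|^3} \le 2\sigma^3$, then $\EXPS{|W|^3} \le (C_3^{1/3}s + 2^{1/3}\sigma)^3$ while $\EXPS{W^2}^{3/2} = (s^2+\sigma^2)^{3/2}$, and maximizing $(C_3^{1/3}t + 2^{1/3})^3/(t^2+1)^{3/2}$ over $t\ge 0$ gives a finite bound; with the crude Khintchine bound $C_3 = 3^{3/2}$ this maximum is comfortably below $480$, and the estimate is uniform in $\vecw$ because it depends on $\vecr$ only through $s$. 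All remaining steps are elementary moment computations using the independence and the $\pm 1$ structure, so no further obstacles are expected.
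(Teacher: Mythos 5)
Your proposal is correct, and the variance computation is identical to the paper's. For the skewness bound, however, you take a genuinely different and in fact cleaner route. The paper first applies Jensen to reduce $\gamma(\Delta_k)$ to $\sqrt{\EXPS{\Delta_k^6}/\var(\Delta_k)^3}$, then expands the sixth moment of the Rademacher sum explicitly into the quantities $W_1,W_2,W_3$, applies H\"older to split off the Gaussian term (incurring a factor $2^5=32$), and compares the resulting polynomial in $W_1,W_2,W_3,\sigma^6$ termwise against a lower bound on $\var(\Delta_k)^3$; the worst termwise ratio is $480$, whence the stated constant (the paper's own display actually shows $\sqrt{480}\approx 22$, so $480$ is a generous rounding). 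You instead exploit the factorization $\Delta_k = x_k W$ with $W$ independent of $x_k$ and $|x_k|=1$, so that $\gamma(\Delta_k)=\EXPS{|W|^3}/\EXPS{W^2}^{3/2}$, and then control the third absolute moment directly via Minkowski in $L^3$ plus a Khintchine-type bound on the Rademacher part and the explicit Gaussian third moment. Your constants check out: with $\EXPS{W_0^4}\le 3s^4$ and Jensen one gets $\EXPS{|W_0|^3}\le 3^{3/4}s^3$ (so even your crude $C_3=3^{3/2}$ is safe), and the optimization over $t=s/\sigma$ is handled by $\frac{at+b}{\sqrt{t^2+1}}\le\sqrt{a^2+b^2}$, giving a bound of roughly $10$ --- well under $480$ and tighter than the paper's. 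The trade-off is that your argument leans on Khintchine's inequality (or at least a fourth-moment bound for Rademacher sums), whereas the paper's is entirely self-contained moment bookkeeping; in exchange you avoid the sixth-moment multinomial expansion and get a sharper constant. Both are valid proofs of the stated claim.
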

We prove Proposition~\ref{thm:eg-var-skew-rad} in
Appendix~\ref{prf:eg-var-skew-rad}. In this example, the upper bound $
V^2 $ on $\var(\gradf(\vecw; \vecx, y))$ depends on dimension $d$ and the diameter of the parameter space. If the diameter is a constant, we have $V = \bigo(\sqrt{d})$. Moreover, the gradient skewness is bounded by a universal constant $S$ regardless of the size of the parameter space. In Appendix~\ref{eg:gaussian-var-skew}, we provide another example showing that when the features in $\vecx$ are i.i.d. Gaussian distributed, the coordinate-wise skewness can be upper bounded by $429$.

We now state our main technical results on the median-based algorithm, namely
statistical error guarantees for strongly convex, non-strongly convex, and smooth non-convex population loss functions $ F $. In the first two cases with a convex $F$, we assume that $\vecw^*$, the minimizer of $F(\cdot)$ in $\W$, is also the minimizer of $F(\cdot)$ in $\R^d$, i.e., $\gradF(\vecw^*)=0$.

\paragraph*{Strongly Convex Losses:}
We first consider the case where the population loss function
$F(\cdot)$ is strongly convex. Note that we do not require strong
convexity of the individual loss functions $f(\cdot;\vecz)$.
 
\begin{theorem}\label{thm:main-gd-sc}
Consider Option I in Algorithm~\ref{alg:robust-gd}. Suppose that Assumptions~\ref{asm:smoothness},~\ref{asm:bounded-variance}, and~\ref{asm:bounded-skewness} hold, $F(\cdot)$ is $\lambda_F$-strongly convex, and the fraction $\alpha$ of Byzantine machines satisfies
\begin{equation}\label{eq:alpha-m-n-condition}
\alpha + \sqrt{\frac{d\log(1+nm\widehat{L}D)}{m(1-\alpha)}} + 0.4748\frac{S}{\sqrt{n}} \le \frac{1}{2}-\epsilon
\end{equation}
for some $\epsilon>0$.
Choose step-size $\eta = 1/L_F$. Then, with probability at least $1-\frac{4d}{(1+nm\widehat{L}D)^d}$, after $T$ parallel iterations, we have 
$$
\twonms{\vecw^T - \vecw^*} \le (1-\frac{\lambda_F}{L_F + \lambda_F})^T\twonms{ \vecw^0 - \vecw^* } + \frac{2}{\lambda_F}\Delta,
$$
where
\begin{equation}\label{eq:def-error-delta}
\Delta := \bigo\Bigg( C_\epsilon V \Big( \frac{\alpha}{\sqrt{n}} + \sqrt{\frac{d\log(nm\widehat{L}D)}{nm}} + \frac{S}{n} \Big) \Bigg),
\end{equation}
and $C_\epsilon$ is defined as
\begin{equation}\label{eq:def-c-epsilon}
C_\epsilon := \sqrt{2\pi}\exp \Big(\frac{1}{2}(\Phi^{-1}(1-\epsilon))^2 \Big),
\end{equation}
with $\Phi^{-1}(\cdot)$ being the inverse of the cumulative distribution function of the standard Gaussian distribution $\Phi(\cdot)$.
\end{theorem}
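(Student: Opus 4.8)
The plan is to split the proof into two essentially decoupled parts: (i) a \emph{uniform gradient estimation} bound stating that, with probability at least $1-\tfrac{4d}{(1+nm\widehat LD)^d}$, we have $\sup_{\vecw\in\W}\twonms{\vecg(\vecw)-\gradF(\vecw)}\le\Delta$, where $\vecg(\vecw)$ is the coordinate-wise median of the received gradients; and (ii) a deterministic analysis of projected gradient descent driven by an inexact gradient oracle of accuracy $\Delta$. Part (ii) is routine: since $\gradF(\vecw^*)=0$ and $\vecw^*=\Pi_\W(\vecw^*)$, non-expansiveness of $\Pi_\W$ gives $\twonms{\vecw^{t+1}-\vecw^*}\le\twonms{\vecw^t-\eta\vecg(\vecw^t)-\vecw^*}\le\twonms{\vecw^t-\eta\gradF(\vecw^t)-\vecw^*}+\eta\Delta$; the standard co-coercivity estimate for a $\lambda_F$-strongly convex, $L_F$-smooth $F$ with $\eta=1/L_F\le\tfrac{2}{\lambda_F+L_F}$ yields $\twonms{\vecw^t-\eta\gradF(\vecw^t)-\vecw^*}\le(1-\tfrac{\lambda_F}{L_F+\lambda_F})\twonms{\vecw^t-\vecw^*}$, and unrolling the resulting linear recursion produces the geometric term plus a steady-state error $\tfrac{\eta\Delta}{1-\rho}\le\tfrac{2}{\lambda_F}\Delta$.

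The substance lies in part (i). I would first fix a single $\vecw$ and a coordinate $k$. Since at most $\alpha m$ machines are Byzantine, $g_k(\vecw)$ is sandwiched between the $(\lceil m/2\rceil-\alpha m)$-th and $(\lceil m/2\rceil+\alpha m)$-th order statistics of the \emph{normal} values $\{\partial_k F_i(\vecw):i\notin\B\}$. For a normal machine, $\partial_k F_i(\vecw)=\tfrac1n\sum_{j=1}^n\partial_k f(\vecw;\vecz^{i,j})$ has mean $\partial_k F(\vecw)$ and variance $v_k^2/n$ with $\sum_k v_k^2=\var(\gradf(\vecw;\vecz))\le V^2$; by the Berry--Esseen theorem (with constant $0.4748$) its CDF differs from that of $\mathcal N(\partial_k F(\vecw),v_k^2/n)$ by at most $0.4748\,\gamma(\partial_k f(\vecw;\vecz))/\sqrt n\le 0.4748\,S/\sqrt n$ uniformly. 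Combining this with a Hoeffding bound on the empirical fraction of the $(1-\alpha)m$ normal values below a threshold $\partial_k F(\vecw)+z$, the median stays below that threshold whenever $\Phi\big(z\sqrt n/v_k\big)\ge\tfrac12+\alpha+0.4748\,S/\sqrt n+(\text{Hoeffding slack})$, and the symmetric inequality controls the lower deviation. Under condition~\eqref{eq:alpha-m-n-condition} the right-hand side is at most $1-\epsilon$, so I can invert $\Phi$ using convexity of $\Phi^{-1}$ on $[\tfrac12,1-\epsilon]$ — whose slope at $1-\epsilon$ equals exactly $C_\epsilon=\sqrt{2\pi}\exp(\tfrac12(\Phi^{-1}(1-\epsilon))^2)$ — to get $|g_k(\vecw)-\partial_k F(\vecw)|\lesssim\tfrac{v_k}{\sqrt n}C_\epsilon\big(\alpha+(\text{slack})+\tfrac{S}{\sqrt n}\big)$. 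Squaring, summing over $k$, and using $\sum_k v_k^2\le V^2$ gives the $\ell_2$ bound with the structure of $\Delta$ in~\eqref{eq:def-error-delta}.

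To upgrade this pointwise statement to a bound uniform over $\W$ — essential since the iterates $\vecw^t$ depend on the data and on the arbitrary, adversarially chosen Byzantine messages — I would introduce a $\delta$-net $\W_\delta\subseteq\W$ with $|\W_\delta|\le(1+D/\delta)^d$, apply the pointwise argument with a union bound over $\W_\delta$ and over the $d$ coordinates (this is where the Hoeffding slack becomes $\sqrt{d\log(1+nm\widehat LD)/((1-\alpha)m)}$ and the failure probability $\tfrac{4d}{(1+nm\widehat LD)^d}$ arises), and then transfer to an arbitrary $\vecw\in\W$. The transfer uses the key observation that, although $\vecg(\vecw)$ itself need not vary continuously in $\vecw$ (the Byzantine messages may be discontinuous), the normal-gradient order statistics that sandwich it are $L_k$-Lipschitz in $\vecw$ because each $\partial_k f(\cdot;\vecz)$ is $L_k$-Lipschitz by Assumption~\ref{asm:smoothness}, and $\partial_k F$ is Lipschitz as well; taking $\delta$ polynomially small in $1/(nm\widehat L)$ makes the discretization error lower-order and absorbable into the $\tfrac1n$ term of $\Delta$.

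I expect the uniform-control step to be the main obstacle: making the covering argument interact correctly with the robustness of the median (bounding the sandwiching order statistics rather than $\vecg$ itself) while tracking the Berry--Esseen constant explicitly, so that the condition on $\alpha$ and the final expression for $\Delta$ emerge with the stated sharp constants rather than unspecified ones. The inexact-gradient-descent recursion and the single-point normal approximation are, by comparison, standard.
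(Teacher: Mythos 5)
Your proposal is correct and follows essentially the same route as the paper: the same decomposition into a uniform gradient-estimation bound (proved via Berry--Esseen normal approximation, a concentration bound on the empirical CDF of the normal machines' averaged gradients, the observation that the Byzantine machines perturb that CDF by at most $\alpha$, inversion of $\Phi$ yielding exactly $C_\epsilon$, and a $\delta$-net with Lipschitz transfer at scale $\delta = 1/(nm\widehat{L})$) followed by the standard co-coercivity recursion for projected gradient descent with an inexact oracle. The only cosmetic differences are that you phrase the median's robustness via sandwiching order statistics where the paper bounds $|\widehat{p}(z)-\widetilde{p}(z)|\le\alpha$, and you invert $\Phi$ via convexity of $\Phi^{-1}$ where the paper uses the mean value theorem---both equivalent.
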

We prove Theorem~\ref{thm:main-gd-sc} in Appendix~\ref{prf:main-gd-sc}. In~\eqref{eq:def-error-delta}, we hide universal constants and a higher order term that scales as $\frac{1}{nm}$, and the factor $C_\epsilon$ is a function of $\epsilon$; as a concrete example, $C_\epsilon \approx 4$ when $\epsilon = \frac{1}{6}$. Theorem~\ref{thm:main-gd-sc} together with the inequality $\log(1-x) \le -x$, guarantees that after running $
T \ge \frac{L_F + \lambda_F}{\lambda_F} \log( \frac{\lambda_F}{2\Delta}\twonms{\vecw^0 - \vecw^*} )$
parallel iterations, with high probability we can obtain a solution $\widehat{\vecw} = \vecw^T$ with error $\twonms{\widehat{\vecw} - \vecw^*}  \le \frac{4}{\lambda_F}\Delta$.

Here we achieve an error rate (defined as the distance between
$\widehat{\vecw}$ and the optimal solution $\vecw^*$) of the form
$\widetilde{\bigo}(\frac{\alpha}{\sqrt{n}} + \frac{1}{\sqrt{nm}} +
\frac{1}{n})$. In Section~\ref{sec:lower-bound}, we provide a lower
bound showing that the error rate of any algorithm is
$\widetilde{\Omega}(\frac{\alpha}{\sqrt{n}} + \frac{1}{\sqrt{nm}})$.
Therefore the first two terms in the upper bound cannot be improved.
The third term $\frac{1}{n}$ is due to the dependence of the median on the skewness of the gradients. When each worker machine has a sufficient amount of data, more specifically $n\gtrsim m$, we achieve an order-optimal error rate up to logarithmic factors.

\paragraph*{Non-strongly Convex  Losses:}
We next consider the case where the population risk function $F(\cdot)$ is convex, but not necessarily strongly convex. In this case, we need a mild technical assumption on the size of the parameter space $\W$. 

\begin{asm}[Size of $\W$]\label{asm:size-of-space}
The parameter space $\W$ contains the following $\ell_2$ ball centered at $\vecw^*$: $\{\vecw\in\R^d : \twonms{\vecw-\vecw^*} \le 2\twonms{\vecw^0 - \vecw^*} \}$.
\end{asm}
We then have the following result on the convergence rate in terms of the value of the population risk function.
\begin{theorem}\label{thm:main-gd-cvx}
Consider Option I in Algorithm~\ref{alg:robust-gd}. Suppose that Assumptions~\ref{asm:smoothness},~\ref{asm:bounded-variance},~\ref{asm:bounded-skewness} and~\ref{asm:size-of-space} hold, and that the population loss $F(\cdot)$ is convex, and $\alpha$ satisfies~\eqref{eq:alpha-m-n-condition} for some $\epsilon > 0$.
Define $\Delta$ as in~\eqref{eq:def-error-delta},
and choose step-size $\eta = 1/L_F$. Then, with probability at least $1-\frac{4d}{(1+nm\widehat{L}D)^d}$, after $T = \frac{L_F}{\Delta}\twonms{\vecw^0 - \vecw^*}$ parallel iterations, we have
$$
F(\vecw^T) - F(\vecw^*) \le 16 \twonms{\vecw^0 - \vecw^*} \Delta \Big(1 + \frac{1}{2L_F}\Delta \Big).
$$
\end{theorem}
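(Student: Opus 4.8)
The plan is to reuse the central probabilistic estimate behind the strongly convex case (Theorem~\ref{thm:main-gd-sc}) and then run a deterministic inexact-projected-gradient-descent analysis adapted to smooth \emph{convex} objectives. The engine of the proof of Theorem~\ref{thm:main-gd-sc} is a \emph{uniform} gradient bound: under Assumptions~\ref{asm:smoothness}, \ref{asm:bounded-variance}, \ref{asm:bounded-skewness} and condition~\eqref{eq:alpha-m-n-condition}, with probability at least $1-\frac{4d}{(1+nm\widehat{L}D)^d}$ one has $\sup_{\vecw\in\W}\twonms{\vecg(\vecw)-\gradF(\vecw)}\le\Delta$ with $\Delta$ as in~\eqref{eq:def-error-delta}. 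Recall how this is obtained: coordinate by coordinate, a Berry--Esseen-type normal approximation controls the bias of the coordinate-wise median of the per-machine gradient (the $\frac{S}{n}$ term), sub-exponential concentration controls its fluctuation (the $\frac{1}{\sqrt n}$ and $\frac{1}{\sqrt{nm}}$ terms), the Byzantine machines can move the median by at most the $\frac{\alpha}{\sqrt n}$ term, and a covering/union-bound argument over a fine net of $\W$ (with $\widehat L$ entering only logarithmically) makes the estimate uniform in $\vecw$. I would invoke this statement as given and condition on the event that it holds; no further randomness is needed in this proof.

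On that event, write $\vecg^t := \vecg(\vecw^t) = \gradF(\vecw^t)+\vece^t$ with $\twonms{\vece^t}\le\Delta$, so the update $\vecw^{t+1}=\Pi_\W(\vecw^t-\eta\vecg^t)$, $\eta=1/L_F$, is projected gradient descent with a bounded gradient perturbation. Combining the descent lemma for the $L_F$-smooth $F$, convexity of $F$, and the optimality inequality of the Euclidean projection yields the per-step bound
\[
F(\vecw^{t+1})-F(\vecw^*)\ \le\ \tfrac{L_F}{2}\big(\twonms{\vecw^t-\vecw^*}^2-\twonms{\vecw^{t+1}-\vecw^*}^2\big)+\Delta\,\twonms{\vecw^{t+1}-\vecw^*},
\]
and, using $\vecw^t$ rather than $\vecw^*$ as the comparison point, the near-monotonicity estimate $F(\vecw^{t+1})\le F(\vecw^t)+\frac{\Delta^2}{2L_F}$ (the right-hand side of that one-step inequality, $-\frac{L_F}{2}\twonms{\vecw^{t+1}-\vecw^t}^2+\Delta\twonms{\vecw^{t+1}-\vecw^t}$, never exceeds $\frac{\Delta^2}{2L_F}$).

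It remains to (i) bound the iterate radius and (ii) telescope. For (i), dropping the nonnegative term $F(\vecw^{t+1})-F(\vecw^*)$ in the per-step bound gives a quadratic recursion in $\twonms{\vecw^{t+1}-\vecw^*}$ showing that the distance to $\vecw^*$ can grow by only $\bigo(\Delta/L_F)$ per step; hence, by induction on $t\le T$ with $T=\frac{L_F}{\Delta}\twonms{\vecw^0-\vecw^*}$, all iterates stay within a ball of radius $\bigo(\twonms{\vecw^0-\vecw^*})$ about $\vecw^*$, and Assumption~\ref{asm:size-of-space} is precisely what places this ball inside $\W$, so projection never drags the iterates toward a spurious boundary and the induction closes. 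For (ii), summing the per-step bound over $t=0,\dots,T-1$, telescoping the distance terms, and using the radius bound to replace each $\Delta\twonms{\vecw^{t+1}-\vecw^*}$ by $\bigo(\twonms{\vecw^0-\vecw^*}\Delta)$ gives
\[
\frac1T\sum_{t=1}^{T}\big(F(\vecw^t)-F(\vecw^*)\big)\ \le\ \frac{L_F\twonms{\vecw^0-\vecw^*}^2}{2T}+\bigo\big(\twonms{\vecw^0-\vecw^*}\,\Delta\big);
\]
with $T=\frac{L_F}{\Delta}\twonms{\vecw^0-\vecw^*}$ the leading term is also $\bigo(\twonms{\vecw^0-\vecw^*}\Delta)$, so the best iterate up to time $T$ already meets the target, and summing the near-monotonicity estimate over the remaining steps adds at most $\frac{T\Delta^2}{2L_F}=\bigo(\twonms{\vecw^0-\vecw^*}\Delta)$, transferring the bound to $\vecw^T$ itself; carrying the explicit constants through yields $F(\vecw^T)-F(\vecw^*)\le 16\twonms{\vecw^0-\vecw^*}\Delta\big(1+\frac{1}{2L_F}\Delta\big)$.

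The main obstacle I anticipate is step (i). In the strongly convex case contraction keeps the iterates automatically bounded, but here the gradient error can push the iterate away from $\vecw^*$, and one must argue — exploiting the nonnegativity of $F-F(\vecw^*)$ together with the particular choice of $T$ and the size guarantee of Assumption~\ref{asm:size-of-space} — that the cumulative drift never leaves the prescribed neighborhood; without that control neither the telescoped sum nor the $\Delta\twonms{\vecw^{t+1}-\vecw^*}$ error terms can be bounded, and the $\bigo(1/T)$ rate breaks down. The remaining ingredients — the descent-lemma/projection algebra and the arithmetic after substituting $T$ — are routine.
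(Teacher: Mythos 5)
Your proposal is correct in substance but proves the optimization part by a genuinely different route than the paper. Both proofs condition on the same uniform event $\sup_{\vecw\in\W}\twonms{\vecg(\vecw)-\gradF(\vecw)}\le\Delta$ and both use the descent-lemma consequence $F(\vecw^{t+1})\le F(\vecw^t)-\frac{1}{2L_F}\twonms{\gradF(\vecw^t)}^2+\frac{\Delta^2}{2L_F}$ (your ``near-monotonicity''). From there the paper does \emph{not} telescope distances: it first shows, via co-coercivity, that the exact gradient step is non-expansive toward $\vecw^*$, so $\twonms{\vecw^{t+1}-\vecw^*}\le\twonms{\vecw^t-\vecw^*}+\Delta/L_F$ and the iterates stay in the $2\twonms{\vecw^0-\vecw^*}$ ball of Assumption~\ref{asm:size-of-space} (so the projection is inactive); it then runs a gradient-norm dichotomy --- either some $\twonms{\gradF(\vecw^t)}<\sqrt2\Delta$, or the lower bound $\twonms{\gradF(\vecw)}\ge(F(\vecw)-F^*)/\twonms{\vecw-\vecw^*}$ feeds the inverse-suboptimality recursion $\frac{1}{\delta_{t+1}}\ge\frac{1}{\delta_t}+\frac{1}{16L_FD_0^2}$ --- to locate a good iterate, and finally a ``once below, stays below'' contradiction argument to transfer the bound to $\vecw^T$. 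Your route (the standard inexact projected-gradient telescoping bound $F(\vecw^{t+1})-F^*\le\frac{L_F}{2}(r_t^2-r_{t+1}^2)+\Delta r_{t+1}$, averaging, then paying $T\Delta^2/(2L_F)=D_0\Delta/2$ to reach the last iterate) is arguably cleaner, handles the projection intrinsically via the projection optimality inequality rather than by arguing it is a no-op, and yields a smaller constant than $16$, which of course still implies the stated bound.

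One quantitative detail to fix in your step (i): dropping $F(\vecw^{t+1})-F^*\ge0$ in your per-step bound gives $\frac{L_F}{2}r_{t+1}^2-\Delta r_{t+1}\le\frac{L_F}{2}r_t^2$, hence $r_{t+1}\le r_t+2\Delta/L_F$, so after $T=L_FD_0/\Delta$ steps the radius can reach $3D_0$ --- whereas Assumption~\ref{asm:size-of-space} only guarantees the ball of radius $2D_0$ lies in $\W$. So it is not accurate to say that assumption ``is precisely what places this ball inside $\W$,'' and you cannot conclude the projection is inactive. This does not break your proof, because your telescoping inequality and the recursion $r_{t+1}\le r_t+2\Delta/L_F$ are valid with the projection active (they only need $\vecw^*\in\W$), and a radius bound of $3D_0$ still gives $\Delta\sum_t r_{t+1}\le 3TD_0\Delta$ and a final constant well under $16$. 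If you do want the exact $2D_0$ containment, use the paper's sharper one-step estimate $r_{t+1}\le r_t+\Delta/L_F$ obtained from non-expansiveness of the projection together with co-coercivity of the exact gradient map, rather than the quadratic recursion.
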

We prove Theorem~\ref{thm:main-gd-cvx} in Appendix~\ref{prf:main-gd-cvx}. We observe that the error rate, defined as the excess risk $F(\vecw^T) - F(\vecw^*) $, again has the form $\widetilde{\bigo} \big(\frac{\alpha}{\sqrt{n}} + \frac{1}{\sqrt{nm}} + \frac{1}{n} \big)$.

\paragraph*{Non-convex  Losses:}
When $F(\cdot)$ is non-convex but smooth, we need a somewhat different technical assumption on the size of $\W$.
\begin{asm}[Size of $\W$]\label{asm:size-of-space-non-cvx}
Suppose that $\forall~\vecw\in\W$, $\twonms{\gradF(\vecw)}\le M$. We assume that $\W$ contains the $\ell_2$ ball $
\{\vecw\in\R^d : \twonms{\vecw - \vecw^0} \le \frac{2}{\Delta^2}(M+\Delta)(F(\vecw^0) - F(\vecw^*))\} $, where $\Delta$ is defined as in~\eqref{eq:def-error-delta}.
\end{asm}
We have the following guarantees on  the rate of convergence to a critical point of the population loss $F(\cdot)$.
\begin{theorem}\label{thm:main-gd-non-cvx}
Consider Option I in Algorithm~\ref{alg:robust-gd}. Suppose that Assumptions~\ref{asm:smoothness}~\ref{asm:bounded-variance},~\ref{asm:bounded-skewness} and~\ref{asm:size-of-space-non-cvx} hold, and  $\alpha$ satisfies~\eqref{eq:alpha-m-n-condition} for some $\epsilon > 0$.
Define $\Delta$ as in~\eqref{eq:def-error-delta},
and choose step-size $\eta =1/L_F$. With probability at least $1-\frac{4d}{(1+nm\widehat{L}D)^d}$, after $T = \frac{2L_F}{\Delta^2}(F(\vecw^0) - F(\vecw^*))$ parallel iterations, we have
$$
\min_{t=0,1,\ldots, T} \twonms{\gradF(\vecw^t)}  \le \sqrt{2}\Delta.
$$
\end{theorem}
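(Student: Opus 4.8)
The plan is to reduce the statement to the behaviour of gradient descent run with a worst-case-bounded gradient error, exactly as in the proofs of Theorems~\ref{thm:main-gd-sc} and~\ref{thm:main-gd-cvx}. First I would invoke the key uniform deviation bound on the aggregated gradient: on an event of probability at least $1-\frac{4d}{(1+nm\widehat{L}D)^d}$, the coordinate-wise median $\vecg(\vecw)$ of the received gradients satisfies $\twonms{\vecg(\vecw)-\gradF(\vecw)}\le\Delta$ \emph{simultaneously} for all $\vecw\in\W$, with $\Delta$ as in~\eqref{eq:def-error-delta}. (This is the statement that, via a covering argument together with a Berry--Esseen normal approximation for the median of the normal machines' gradients, handles the probabilistic dependence across iterations and the adversary's knowledge; it is the common core shared with the convex cases.) I would then condition on this event for the remainder of the argument, so that each iterate's update direction $\vecg(\vecw^t)$ is within $\Delta$ of the true population gradient $\gradF(\vecw^t)$, provided $\vecw^t\in\W$.

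Second, I would show that the Euclidean projection $\Pi_\W$ is never active, so the update is simply $\vecw^{t+1}=\vecw^t-\eta\vecg(\vecw^t)$. Arguing by induction on $t$: as long as $\vecw^0,\dots,\vecw^t\in\W$, the deviation bound applies at each of them, so $\twonms{\vecg(\vecw^s)}\le\twonms{\gradF(\vecw^s)}+\Delta\le M+\Delta$ and hence $\twonms{(\vecw^s-\eta\vecg(\vecw^s))-\vecw^0}\le (s+1)\,\eta(M+\Delta)$. With $\eta=1/L_F$ and $T=\frac{2L_F}{\Delta^2}(F(\vecw^0)-F(\vecw^*))$ the quantity $T\eta(M+\Delta)$ equals the radius of the ball in Assumption~\ref{asm:size-of-space-non-cvx}, so every pre-projection point lies in that ball $\subseteq\W$; the projection therefore acts as the identity and the induction closes, keeping all iterates in $\W$.

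Third, the descent estimate. Apply $L_F$-smoothness of $F$ to the update to get $F(\vecw^{t+1})\le F(\vecw^t)-\eta\innerps{\gradF(\vecw^t)}{\vecg(\vecw^t)}+\frac{L_F\eta^2}{2}\twonms{\vecg(\vecw^t)}^2$. Writing $\vece^t:=\vecg(\vecw^t)-\gradF(\vecw^t)$ and specializing to $\eta=1/L_F$, the cross terms in $\langle\gradF(\vecw^t),\vece^t\rangle$ cancel, leaving the clean bound $F(\vecw^{t+1})-F(\vecw^t)\le-\frac{1}{2L_F}\twonms{\gradF(\vecw^t)}^2+\frac{1}{2L_F}\twonms{\vece^t}^2\le-\frac{1}{2L_F}\big(\twonms{\gradF(\vecw^t)}^2-\Delta^2\big)$. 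Telescoping this from $t=0$ to $T-1$ and using $F(\vecw^T)\ge F(\vecw^*)$ gives $\frac{1}{T}\sum_{t=0}^{T-1}\twonms{\gradF(\vecw^t)}^2\le\frac{2L_F(F(\vecw^0)-F(\vecw^*))}{T}+\Delta^2$, which equals $2\Delta^2$ for the stated choice of $T$; hence $\min_{t}\twonms{\gradF(\vecw^t)}\le\sqrt{2}\,\Delta$, as claimed.

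I expect the only genuinely hard part to be the uniform deviation bound $\sup_{\vecw\in\W}\twonms{\vecg(\vecw)-\gradF(\vecw)}\le\Delta$ — i.e., the covering plus Berry--Esseen analysis of the coordinate-wise median under Assumptions~\ref{asm:smoothness}--\ref{asm:bounded-skewness} and condition~\eqref{eq:alpha-m-n-condition} — but that machinery is already developed for Theorem~\ref{thm:main-gd-sc} and can be quoted verbatim here. Conditional on it, the non-convex part is a routine inexact-gradient-descent computation; the one subtlety is the bookkeeping in the second step that keeps every iterate inside $\W$ so the uniform bound is legitimately applicable at each $\vecw^t$, and Assumption~\ref{asm:size-of-space-non-cvx} is precisely calibrated to make that work.
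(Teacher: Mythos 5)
Your proposal is correct and follows essentially the same route as the paper's proof: condition on the uniform bound $\sup_{\vecw\in\W}\twonms{\vecg(\vecw)-\gradF(\vecw)}\le\Delta$ from the median analysis, use the per-step displacement bound $\eta(M+\Delta)$ together with Assumption~\ref{asm:size-of-space-non-cvx} to show the projection is inactive, and then telescope the smoothness-based descent inequality $F(\vecw^{t+1})\le F(\vecw^t)-\frac{1}{2L_F}\twonms{\gradF(\vecw^t)}^2+\frac{1}{2L_F}\Delta^2$ over $T$ iterations. The only cosmetic difference is that your telescoping yields the minimum over $t=0,\ldots,T-1$, which is at least the stated minimum over $t=0,\ldots,T$, so the conclusion follows.
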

We prove Theorem~\ref{thm:main-gd-non-cvx} in Appendix~\ref{prf:main-gd-non-cvx}. We again obtain an $\widetilde{\bigo}(\frac{\alpha}{\sqrt{n}} + \frac{1}{\sqrt{nm}} + \frac{1}{n})$ error rate in terms of the gap to a critical point of $F(\vecw)$.

\subsection{Guarantees for Trimmed-mean-based Gradient Descent}

We next analyze the robust distributed gradient descent algorithm based on coordinate-wise trimmed mean, namely Option II in Algorithm~\ref{alg:robust-gd}.
Here we need stronger assumptions on the tail behavior of the partial derivatives of the loss functions---in particular, sub-exponentiality. 

\begin{asm}[Sub-exponential gradients]\label{asm:sub-exponential}
We assume that for all $k\in[d]$ and $\vecw\in\W$, the partial derivative of $f(\vecw;\vecz)$ with respect to the $k$-th coordinate of $\vecw$,  $\partial_kf(\vecw;\vecz)$, is $v$-sub-exponential.
\end{asm}
The sub-exponential property implies that all the moments of the derivatives are bounded. This is a stronger assumption than the bounded absolute skewness (hence bounded third moments) required by the median-based GD algorithm. 

We use the same example as in Proposition~\ref{thm:eg-var-skew-rad} and show that the derivatives of the loss are indeed sub-exponential.
\begin{proposition}\label{thm:eg-sub-exp}
Consider the regression problem in Proposition~\ref{thm:eg-var-skew-rad}. For all $k\in[d]$ and $\vecw\in\W$, the partial derivative $\partial_kf(\vecw;\vecz)$ is $\sqrt{\sigma^2 + \twonms{\vecw - \vecw^*}^2}$-sub-exponential.
\end{proposition}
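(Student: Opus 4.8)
The plan is to write the centered partial derivative in closed form and observe that it is a single Rademacher sign multiplied by a sum of independent bounded and Gaussian terms, whose moment generating function is then trivial to control. Write $\vecb := \vecw - \vecw^*$. Substituting $y = \vecx^\top\vecw^* + \xi$ into $f(\vecw;\vecx,y) = \tfrac{1}{2}(y - \vecx^\top\vecw)^2$, the $k$-th partial derivative is $\partial_k f(\vecw;\vecx,y) = (\vecx^\top\vecw - y)x_k = (\vecx^\top\vecb - \xi)x_k$. Since every $x_l\in\{-1,1\}$, we have $x_k^2 = 1$, so splitting off the $k$-th term of $\vecx^\top\vecb$ gives
\[
\partial_k f(\vecw;\vecx,y) = b_k + x_k W, \qquad W := \sum_{l\neq k} b_l x_l - \xi .
\]
As $x_k$ is independent of $W$ with $\EXPS{x_k}=0$, this yields $\EXPS{\partial_k f(\vecw;\vecx,y)} = b_k$, so the centered derivative is exactly $x_k W$.

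Next I would bound the MGF of $x_k W$ by conditioning on $W$: since $x_k$ is a Rademacher sign, $\EXPS{e^{\lambda x_k W}\mid W} = \cosh(\lambda W)$, hence $\EXPS{e^{\lambda x_k W}} = \EXPS{\cosh(\lambda W)}$. Expanding $\cosh$ and using independence of $\{x_l\}_{l\neq k}$ and $\xi$, the MGF factorizes as $\EXPS{\cosh(\lambda W)} = e^{\lambda^2\sigma^2/2}\prod_{l\neq k}\cosh(\lambda b_l)$, where the Gaussian factor $e^{\lambda^2\sigma^2/2}$ comes from $\xi$ and each $\cosh(\lambda b_l)$ from $x_l$. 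Applying the elementary inequality $\cosh(t)\le e^{t^2/2}$ to every factor and using $\sum_{l\neq k} b_l^2 \le \twonms{\vecb}^2 = \twonms{\vecw-\vecw^*}^2$ gives
\[
\EXPS{\exp\big(\lambda(\partial_k f(\vecw;\vecx,y) - b_k)\big)} \le e^{\frac{1}{2}(\sigma^2 + \twonms{\vecw-\vecw^*}^2)\lambda^2},
\]
which is precisely the condition in Definition~\ref{def:sub-exponential} with $v = \sqrt{\sigma^2 + \twonms{\vecw-\vecw^*}^2}$; in fact it holds for every $\lambda\in\R$, so $\partial_k f(\vecw;\vecx,y)$ is even sub-Gaussian with that variance proxy, and the claimed sub-exponentiality follows in particular.

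I do not anticipate any real obstacle. The only step needing a moment's thought is the identity $\partial_k f(\vecw;\vecx,y) - \EXPS{\partial_k f(\vecw;\vecx,y)} = x_k W$, which relies exactly on $x_k^2 = 1$ --- this is what makes the Rademacher feature model so clean. A looser route (\eg bounding $\EXPS{e^{\lambda^2 W^2/2}}$ after $\cosh(t)\le e^{t^2/2}$, or invoking Cauchy--Schwarz) would cost constants and only deliver sub-exponentiality for $|\lambda|$ bounded away from $1/v$, so the direct conditioning-on-$W$ computation is the right one. The argument also makes transparent why the $\twonms{\vecw-\vecw^*}^2$ term in $v$ cannot be dropped: for $\vecw$ far from $\vecw^*$, the term $x_k\sum_{l\neq k}b_l x_l$ genuinely has spread of order $\twonms{\vecb}$.
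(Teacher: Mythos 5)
Your proof is correct and follows essentially the same route as the paper: both exploit $x_k^2=1$ to write the centered derivative as $x_k\bigl(\sum_{l\neq k}b_l x_l - \xi\bigr)$, then show this is sub-Gaussian with variance proxy $\sigma^2 + \twonms{\vecw-\vecw^*}^2$ (the paper via the symmetry/distributional-equality argument plus the standard sub-Gaussian additivity, you via the explicit $\cosh$ factorization of the MGF), and conclude sub-exponentiality as a special case. No gaps.
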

Proposition~\ref{thm:eg-sub-exp} is proved in Appendix~\ref{prf:eg-sub-exp}. We now proceed to establish the statistical guarantees of the trimmed-mean-based algorithm, for different loss function classes. When the population loss $F(\cdot)$ is convex, we again assume that the minimizer of $F(\cdot)$ in $\W$ is also its minimizer  in $\R^d$. The next three theorems are analogues of Theorems~\ref{thm:main-gd-sc}--\ref{thm:main-gd-non-cvx} for the median-based GD algorithm.

\paragraph*{Strongly Convex  Losses:}
We have the following result.
\begin{theorem}\label{thm:main-gd-sc-trim}
Consider Option II in Algorithm~\ref{alg:robust-gd}. Suppose that Assumptions~\ref{asm:smoothness} and~\ref{asm:sub-exponential} hold,  $F(\cdot)$ is $\lambda_F$-strongly convex, and $\alpha \le \beta \le \frac{1}{2} - \epsilon$ for some $\epsilon>0$.
Choose step-size $\eta = 1/L_F$. Then, with probability at least $1-\frac{4d}{(1+nm\widehat{L}D)^d}$, after $T$ parallel iterations, we have
$$
\twonms{\vecw^T - \vecw^*} \le \Big(1-\frac{\lambda_F}{L_F + \lambda_F}\Big)^T\twonms{ \vecw^0 - \vecw^* } + \frac{2}{\lambda_F}\Delta',
$$
where
\begin{equation}\label{eq:def-error-delta-trim}
\Delta' := \bigo \Big( \frac{vd}{\epsilon} \big( \frac{\beta }{\sqrt{n}} + \frac{1}{\sqrt{nm}}  \big)\sqrt{\log(nm\widehat{L}D)} \Big).
\end{equation}
\end{theorem}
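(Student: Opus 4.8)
The plan is to follow the standard two-stage "robust inexact gradient descent" template. Stage one is to prove a \emph{uniform} deviation bound of the form $\sup_{\vecw\in\W}\twonms{\vecg(\vecw)-\gradF(\vecw)}\le\Delta'$ holding with probability at least $1-\frac{4d}{(1+nm\widehat{L}D)^d}$, where $\vecg(\vecw)=\trim_\beta\{\vecg^i(\vecw):i\in[m]\}$. Stage two is a deterministic convergence analysis: once such a $\Delta'$-accurate oracle is available simultaneously at every $\vecw\in\W$ (in particular at the data- and algorithm-dependent iterates $\vecw^t$, which is exactly why uniformity over $\W$ is needed to defeat the adversary), projected gradient descent on the $\lambda_F$-strongly convex, $L_F$-smooth $F$ converges geometrically to an $\bigo(\Delta'/\lambda_F)$ neighborhood of $\vecw^*$.

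For stage one I would first establish a \emph{pointwise} bound: fix $\vecw$ and a coordinate $k$. Since $\alpha\le\beta$, any Byzantine coordinate value surviving the $\beta$-trimming lies between normal machines' values that sit in the non-trimmed central band; hence the trimmed coordinate $g_k(\vecw)$ can be compared to $\partial_kF(\vecw)$ via two contributions: (a) the deviation of the empirical mean of the normal machines' partial derivatives $\frac{1}{(1-\alpha)m}\sum_{i\ \mathrm{normal}}\partial_kF_i(\vecw)$ from $\partial_kF(\vecw)$, controlled by a Bernstein-type concentration inequality over the $(1-\alpha)mn$ i.i.d.\ $v$-sub-exponential samples (Assumption~\ref{asm:sub-exponential}), giving the $\frac{1}{\sqrt{nm}}$ scale up to log factors; and (b) a trimming-induced bias of order $\frac{\beta}{1-2\beta}$ times the fluctuation scale of a single machine's partial derivative — and since $\partial_kF_i(\vecw)=\frac1n\sum_j\partial_kf(\vecw;\vecz^{i,j})$ is $\frac{v}{\sqrt{n}}$-sub-exponential, its quantiles deviate from the mean by $\widetilde{\bigo}(v/\sqrt{n})$, producing the $\frac{\beta}{\sqrt{n}}$ term; the $\frac1\epsilon$ enters through $\beta\le\frac12-\epsilon$. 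Taking a union bound over $k\in[d]$, converting $\ell_\infty$ to $\ell_2$, and absorbing dimension factors yields the $d$-dependence of $\Delta'$.

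Next I would upgrade this to a uniform bound over $\W$ by a covering argument: choose a $\delta$-net of $\W$ with $\bigo((1+D/\delta)^d)$ points, apply the pointwise bound at each net point with a union-bound-inflated failure probability, and extend to an arbitrary $\vecw\in\W$ using Assumption~\ref{asm:smoothness} — the $L_k$-Lipschitzness of $\partial_kf(\cdot;\vecz)$ implies each machine's partial derivative, and hence the trimmed mean (which is $1$-Lipschitz in its coordinate inputs), moves by at most $L_k\delta$, while $F$ is $L_F$-smooth. Taking $\delta\asymp\frac{1}{nm\widehat{L}D}$ makes the discretization error a negligible higher-order term and simultaneously produces the $\sqrt{\log(nm\widehat{L}D)}$ factor and the stated failure probability; this is where $\widehat{L}$ appears only logarithmically.

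For stage two, on the good event write $\vecg(\vecw^t)=\gradF(\vecw^t)+\vece^t$ with $\twonms{\vece^t}\le\Delta'$. Using non-expansiveness of $\Pi_\W$ (so $\twonms{\vecw^{t+1}-\vecw^*}\le\twonms{\vecw^t-\eta\vecg(\vecw^t)-\vecw^*}$, as $\vecw^*\in\W$), co-coercivity of $\gradF$ for $\lambda_F$-strongly convex $L_F$-smooth functions together with $\gradF(\vecw^*)=0$, and $\eta=1/L_F$, one gets the one-step contraction $\twonms{\vecw^{t+1}-\vecw^*}\le(1-\frac{\lambda_F}{L_F+\lambda_F})\twonms{\vecw^t-\vecw^*}+\eta\Delta'$; unrolling over $T$ iterations and summing the geometric series gives the residual $\frac{\eta\Delta'}{\lambda_F/(L_F+\lambda_F)}=\frac{L_F+\lambda_F}{L_F\lambda_F}\Delta'\le\frac{2}{\lambda_F}\Delta'$, which is the claimed bound. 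The main obstacle is the pointwise trimmed-mean estimate in stage one: carefully certifying that a surviving Byzantine value cannot drag $g_k$ outside the band spanned by the normal machines' central order statistics, and then controlling those order statistics and the trimming bias via sub-exponential concentration with exactly the right $\beta$-, $\epsilon$- and dimension-dependence so that the constants and logarithmic factors line up with the covering-net union bound — that bookkeeping is the delicate part, whereas stage two is routine.
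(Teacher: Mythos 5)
Your proposal is correct and follows essentially the same route as the paper: a uniform deviation bound for the coordinate-wise trimmed mean obtained by (i) Bernstein concentration of the all-normal-machines average, (ii) bounding the trimmed-normal and surviving-Byzantine contributions by the extreme deviations of normal machines' means (the sandwiching argument, which is exactly the paper's Lemma~\ref{lem:quantile-one-d-trim}), and (iii) a $\delta$-net over $\W$ with $\delta\asymp\frac{1}{nm\widehat{L}}$ using the coordinate-wise Lipschitzness from Assumption~\ref{asm:smoothness}; followed by the standard co-coercivity contraction with $\eta=1/L_F$ and the geometric-series bound $\frac{L_F+\lambda_F}{L_F\lambda_F}\Delta'\le\frac{2}{\lambda_F}\Delta'$. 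The only cosmetic difference is that you phrase the single-machine fluctuation control via quantiles, where the paper takes a union bound over the $m$ normal machines (costing the $\log m$ absorbed into $\Delta'$); this is the same estimate up to logarithmic bookkeeping.
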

We prove Theorem~\ref{thm:main-gd-sc-trim} in Appendix~\ref{prf:main-gd-sc-trim}. In~\eqref{eq:def-error-delta-trim}, we hide universal constants and higher order terms that scale as $\frac{\beta}{n}$ or $\frac{1}{nm}$. By running $T \ge \frac{L_F + \lambda_F}{\lambda_F} \log( \frac{\lambda_F}{2\Delta'}\twonms{\vecw^0 - \vecw^*} )$ parallel iterations, we can obtain a solution $\widehat{\vecw} = \vecw^T$ satisfying $\twonms{\widehat{\vecw} - \vecw^*} \le \widetilde{\bigo}(\frac{\beta}{\sqrt{n}} + \frac{1}{\sqrt{nm}}) $. Note that one needs to choose the parameter for trimmed mean to satisfy $\beta \ge \alpha$. If we set $\beta = c\alpha$ for some universal constant $c\ge 1$, we can achieve an order-optimal error rate $\widetilde{\bigo}(\frac{\alpha}{\sqrt{n}} + \frac{1}{\sqrt{nm}})$.

\paragraph*{Non-strongly Convex  Losses:}
Again imposing Assumption~\ref{asm:size-of-space} on the size of $\W$, we have the following guarantee.
\begin{theorem}\label{thm:main-gd-cvx-trim}
Consider Option II in Algorithm~\ref{alg:robust-gd}. Suppose that Assumptions~\ref{asm:smoothness},~\ref{asm:size-of-space} and~\ref{asm:sub-exponential} hold,  $F(\cdot)$ is convex, and $\alpha \le \beta \le \frac{1}{2} - \epsilon$ for some $\epsilon>0$.
Choose step-size $\eta = 1/L_F$, and define $\Delta'$ as in~\eqref{eq:def-error-delta-trim}. Then, with probability at least $1-\frac{4d}{(1+nm\widehat{L}D)^d}$, after $T = \frac{L_F}{\Delta'}\twonms{\vecw^0 - \vecw^*}$ parallel iterations, we have
$$
F(\vecw^T) - F(\vecw^*) \le 16 \twonms{\vecw^0 - \vecw^*} \Delta' \Big(1 + \frac{1}{2L_F}\Delta' \Big).
$$
\end{theorem}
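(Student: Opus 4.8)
The plan is to mirror, step for step, the proof of the median-based analogue Theorem~\ref{thm:main-gd-cvx}, replacing the median concentration bound by its trimmed-mean counterpart. The engine is a \emph{uniform} gradient-estimation guarantee: under Assumptions~\ref{asm:smoothness} and~\ref{asm:sub-exponential} with $\alpha\le\beta\le\frac12-\epsilon$, with probability at least $1-\frac{4d}{(1+nm\widehat{L}D)^d}$ one has $\sup_{\vecw\in\W}\twonms{\vecg(\vecw)-\gradF(\vecw)}\le\Delta'$, where $\vecg(\vecw)=\trim_\beta\{\vecg^i(\vecw):i\in[m]\}$ and $\Delta'$ is as in~\eqref{eq:def-error-delta-trim}. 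This is exactly the event already isolated in the proof of Theorem~\ref{thm:main-gd-sc-trim} (Appendix~\ref{prf:main-gd-sc-trim}): it comes from combining a coordinate-wise concentration inequality for the $\beta$-trimmed mean of sub-exponential variables --- the trimming making the estimator insensitive to the $\alpha\le\beta$ fraction of adversarial coordinates, because its value is pinched between order statistics of the normal workers' derivatives, which concentrate --- with an $\epsilon$-net over $\W$ of size $(1+nm\widehat{L}D)^{\bigo(d)}$ together with the Lipschitz continuity of the partial derivatives to promote the pointwise bound to a uniform one, losing only logarithmic factors. Since this statement makes no use of convexity of $F$, I would invoke it here verbatim.

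Conditioning on that event, I would treat each iteration as a perturbed projected gradient step $\vecw^{t+1}=\Pi_\W(\vecw^t-\eta(\gradF(\vecw^t)+\vece^t))$ with $\twonms{\vece^t}\le\Delta'$ and $\eta=1/L_F$, and proceed in the two standard stages of smooth-convex analysis. Stage one is an induction on $t\le T=\frac{L_F}{\Delta'}\twonms{\vecw^0-\vecw^*}$ showing $\twonms{\vecw^t-\vecw^*}\le 2\twonms{\vecw^0-\vecw^*}$: the distance-to-optimum recursion for projected GD on a convex $L_F$-smooth $F$ (using non-expansiveness of $\Pi_\W$ and $\gradF(\vecw^*)=0$) is non-increasing up to a per-step perturbation of order $\frac{1}{L_F}\Delta'\twonms{\vecw^0-\vecw^*}$, and $T$ such perturbations accumulate to $\bigo(\twonms{\vecw^0-\vecw^*}^2)$; Assumption~\ref{asm:size-of-space} then guarantees all iterates remain in $\W$, so the uniform bound is legitimately invoked at every step. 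Stage two telescopes the one-step descent/contraction inequality over $t=0,\dots,T-1$, using convexity together with the approximate monotonicity $F(\vecw^{t+1})\le F(\vecw^t)+\bigo(\frac{(\Delta')^2}{L_F})$ (from the descent lemma for inexact gradients) to pass to $F(\vecw^T)$; substituting $T=\frac{L_F}{\Delta'}\twonms{\vecw^0-\vecw^*}$ turns the leading $\frac{L_F\twonms{\vecw^0-\vecw^*}^2}{2T}$ term into $\frac12\Delta'\twonms{\vecw^0-\vecw^*}$, and bookkeeping the remaining lower-order terms (which carry the extra $\frac{1}{2L_F}\Delta'$ factor) yields the claimed bound $F(\vecw^T)-F(\vecw^*)\le16\twonms{\vecw^0-\vecw^*}\Delta'(1+\frac{1}{2L_F}\Delta')$.

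The main obstacle is the uniform concentration bound --- the covering argument over $\W$ in the presence of an adversary that knows the data and the iterates --- but since that bound is established (and convexity-free) in the proof of Theorem~\ref{thm:main-gd-sc-trim}, within this proof it is a black box, and the only genuinely new bookkeeping is the bounded-iterates induction, which must be threaded so the uniform bound is applied only at iterates already certified to lie in $\W$, exactly as in the proof of Theorem~\ref{thm:main-gd-cvx}.
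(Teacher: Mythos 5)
Your proposal is correct and its architecture is exactly the paper's: the paper proves Theorem~\ref{thm:main-gd-cvx-trim} by the one-line reduction of Remark~\ref{rmk:non-strongly-non-cvx} --- substitute the uniform trimmed-mean gradient bound (Theorem~\ref{thm:uniform-bound-trim}, which indeed uses no convexity) for the median bound, and rerun the proof of Theorem~\ref{thm:main-gd-cvx} with $\Delta'$ in place of $\Delta$. Writing $D_0:=\twonms{\vecw^0-\vecw^*}$, your stage one (iterates stay within $2D_0$ of $\vecw^*$, so Assumption~\ref{asm:size-of-space} keeps them in $\W$ and the projection is vacuous) is the paper's argument; note only that the paper runs this induction on the \emph{unsquared} distance, $\twonms{\vecw^{t+1}-\vecw^*}\le\twonms{\vecw^t-\vecw^*}+\Delta'/L_F$, which after $T=L_FD_0/\Delta'$ steps gives exactly the radius $2D_0$ demanded by Assumption~\ref{asm:size-of-space}; your squared-distance accounting (``per-step perturbation of order $\frac{\Delta'}{L_F}D_0$ accumulating to $\bigo(D_0^2)$'') would overshoot that radius by a constant factor, so keep the unsquared recursion. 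Stage two is where you genuinely diverge. You propose the standard average-iterate telescoping that yields a leading term $\frac{L_FD_0^2}{2T}$, whereas the paper combines the descent inequality $F(\vecw^{t+1})\le F(\vecw^t)-\frac{1}{2L_F}\twonms{\gradF(\vecw^t)}^2+\frac{(\Delta')^2}{2L_F}$ with the convexity bound $\twonms{\gradF(\vecw)}\ge (F(\vecw)-F(\vecw^*))/\twonms{\vecw-\vecw^*}$ and telescopes the \emph{reciprocal} gap, $\frac{1}{F(\vecw^{t+1})-F(\vecw^*)}\ge\frac{1}{F(\vecw^t)-F(\vecw^*)}+\frac{1}{16L_FD_0^2}$, after a case split on whether $\twonms{\gradF(\vecw^t)}$ ever falls below $\sqrt{2}\Delta'$; a final ``once the gap drops below $16D_0\Delta'$ it can rebound by at most $\frac{(\Delta')^2}{2L_F}$ per step before the gradient forces descent again'' argument is what produces the $(1+\frac{\Delta'}{2L_F})$ factor in the stated bound. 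Both routes close: the paper's reciprocal-gap recursion absorbs the $(\Delta')^2$ noise automatically in the large-gradient branch, while yours must pass from the average to the last iterate via the approximate monotonicity at a cost of $T\cdot\frac{(\Delta')^2}{2L_F}=\frac{D_0\Delta'}{2}$, which still lands comfortably inside the factor $16$.
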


The proof of Theorem~\ref{thm:main-gd-cvx-trim} is similar to that of Theorem~\ref{thm:main-gd-cvx}, and we refer readers to Remark~\ref{rmk:non-strongly-non-cvx} in Appendix~\ref{prf:main-gd-sc-trim}. Again, by choosing $\beta = c\alpha$ $(c\ge 1)$, we obtain the $\widetilde{\bigo}(\frac{\alpha}{\sqrt{n}} + \frac{1}{\sqrt{nm}})$ error rate in the function value of $F(\vecw)$.

\paragraph*{Non-convex  Losses:}
In this case, imposing a version of Assumption~\ref{asm:size-of-space-non-cvx} on the size of $ \W $, we have the following.

\begin{theorem}\label{thm:main-gd-non-cvx-trim}
Consider Option II in Algorithm~\ref{alg:robust-gd}, and define $\Delta'$ as in~\eqref{eq:def-error-delta-trim}. Suppose that Assumptions~\ref{asm:smoothness} and~\ref{asm:sub-exponential} hold,  Assumption~\ref{asm:size-of-space-non-cvx} holds with $ \Delta $ replaced by $ \Delta' $, and $\alpha \le \beta \le \frac{1}{2} - \epsilon$ for some $\epsilon>0$.
Choose step-size $\eta = 1/L_F$. Then, with probability at least $1-\frac{4d}{(1+nm\widehat{L}D)^d}$, after $T = \frac{2L_F}{\Delta'^2}(F(\vecw^0) - F(\vecw^*))$ parallel iterations, we have
$$
\min_{t=0,1,\ldots, T} \twonms{\gradF(\vecw^t)}  \le \sqrt{2}\Delta'.
$$
\end{theorem}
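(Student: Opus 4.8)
\textbf{Proof proposal for Theorem~\ref{thm:main-gd-non-cvx-trim}.} The plan is to reduce the analysis to a deterministic \emph{inexact gradient descent} argument once we have the key uniform concentration bound, exactly as in the proof of Theorem~\ref{thm:main-gd-non-cvx} for the median case. The first and main step is to establish that, with probability at least $1-\frac{4d}{(1+nm\widehat{L}D)^d}$,
\begin{equation*}
\sup_{\vecw\in\W}\;\twonms{\vecg(\vecw) - \gradF(\vecw)} \le \Delta',
\end{equation*}
where $\vecg(\vecw) = \trim_\beta\{\vecg^i(\vecw):i\in[m]\}$ and $\Delta'$ is as in~\eqref{eq:def-error-delta-trim}. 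This is precisely the uniform bound already proved in the course of establishing Theorem~\ref{thm:main-gd-sc-trim} (Appendix~\ref{prf:main-gd-sc-trim}), and I would simply invoke it: coordinate-wise, the $\beta$-trimmed mean of the $m$ received values at a fixed $\vecw$ differs from the population partial derivative $\partial_kF(\vecw)$ by at most (i) the statistical fluctuation of the $\le m$ honest $n$-sample empirical derivatives, controlled via the sub-exponential tail of Assumption~\ref{asm:sub-exponential} and a Bernstein-type bound, plus (ii) a bias of order $\beta v/\sqrt{n}$ coming from the at most $\alpha m \le \beta m$ corrupted coordinates that survive trimming; a union bound over $k\in[d]$ and a covering-net argument over $\W$ (using the $L_k$-Lipschitzness of $\partial_kf$ from Assumption~\ref{asm:smoothness}) upgrades this to the stated uniform bound, with the $\sqrt{\log(nm\widehat{L}D)}$ factor and the $d$ appearing from converting the coordinate-wise $\ell_\infty$ control into an $\ell_2$ bound.

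Condition on this event for the rest of the argument. Write the update as $\vecw^{t+1} = \Pi_\W(\vecw^t - \eta\vecg(\vecw^t))$ with $\eta = 1/L_F$, and set $\vecg(\vecw^t) = \gradF(\vecw^t) + \vece^t$ with $\twonms{\vece^t}\le \Delta'$. I first argue the projection is never active: as long as the iterate has not yet reached the target accuracy, $F$ decreases (shown next), so $F(\vecw^t)\le F(\vecw^0)$ throughout, and each step moves by $\twonms{\vecw^{t+1}-\vecw^t}\le \eta\twonms{\vecg(\vecw^t)} \le \frac{1}{L_F}(M+\Delta')$; counting the number of steps with $\twonms{\gradF(\vecw^t)}$ large and multiplying by the per-step displacement bounds the total trajectory length by $\frac{2}{\Delta'^2}(M+\Delta')(F(\vecw^0)-F(\vecw^*))$, which is exactly the radius in Assumption~\ref{asm:size-of-space-non-cvx} (with $\Delta$ replaced by $\Delta'$). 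Hence $\vecw^{t+1} = \vecw^t - \eta\vecg(\vecw^t)$ and all iterates stay in $\W$ where the uniform bound applies.

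Next apply the descent lemma for the $L_F$-smooth function $F$: with $\eta = 1/L_F$,
\begin{equation*}
F(\vecw^{t+1}) \le F(\vecw^t) + \innerps{\gradF(\vecw^t)}{\vecw^{t+1}-\vecw^t} + \frac{L_F}{2}\twonms{\vecw^{t+1}-\vecw^t}^2 = F(\vecw^t) - \frac{1}{2L_F}\twonms{\gradF(\vecw^t)}^2 + \frac{1}{2L_F}\twonms{\vece^t}^2,
\end{equation*}
so that $F(\vecw^{t+1}) \le F(\vecw^t) - \frac{1}{2L_F}\twonms{\gradF(\vecw^t)}^2 + \frac{\Delta'^2}{2L_F}$. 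Summing over $t=0,\dots,T-1$ and telescoping gives $\sum_{t=0}^{T-1}\twonms{\gradF(\vecw^t)}^2 \le 2L_F(F(\vecw^0)-F(\vecw^*)) + T\Delta'^2$, hence $\min_{t\le T}\twonms{\gradF(\vecw^t)}^2 \le \frac{2L_F(F(\vecw^0)-F(\vecw^*))}{T} + \Delta'^2$; plugging in $T = \frac{2L_F}{\Delta'^2}(F(\vecw^0)-F(\vecw^*))$ yields $\min_{t\le T}\twonms{\gradF(\vecw^t)}^2 \le 2\Delta'^2$, i.e. $\min_{t}\twonms{\gradF(\vecw^t)}\le\sqrt{2}\Delta'$, as claimed.

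The only genuinely hard part is the uniform concentration bound in the first step — in particular getting the dependence on $\beta$ (rather than $\sqrt{\beta}$) in the bias term and making the bound hold simultaneously over all $\vecw\in\W$ despite the adversary's full knowledge of the data and the algorithm — but this is inherited verbatim from the proof of Theorem~\ref{thm:main-gd-sc-trim}; the non-convex argument itself is the standard smooth inexact-GD analysis, with the single extra wrinkle of verifying (via Assumption~\ref{asm:size-of-space-non-cvx}) that projection never interferes.
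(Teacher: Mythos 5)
Your proposal is correct and follows essentially the same route as the paper: the paper proves Theorem~\ref{thm:main-gd-non-cvx-trim} by invoking the uniform trimmed-mean concentration bound (Theorem~\ref{thm:uniform-bound-trim}) and then repeating verbatim the inexact-gradient-descent argument of Theorem~\ref{thm:main-gd-non-cvx} — containment in $\W$ via Assumption~\ref{asm:size-of-space-non-cvx}, the descent inequality $F(\vecw^{t+1}) \le F(\vecw^t) - \frac{1}{2L_F}\twonms{\gradF(\vecw^t)}^2 + \frac{1}{2L_F}\Delta'^2$, and telescoping — exactly as you do. The only cosmetic difference is your "counting the steps with large gradient" remark for the containment step, which is unnecessary: the paper simply multiplies the per-step displacement $\frac{1}{L_F}(M+\Delta')$ by the total iteration count $T$, which already yields the radius in the assumption.
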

The proof of Theorem~\ref{thm:main-gd-non-cvx-trim} is similar to that of Theorem~\ref{thm:main-gd-non-cvx}; see Remark~\ref{rmk:non-strongly-non-cvx} in Appendix~\ref{prf:main-gd-sc-trim}. By choosing $\beta = c\alpha$ with $c\ge 1$, we again achieve the statistical rate $\widetilde{\bigo}(\frac{\alpha}{\sqrt{n}} + \frac{1}{\sqrt{nm}})$. 

\subsection{Comparisons}

We compare the performance guarantees of the above two robust distribute GD algorithms. The trimmed-mean-based algorithm achieves the statistical error rate $\widetilde{\bigo}(\frac{\alpha}{\sqrt{n}} + \frac{1}{\sqrt{nm}})$, which is order-optimal for strongly convex loss. In comparison, the rate of the median-based algorithm is $\widetilde{\bigo}(\frac{\alpha}{\sqrt{n}} + \frac{1}{\sqrt{nm}} + \frac{1}{n})$, which has an additional $ \frac{1}{n} $ term and is only optimal when $n \gtrsim m$. In particular, the trimmed-mean-based algorithm has better rates when each worker machine has small local sample size---the rates are meaningful even in the extreme case $ n=\bigo(1) $. On the other hand, the median-based algorithm requires milder tail/moment assumptions  on the loss derivatives (bounded skewness) than its trimmed-mean counterpart (sub-exponentiality). Finally, the trimmed-mean operation requires an additional parameter $\beta$, which can be any upper bound on the fraction $ \alpha $ of Byzantine machines in order to guarantee robustness. Using an overly large $\beta$ may lead to a looser bound and sub-optimal performance. In contrast, median-based GD does not require knowledge of $\alpha$. We summarize these observations in Table~\ref{tab:comparison}. We see that the two algorithms are complementary to each other, and our experiment results corroborate this point.  
 
\begin{table}[htbp]
\centering
\begin{tabular}{|c|c|c|}
\hline
        & median GD & trimmed mean GD \\  \hline
Statistical error rate & $\widetilde{\bigo}(\frac{\alpha}{\sqrt{n}} + \frac{1}{\sqrt{nm}} + \frac{1}{n})$ & $\widetilde{\bigo}(\frac{\alpha}{\sqrt{n}} + \frac{1}{\sqrt{nm}})$  \\ \hline
Distribution of $\partial_kf(\vecw;\vecz)$ & Bounded skewness & Sub-exponential \\ \hline
$ \alpha $ known? & No & Yes \\ \hline
\end{tabular}
\caption{Comparison between the two robust distributed gradient descent algorithms.}
\label{tab:comparison}
\end{table}

\section{Robust One-round Algorithm}\label{sec:com-efficient}

As mentioned, in our distributed computing framework, the communication cost is proportional to the number of parallel iterations. The above two GD algorithms both require a number iterations depending on the desired accuracy. Can we further reduce the communication cost while keeping the algorithm Byzantine-robust and statistically optimal?

A natural candidate is the so-called one-round algorithm. Previous work has considered a standard one-round scheme where each local machine computes the empirical risk minimizer (ERM) using its local data and the master machine receives all workers' ERMs and computes their \emph{average}~\citep{zhang2012communication}. Clearly, a single Byzantine machine can arbitrary skew the output of this algorithm. We instead consider a Byzantine-robust one-round algorithm. As detailed in Algorithm~\ref{alg:robust-one-round}, we employ the coordinate-wise median operation to aggregate all the ERMs.
\begin{algorithm}[h]
	\caption{Robust One-round Algorithm}\label{alg:robust-one-round}
	\begin{algorithmic}
		\PARFOR{$i\in[m]$}
		\STATE \textit{\underline{Worker machine $i$}}: compute:
		\STATE 
		$$
		\widehat{\vecw}^i \leftarrow \begin{cases}
		\arg\min_{\vecw\in\W} F_i(\vecw) & \text{normal worker machines} \\
		* & \text{Byzantine machines} \end{cases}
		$$
		\STATE send $\widehat{\vecw}^i$ to master machine.
		\ENDPARFOR
		\STATE \textit{\underline{Master machine}}: compute $\widehat{\vecw} \leftarrow \med\{\widehat{\vecw}^i : i\in[m]\}$.
	\end{algorithmic}
\end{algorithm}

Our main result is a characterization of the error rate of Algorithm~\ref{alg:robust-one-round} in the presence of Byzantine failures. We are only able to establish such a guarantee when the loss functions are quadratic and $\W=\R^d$. However, one can implement this algorithm in problems with other loss functions.

\begin{definition}[Quadratic loss function]\label{def:quadratic}
The loss function $f(\vecw;\vecz)$ is quadratic if it can be written as
$$
f(\vecw;\vecz) = \frac{1}{2}\vecw\tsp\matH\vecw + \vecp\tsp\vecw + c,
$$
where $ \vecz =(\matH, \vecp, c) $, $\matH$, and $\vecp$, and $c$ are drawn from the distributions $\D_H$, $\D_p$, and $\D_c$, respectively.
\end{definition}

Denote by $\matH_F$, $\vecp_F$, and $c_F$ the expectations of $\matH$, $\vecp$, and $c$, respectively. Thus the population risk function takes the form $F(\vecw) = \frac{1}{2}\vecw\tsp\matH_F\vecw + \vecp_F\tsp \vecw + c_F$.

We need a technical assumption which guarantees that each normal worker machine has unique ERM.
\begin{asm}[Strong convexity of $F_i$]\label{asm:unique}
With probability $1$, the empirical risk minimization function $F_i(\cdot)$ on each normal machine  is strongly convex.
\end{asm}
Note that this assumption is imposed on $ F_i (\vecw)$, rather than on the individual loss $ f(\vecw; \vecz) $ associated with a single data point.
This assumption is satisfied, for example, when all $f(\cdot;\vecz)$'s are strongly convex, or in the linear regression problems with the features $ \vecx $ drawn from some continuous distribution (e.g. isotropic Gaussian) and $n\ge d$.
We have the following guarantee for the robust one-round algorithm.
\begin{theorem}\label{thm:one-round}
Suppose that $\forall~\vecz\in\Z$, the loss function $f(\cdot; \vecz)$ is convex and quadratic, $F(\cdot)$ is $\lambda_F$-strongly convex, and Assumption~\ref{asm:unique} holds. Assume that $\alpha$ satisfies
$$
\alpha + \sqrt{\frac{\log(nmd)}{2m(1-\alpha)}} + \frac{\widetilde{C}}{\sqrt{n}} \le \frac{1}{2} - \epsilon
$$
for some $\epsilon>0$, where $\widetilde{C}$ is a quantity that depends on $\D_H$,  $\D_p$, $\lambda_F$ and is monotonically decreasing in $n$. Then, with probability at least $1-\frac{4}{nm}$, the output $ \widehat{\vecw} $ of the robust one-round algorithm satisfies
$$
\twonms{\widehat{\vecw} - \vecw^*} \le  \frac{C_\epsilon}{\sqrt{n}}\widetilde{\sigma}\big( \alpha + \sqrt{\frac{\log(nmd)}{2m(1-\alpha)}} + \frac{\widetilde{C}}{\sqrt{n}}\big),
$$
where $C_\epsilon$ is defined as in~\eqref{eq:def-c-epsilon} and
$$
\widetilde{\sigma}^2 :=  \EE \big[ \twonms{\matH_F^{-1}\big(  (\matH - \matH_F ) \matH_F^{-1}\vecp_F-(\vecp - \vecp_F)  \big)}^2\big],
$$
with $\matH$ and $\vecp$ drawn from $\D_H$ and $\D_p$, respectively.
\end{theorem}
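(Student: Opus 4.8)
The plan is to exploit the closed form of the ERM for quadratic losses and then run essentially the same coordinate-wise median analysis that underlies Theorem~\ref{thm:main-gd-sc}. For a normal worker $i$, write $F_i(\vecw) = \frac12\vecw\tsp\widehat{\matH}_i\vecw + \widehat{\vecp}_i\tsp\vecw + \widehat{c}_i$ with $\widehat{\matH}_i = \frac1n\sum_j \matH^{i,j}$ and $\widehat{\vecp}_i = \frac1n\sum_j\vecp^{i,j}$; by Assumption~\ref{asm:unique} this is strongly convex almost surely, so $\widehat{\vecw}^i = -\widehat{\matH}_i^{-1}\widehat{\vecp}_i$, while $\vecw^* = -\matH_F^{-1}\vecp_F$. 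First I would linearize the inverse using the exact identity $\widehat{\matH}_i^{-1} = \matH_F^{-1} - \matH_F^{-1}(\widehat{\matH}_i - \matH_F)\matH_F^{-1} + \matH_F^{-1}(\widehat{\matH}_i-\matH_F)\matH_F^{-1}(\widehat{\matH}_i-\matH_F)\widehat{\matH}_i^{-1}$, which yields a decomposition $\widehat{\vecw}^i - \vecw^* = \vecy^i + \vecr^i$, where $\vecy^i = \frac1n\sum_{j=1}^n\mathbf{u}^{i,j}$ with $\mathbf{u}^{i,j} := \matH_F^{-1}\big((\matH^{i,j}-\matH_F)\matH_F^{-1}\vecp_F - (\vecp^{i,j}-\vecp_F)\big)$ an i.i.d., mean-zero sequence satisfying $\EE\twonms{\mathbf{u}^{i,j}}^2 = \widetilde{\sigma}^2$ (matching the definition in the statement), and $\vecr^i$ is a remainder controlled by $\twonms{\widehat{\matH}_i - \matH_F}\big(\twonms{\widehat{\vecp}_i - \vecp_F} + \twonms{\widehat{\matH}_i - \matH_F}\big)$.

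Next I would control the two pieces. Using concentration of the empirical averages $\widehat{\matH}_i,\widehat{\vecp}_i$ around $\matH_F,\vecp_F$ — which also gives $\lambda_{\min}(\widehat{\matH}_i) \ge \lambda_F/2$, hence $\twonms{\widehat{\matH}_i^{-1}} \le 2/\lambda_F$ — together with a bootstrap (a crude $\bigo(1/\sqrt n)$ bound on $\twonms{\widehat{\vecw}^i - \vecw^*}$ fed back into the quadratic term), one obtains $\twonms{\vecr^i} = \widetilde{\bigo}(1/n)$ with high probability for each normal $i$; the quantitative constants here are exactly what is packaged into $\widetilde{C}$, explaining its dependence on $\D_H,\D_p,\lambda_F$ and its monotone decrease in $n$. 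For the linear part, for each coordinate $k$ the scalar $y^i_k = \frac1n\sum_j u^{i,j}_k$ is an average of $n$ i.i.d. mean-zero variables, so a Berry--Esseen bound gives $|\PP(y^i_k \le t) - \Phi(t\sqrt n/\sigma_k)| \le C_0\gamma_k/\sqrt n$, where $\sigma_k^2 = \var(u^{i,j}_k)$ and $\sum_k \sigma_k^2 = \widetilde{\sigma}^2$.

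Then comes the median/counting step, parallel to the proof of Theorem~\ref{thm:main-gd-sc}. Since the normal machines hold i.i.d. data, the ERMs $\{\widehat{\vecw}^i : i\notin\B\}$ are i.i.d., so for fixed $k$ and threshold $t$ the number of normal machines with $\widehat{w}^i_k > w^*_k + t$ is $\mathrm{Binomial}((1-\alpha)m, q)$ with $q = \PP(y^i_k + r^i_k > t) \le \PP(\twonms{\vecr^i} > \rho) + \Phi(-(t-\rho)\sqrt n/\sigma_k) + C_0\gamma_k/\sqrt n$; the $\alpha m$ Byzantine machines add at most $\alpha m$ to this count, so $\med\{\widehat{w}^i_k\} > w^*_k + t$ forces the binomial to exceed $(1/2 - \alpha)m$. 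A Hoeffding bound on the binomial, combined with the hypothesis on $\alpha$, makes this probability at most $\frac{2}{nmd}$ as soon as $\Phi\big((t-\rho)\sqrt n/\sigma_k\big) - 1/2$ exceeds a quantity $A_k$ of the form $\alpha + \sqrt{\tfrac{\log(nmd)}{2m(1-\alpha)}} + C_0\gamma_k/\sqrt n + \PP(\twonms{\vecr^i}>\rho)$ (up to the precise form of the Hoeffding estimate), and the hypothesis guarantees $A_k \le 1/2-\epsilon$. Since $\Phi(x) - 1/2 \ge x\,\phi(x) \ge x\,\phi(\Phi^{-1}(1-\epsilon))$ for $0 \le x \le \Phi^{-1}(1-\epsilon)$, the critical $t$ is at most $\rho + \tfrac{\sigma_k}{\sqrt n} C_\epsilon A_k$ with $C_\epsilon = 1/\phi(\Phi^{-1}(1-\epsilon)) = \sqrt{2\pi}\exp\big(\tfrac12(\Phi^{-1}(1-\epsilon))^2\big)$. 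Taking a union bound over the two tails and $d$ coordinates (total failure probability $\frac{4}{nm}$), then squaring and summing over $k$, using $\sum_k\sigma_k^2 = \widetilde{\sigma}^2$ and folding $\rho$, the $\gamma_k$ terms, and the remainder probability into $\widetilde{C}/\sqrt n$, yields $\twonms{\widehat{\vecw}-\vecw^*} \le \tfrac{C_\epsilon}{\sqrt n}\widetilde{\sigma}\big(\alpha + \sqrt{\tfrac{\log(nmd)}{2m(1-\alpha)}} + \tfrac{\widetilde C}{\sqrt n}\big)$.

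I expect the main obstacle to be the remainder analysis: proving the high-probability $\widetilde{\bigo}(1/n)$ bound on $\twonms{\vecr^i}$ simultaneously for all $(1-\alpha)m$ normal machines. This forces one to commit to explicit moment/tail assumptions on $\D_H$ and $\D_p$ (only implicitly referenced through $\widetilde C$) in order to get matrix/vector concentration and the lower bound $\lambda_{\min}(\widehat{\matH}_i)\ge \lambda_F/2$, and it costs the $\log(nm)$ factors from the union bound. A secondary delicate point is checking that the Berry--Esseen correction, the remainder tail, and the quadratic remainder can genuinely all be absorbed into a single $\widetilde C/\sqrt n$ term that decreases in $n$ — i.e., that nothing of order $1/\sqrt n$ is left outside the leading $C_\epsilon\widetilde{\sigma}\,(\alpha + \cdots)/\sqrt n$ term — which is what makes the final rate $\widetilde{\bigo}\big(\frac{\alpha}{\sqrt n} + \frac{1}{\sqrt{nm}} + \frac1n\big)$.
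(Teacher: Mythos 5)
Your architecture matches the paper's at every stage except one, and that one stage is where the real work lives. Like the paper, you use the closed form $\widehat{\vecw}^i=-\matH_i^{-1}\vecp_i$, linearize the inverse around $\matH_F$ to identify the linear functional $\ell_k(\matU,\vecv)=\vece_k\tsp\matH_F^{-1}\matU\matH_F^{-1}\vecp_F-\vece_k\tsp\matH_F^{-1}\vecv$ (this is exactly the paper's Lemma~\ref{lem:condition-approx}), run a coordinate-wise Berry--Esseen-plus-counting argument with the same mean-value-theorem derivation of $C_\epsilon$, and take the same union bound with $t=\tfrac12\log(nmd)$ to get failure probability $4/(nm)$. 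Where you diverge is in how the quadratic remainder of the linearization is handled. The paper does \emph{not} split $\widehat{\vecw}^i-\vecw^*$ into a linear part plus a pointwise-controlled remainder; instead it invokes a Berry--Esseen theorem for nonlinear functions of sample means (Theorem 2.11 of Pinelis, 2016), which folds the linearization error directly into the CDF-approximation constant $C_k/\sqrt n$ using only the second and third moments $\nu_2,\nu_3$ of $\norms{(\matH-\matH_F,\vecp-\vecp_F)}$. The large-deviation event $\norms{(\matU_i,\vecv_i)}>\lambda_F/2$ only needs to be controlled \emph{in probability at the CDF level}, where Chebyshev already gives $O(1/n)\ll 1/\sqrt n$; no tail assumptions beyond finite third moments are needed, and the resulting $\widetilde C=\max_k C_k$ is genuinely decreasing in $n$.

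Your route---a pointwise high-probability bound $\norms{\vecr^i}\le\rho$ fed into the binomial success probability---is workable but strictly costlier, and you have correctly identified it as the main obstacle. The quantitative issue is sharper than you state: under only the moment conditions the paper implicitly uses, Markov/Chebyshev forces $\rho=\Omega(1/\sqrt n)$ if you also want $\PP(\norms{\vecr^i}>\rho)=O(1/\sqrt n)$, and an additive $\rho$ of order $1/\sqrt n$ sitting outside the factor $\tfrac{C_\epsilon\sigma_k}{\sqrt n}(\alpha+\cdots)$ destroys the rate entirely (it reduces to the naive $O(1/\sqrt n)$ bound the theorem is meant to beat). So your proof genuinely requires sub-Gaussian or sub-exponential tails on the entries of $\matH$ and $\vecp$ to push $\rho$ down to $\widetilde{\bigo}(1/n)$, and even then the $\log n$ from the tail bound lodges inside $\widetilde C$, which then fails the ``monotonically decreasing in $n$'' clause of the statement. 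None of this changes the order of the final rate under the stronger assumptions, but it does mean you would be proving a somewhat weaker theorem than the one stated; the Pinelis-type result is precisely the device that lets the paper avoid both costs.
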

We prove Theorem~\ref{thm:one-round} and provide an explicit expression of $\widetilde{C}$ in Appendix~\ref{prf:one-round}. In terms of the dependence on $\alpha$, $n$, and $m$, the robust one-round algorithm achieves the same error rate as the robust gradient descent algorithm based on coordinate-wise median, i.e., $\widetilde{\bigo}(\frac{\alpha}{\sqrt{n}} + \frac{1}{\sqrt{nm}} + \frac{1}{n})$, for quadratic problems. Again, this rate is optimal when $n\gtrsim m$. Therefore, at least for quadratic loss functions, the robust one-round algorithm has similar theoretical performance as the robust gradient descent algorithm with significantly less communication cost. Our experiments show that the one-round algorithm has good empirical performance for other losses as well. 

\section{Lower Bound}\label{sec:lower-bound}

In this section, we provide a lower bound on the error rate for strongly convex losses, which implies that the $\frac{\alpha}{\sqrt{n}} + \frac{1}{\sqrt{nm}}$ term is unimprovable. This lower bound is derived using a mean estimation problem, and is an extension of the lower bounds in the robust mean estimation literature such as~\citet{chen2015robust,lai2016agnostic}.

We consider the problem of estimating the mean $\vecmu$ of some random variable $\vecz\sim\Z$, which is equivalent to solving the following minimization problem:
\begin{equation}\label{eq:mean-estimation}
\vecmu = \arg\min_{\vecw\in\W} \EE_{\vecz\sim\Z}[\twonms{\vecw - \vecz}^2],
\end{equation}
Note that this is a special case of the general learning problem~\eqref{eq:min-loss}. We consider the same distributed setting as in Section~\ref{sec:gradient-descent}, with a minor technical difference regarding the Byzantine machines. We assume that each of the $ m $ worker machines is Byzantine with probability $\alpha$, independently of each other. The parameter $\alpha$ is therefore the \emph{expected} fraction of Byzantine machines. This setting makes the analysis slightly easier, and we believe the result can be extended to the original setting. 

In this setting we have the following lower bound.
\begin{observation}\label{obs:lower-bound}
Consider the distributed mean estimation problem in~\eqref{eq:mean-estimation} with Byzantine failure probability $\alpha$, and suppose that $\Z$ is Gaussian distribution with mean $\vecmu$ and covariance matrix $\sigma^2\mat{I}$ $(\sigma = \bigo(1))$. Then, any algorithm that computes an estimation $\widehat{\vecmu}$ of the mean from the data has a constant probability of error $\twonms{\widehat{\vecmu} - \vecmu} = \Omega (\frac{\alpha}{\sqrt{n}} + \sqrt{\frac{d}{nm}})$.
\end{observation}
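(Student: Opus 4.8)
The plan is to establish the two terms $\frac{\alpha}{\sqrt{n}}$ and $\sqrt{\frac{d}{nm}}$ separately, by reducing to two well-understood estimation problems and then combining via the elementary fact $\max\{a,b\} \ge \frac{1}{2}(a+b)$. The second term $\sqrt{\frac{d}{nm}}$ is just the minimax rate for estimating the mean of a $d$-dimensional Gaussian with $\sigma^2 \mat{I}$ covariance from $nm$ i.i.d. samples (with no Byzantine machines at all): even in the absence of corruption, no estimator can beat $\Omega(\sigma\sqrt{d/(nm)})$ with constant probability. This is classical (e.g. a Fano or Le Cam two-point argument on a $2^{\Omega(d)}$-packing of an $\ell_2$-ball of radius $\sqrt{d/(nm)}$), and since our Byzantine setting only adds adversarial power, the same lower bound holds a fortiori. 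So the real content is the $\frac{\alpha}{\sqrt{n}}$ term.

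For the $\frac{\alpha}{\sqrt{n}}$ term, I would use a Le Cam-style two-point argument built on the indistinguishability of Byzantine corruption. Fix two candidate means $\vecmu_0$ and $\vecmu_1 = \vecmu_0 + \delta \vece_1$ with $\delta = c\,\sigma\,\alpha/\sqrt{n}$ for a small constant $c$. Under mean $\vecmu_0$, each normal machine holds $n$ samples from $\N(\vecmu_0, \sigma^2\mat{I})$, so its empirical mean is $\N(\vecmu_0, \frac{\sigma^2}{n}\mat{I})$; similarly under $\vecmu_1$. A Byzantine machine (each machine is Byzantine independently with probability $\alpha$) can, with knowledge of everything, simulate the distribution that a \emph{normal} machine would have under the \emph{other} hypothesis. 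The key computation: since the per-machine summary statistic under the two hypotheses is a Gaussian shifted by $\delta$ with standard deviation $\sigma/\sqrt{n}$, the total-variation distance between a single normal machine's data under $H_0$ versus $H_1$ is $\Theta(\delta\sqrt{n}/\sigma) = \Theta(\alpha)$ for small $\delta$. A standard corrupted-model coupling argument (as in the robust-mean-estimation lower bounds of \citet{chen2015robust, lai2016agnostic}) then says that an $\alpha$ fraction of adversarial machines can "absorb" exactly this $\Theta(\alpha)$ discrepancy: the adversary makes the joint distribution of all $m$ machines' messages under $H_0$ and under $H_1$ have total variation bounded away from $1$. Concretely, couple the two worlds so that a machine's message differs between worlds only on the event (probability $O(\alpha)$) that it is Byzantine in at least one world, and on that event the Byzantine machine overwrites its message with the value drawn from the other world's normal distribution. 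Hence no test can distinguish $H_0$ from $H_1$ with probability $> 1 - \Omega(1)$, which forces any estimator to incur error $\ge \delta/2 = \Omega(\sigma\alpha/\sqrt{n})$ with constant probability.

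The main obstacle — and the step requiring the most care — is making the coupling/indistinguishability argument rigorous when each machine is \emph{independently} Byzantine with probability $\alpha$, rather than having a fixed adversarial set of size exactly $\alpha m$. One must verify that with the randomized corruption pattern, the adversary can still simultaneously (i) match the marginal law of each corrupted machine to the opposite hypothesis and (ii) do so using only corrupted machines, so that the coupling succeeds except on an event of probability $O(\alpha)$ aggregated appropriately across machines; a union bound over $m$ machines would be too lossy, so instead one controls the total variation of the full product measure by summing per-machine TV contributions, each of which is $O(\alpha/m)$ in the relevant regime, or more cleanly by a direct likelihood-ratio / second-moment computation on the mixture. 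Once the two pieces are in hand, I would conclude: the error is $\Omega(\sigma\alpha/\sqrt{n})$ from the first reduction and $\Omega(\sigma\sqrt{d/(nm)})$ from the second, hence $\Omega(\frac{\alpha}{\sqrt{n}} + \sqrt{\frac{d}{nm}})$ since $\sigma = \bigo(1)$, as claimed.
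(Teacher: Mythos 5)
Your overall architecture matches the paper's: a two-point indistinguishability argument at separation $\delta=\Theta(\sigma\alpha/\sqrt{n})$ for the first term, the classical minimax rate $\Omega(\sigma\sqrt{d/(nm)})$ for the second, and a trivial combination. The second term and the choice of scale for $\delta$ are fine. The gap is in how you make the two hypotheses indistinguishable across all $m$ machines. The coupling you describe --- each machine's message differs between the two worlds on an event of probability $O(\alpha)$, on which the Byzantine machine overwrites its message --- fails for the joint distribution: with $m$ independent machines the probability that \emph{no} machine's message differs is roughly $(1-c\alpha)^m$, so the total variation between the two joint laws tends to $1$ whenever $\alpha m\gg 1$, and the argument proves nothing. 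You sense this ("a union bound over $m$ machines would be too lossy"), but your proposed repair is based on a false premise: the per-machine TV contribution is $\Theta(\alpha)$, not $O(\alpha/m)$ --- each machine is Byzantine with probability $\alpha$ and the TV between $P_1^n$ and $P_2^n$ at separation $\delta=c\sigma\alpha/\sqrt{n}$ is $\Theta(\alpha)$ --- so "summing per-machine TV contributions" gives $\Theta(\alpha m)$, which is vacuous. The second suggested repair (a likelihood-ratio / second-moment computation) is left unspecified and is not obviously salvageable in this form.

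The missing idea, which is the crux of the paper's proof, is that the adversary should be chosen so that the per-machine marginal laws under the two hypotheses are \emph{exactly equal}, not merely close. Concretely, pick $\vecmu_1,\vecmu_2$ so that $\mathrm{TV}(P_1^n,P_2^n)=\frac{\alpha}{1-\alpha}$ (which forces $\twonms{\vecmu_1-\vecmu_2}\gtrsim \sigma\alpha/\sqrt{n}$), and let the Byzantine data distributions be the normalized positive and negative parts of the density difference, $Q_1\propto(\phi_2-\phi_1)\indi_{\phi_2\ge\phi_1}$ and $Q_2\propto(\phi_1-\phi_2)\indi_{\phi_1\ge\phi_2}$; these are valid probability densities precisely because of the TV normalization, and they satisfy $(1-\alpha)P_1^n+\alpha Q_1=(1-\alpha)P_2^n+\alpha Q_2$. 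Since each machine is independently Byzantine with probability $\alpha$, each machine's data is an independent draw from this common mixture, so the joint distributions over all $m$ machines are \emph{identical} --- there is no TV accumulation to control at all. This is exactly where the independent-corruption model is used, and it is the step your proposal would need to supply before the first term is established.
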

We prove Observation~\ref{obs:lower-bound} in Appendix~\ref{prf:lower-bound}. According to this observation, we see that the $\frac{\alpha}{\sqrt{n}} +\frac{1}{\sqrt{nm}}$ dependence cannot be avoided, which in turn implies the order-optimality of the results in Theorem~\ref{thm:main-gd-sc} (when $n\gtrsim m$) and Theorem~\ref{thm:main-gd-sc-trim}.

\section{Experiments}\label{sec:experiments}

We conduct experiments to show the effectiveness of the median and trimmed mean operations. Our experiments are implemented with Tensorflow~\citep{abadi2016tensorflow} on Microsoft Azure system. We use the MNIST~\citep{lecun1998gradient} dataset and randomly partition the 60,000 training data into $m$ subsamples with equal sizes. We use these subsamples to represent the data on $m$ machines.

In the first experiment, we compare the performance of distributed
gradient descent algorithms in the following four settings: 1) $\alpha = 0$ (no Byzantine machines), using vanilla distributed gradient descent (aggregating the gradients by taking the mean), 2) $\alpha > 0$, using vanilla distributed gradient descent, 3) $\alpha > 0$, using median-based algorithm, and 4) $\alpha > 0$, using trimmed-mean-based algorithm. We generate the Byzantine machines in the following way: we replace every training label $y$ on these machines with $9-y$, e.g., $0$ is replaced with $9$, $1$ is replaced with $8$, etc, and the Byzantine machines simply compute gradients based on these data. We also note that when generating the Byzantine machines, we do not simply add extreme values in the features or gradients; instead, the Byzantine machines send messages to the master machine with moderate values.
 
We train a multi-class logistic regression model and a convolutional
neural network model using distributed gradient descent, and for each
model, we compare the test accuracies in the aforementioned four settings. For the convolutional neural network model, we use the stochastic version of the distributed gradient descent algorithm; more specifically, in every iteration, each worker machine computes the gradient using $10\%$ of its local data. We periodically check the test errors, and the convergence performances are shown in Figure~\ref{fig:convergence}. The final test accuracies are presented in Tables~\ref{tab:logistic} and~\ref{tab:cnn}.

\begin{table}[htbp]
\centering
\begin{tabular}{|c|c|c|c|c|}
\hline
$\alpha$ & 0 & \multicolumn{3}{c|}{0.05}  \\ \hline
Algorithm & mean & mean & median & trimmed mean \\  \hline
Test accuracy (\%) & 88.0 & 76.8  &  87.2  & 86.9 \\ \hline
\end{tabular}
\caption{Test accuracy on the logistic regression model using gradient descent. We set $m=40$, and for trimmed mean, we choose $\beta = 0.05$.}
\label{tab:logistic}
\end{table}

\begin{table}[htbp]
\centering
\begin{tabular}{|c|c|c|c|c|}
\hline
$\alpha$ & 0 & \multicolumn{3}{c|}{0.1}  \\ \hline
Algorithm & mean & mean & median & trimmed mean \\  \hline
Test accuracy (\%) & 94.3 & 77.3  &  87.4  & 90.7 \\ \hline
\end{tabular}
\caption{Test accuracy on the convolutional neural network model using gradient descent. We set $m=10$, and for trimmed mean, we choose $\beta = 0.1$.}
\label{tab:cnn}
\end{table}

\begin{figure}[h]
\centering 
\subfigure{\label{fig:softmax}\includegraphics[width=0.7\linewidth]{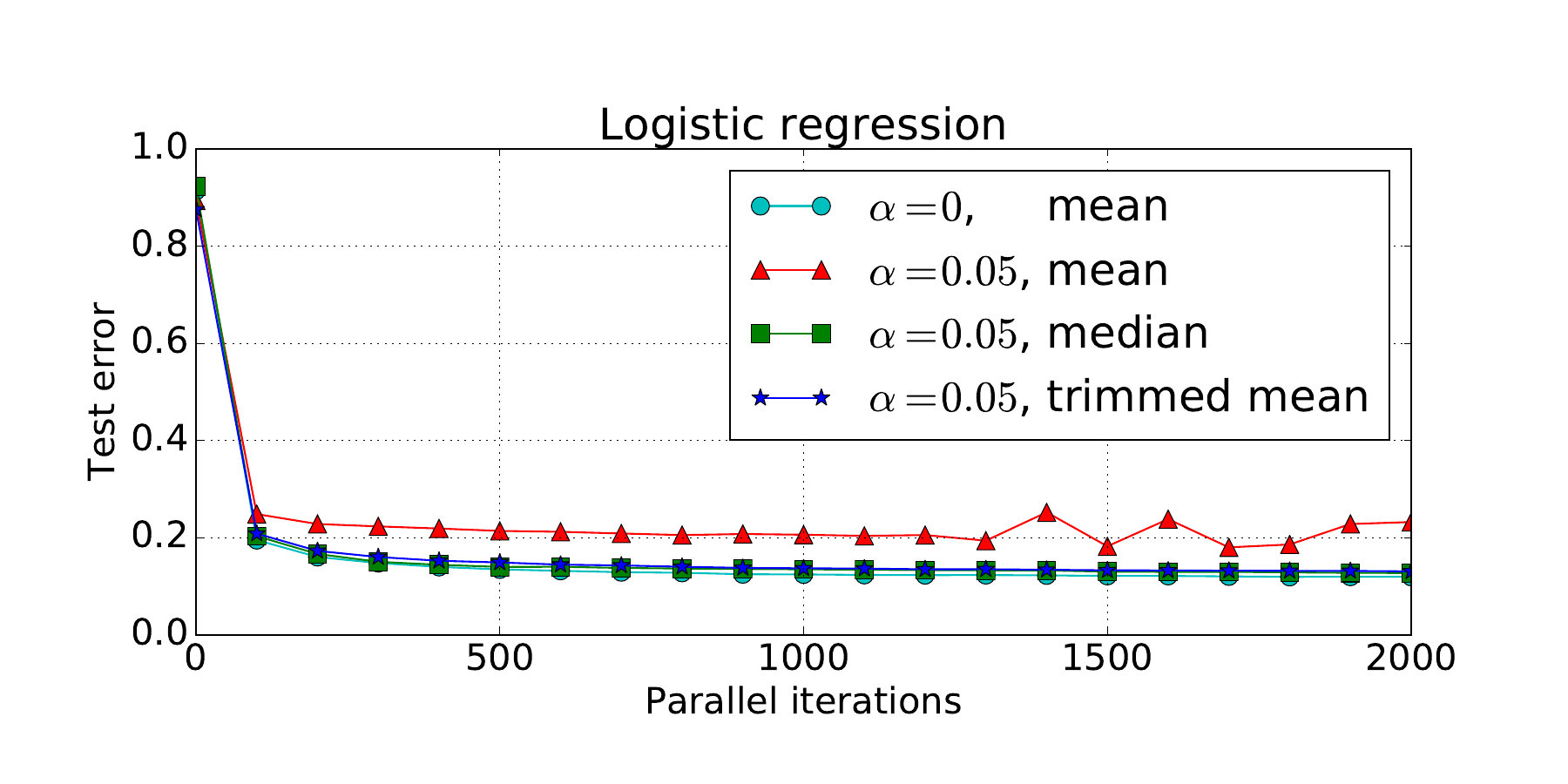}}\\
\subfigure{\label{fig:cnn}\includegraphics[width=0.7\linewidth]{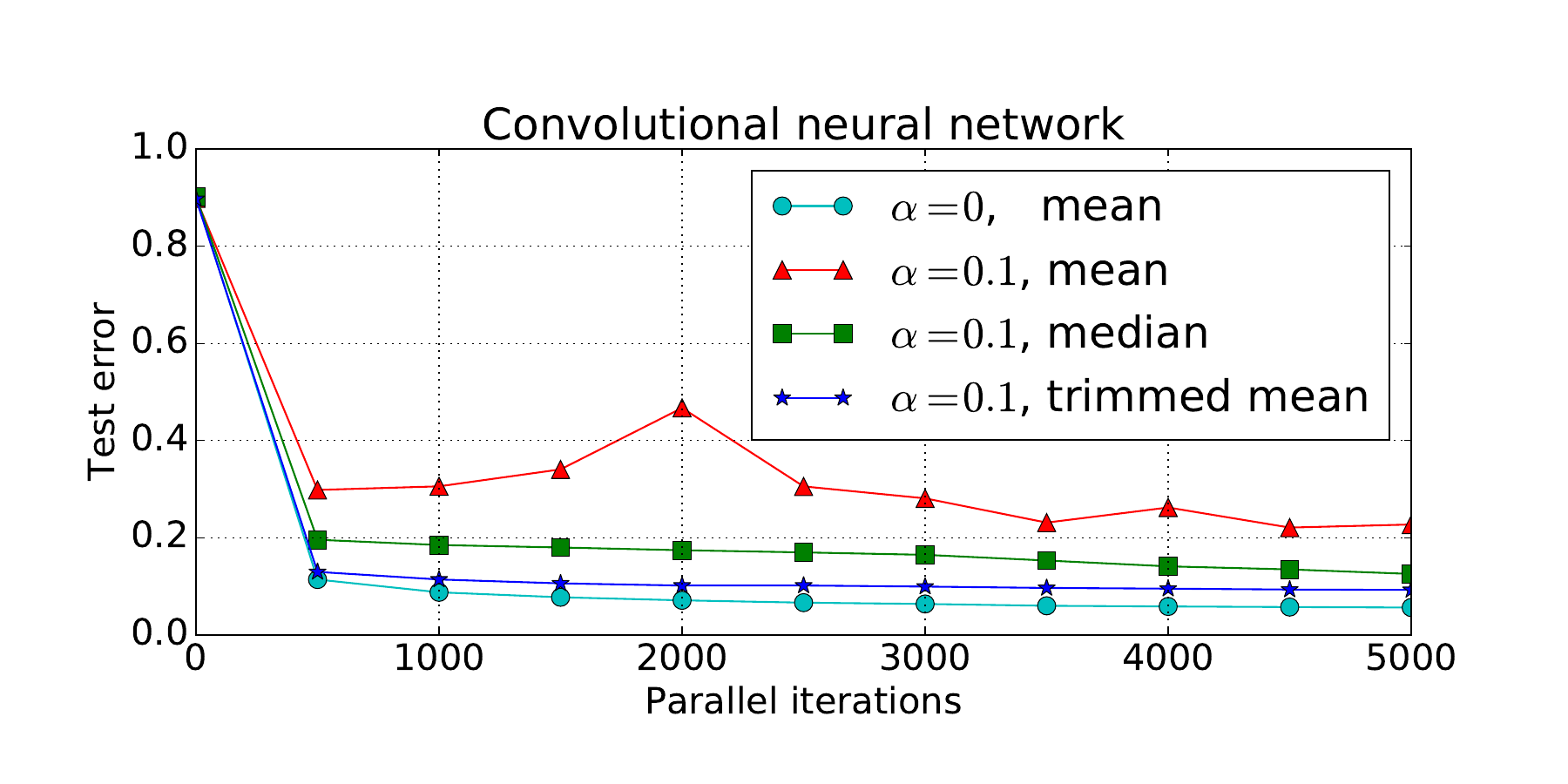}}
\caption{Test error vs the number of parallel iterations.}
\label{fig:convergence}
\end{figure}

As we can see, in the adversarial settings, the vanilla distributed
gradient descent algorithm suffers from severe performance loss, and
using the median and trimmed mean operations, we observe significant
improvement in test accuracy. This shows these two operations can indeed
defend against Byzantine failures.

In the second experiment, we compare the performance of distributed
one-round algorithms in the following three settings: 1) $\alpha = 0$, mean aggregation, 2) $\alpha > 0$, mean aggregation, and 3) $\alpha > 0$, median aggregation. In this experiment, the training labels on the Byzantine machines are i.i.d. uniformly sampled from $\{0,\ldots,9\}$, and these machines train models using the faulty data. We choose the multi-class logistic regression model, and the test accuracies are presented in Table~\ref{tab:one_round}.

\begin{table}[htbp]
\centering
\begin{tabular}{|c|c|c|c|}
\hline
$\alpha$ & 0 & \multicolumn{2}{c|}{0.1}  \\ \hline
Algorithm & mean & mean & median \\  \hline
Test accuracy (\%) & 91.8 & 83.7  &  89.0  \\ \hline
\end{tabular}
\caption{Test accuracy on the logistic regression model using one-round algorithm. We set $m=10$.}
\label{tab:one_round}
\end{table}

As we can see, for the one-round algorithm, although the theoretical guarantee is only proved for quadratic loss, in practice, the median-based one-round algorithm still improves the test accuracy in problems with other loss functions, such as the logistic loss here.

\section{Conclusions}\label{sec:conclusion}
In this paper, we study Byzantine-robust distributed statistical
learning algorithms with a focus on statistical optimality. We analyze two robust distributed gradient descent algorithms --- one is based on coordinate-wise median and the other is based on coordinate-wise trimmed mean. We show that the trimmed-mean-based algorithm can achieve order-optimal $\widetilde{\bigo}(\frac{\alpha}{\sqrt{n}} + \frac{1}{\sqrt{nm}})$ error rate, whereas the median-based algorithm can achieve $\widetilde{\bigo}(\frac{\alpha}{\sqrt{n}} + \frac{1}{\sqrt{nm}} + \frac{1}{n})$ under weaker assumptions. We further study learning algorithms that have better communication efficiency. We propose a simple one-round algorithm that aggregates local solutions using coordinate-wise median. We show that for strongly convex quadratic problems, this algorithm can achieve $\widetilde{\bigo}(\frac{\alpha}{\sqrt{n}} + \frac{1}{\sqrt{nm}} + \frac{1}{n})$ error rate, similar to the median-based gradient descent algorithm. Our experiments validates the effectiveness of the median and trimmed mean operations in the adversarial setting.

\subsection*{Acknowledgements}
D. Yin is partially supported by Berkeley DeepDrive Industry Consortium. Y. Chen is partially supported by NSF CRII award 1657420 and grant 1704828. K. Ramchandran is partially supported by NSF CIF award 1703678 and Gift award from Huawei. P. Bartlett is partially supported by NSF grant IIS-1619362. Cloud computing resources are provided by a Microsoft Azure for Research award.

\bibliographystyle{abbrvnat}
\bibliography{ref}

\appendix
\section*{Appendix}
\section{Variance, Skewness, and Sub-exponential Property}

\subsection{Proof of Proposition~\ref{thm:eg-var-skew-rad}}\label{prf:eg-var-skew-rad}
We use the simplified notation $f(\vecw) := f(\vecw; \vecx, y)$. One can directly compute the gradients:
$$
\gradf(\vecw) = \vecx(\vecx\tsp\vecw - y) = \vecx\vecx\tsp(\vecw - \vecw^*) - \xi\vecx,
$$
and thus
$$
\gradF(\vecw) = \EXPS{\gradf(\vecw)} = \vecw - \vecw^*.
$$
Define $\Delta(\vecw) := \gradf(\vecw) - \gradF(\vecw)$ with its $k$-th element being $\Delta_k(\vecw)$. We now compute the variance and absolute skewness of $\Delta_k(\vecw)$.

We can see that
\begin{equation}\label{eq:def-delta-k-rad}
\Delta_k(\vecw) = \sum_{\substack{1\le i \le d \\ i\neq k}} x_kx_i(w_i - w_i^*) + (x_k^2-1)(w_k - w_k^*) - \xi x_k.
\end{equation}
Thus,
\begin{equation}\label{eq:var-delta-k-2-rad}
\EXPS{\Delta_k^2(\vecw)} = \EXPS{\sum_{\substack{1\le i \le d \\ i\neq k}} x_k^2x_i^2(w_i - w_i^*)^2  + \xi^2 x_k^2} =\twonms{\vecw - \vecw^*}^2  - (w_k - w_k^*)^2 + \sigma^2, 
\end{equation}
which yields
$$
\var(\gradf(\vecw)) = \EXPS{\twonms{\gradf(\vecw) - \gradF(\vecw)}^2} = (d-1) \twonms{\vecw - \vecw^*}^2 + d\sigma^2.
$$
Then we proceed to bound $\gamma(\Delta_k(\vecw))$. By Jensen's inequality, we know that
\begin{equation}\label{eq:bound-skewness-jensen-rad}
\gamma(\Delta_k(\vecw)) = \frac{\EXPS{|\Delta_k(\vecw)|^3}}{\var(\Delta_k(\vecw))^{3/2}} \le \sqrt{\frac{\EXPS{\Delta_k^6(\vecw)}}{\var(\Delta_k(\vecw))^3}}
\end{equation}
We first find a lower bound for $\var(\Delta_k(\vecw))^3$. According to~\eqref{eq:var-delta-k-2-rad}, we know that
$$
\var(\Delta_k(\vecw))^3 = \big( \sum_{\substack{1\le i \le d \\ i\neq k}} (w_i - w_i^*)^2  + \sigma^2 \big)^3 \ge \big( \sum_{\substack{1\le i \le d \\ i\neq k}} (w_i - w_i^*)^2 \big)^3 + \sigma^6.
$$
Define the following three quantities.
\begin{align}
W_1 &= \sum_{\substack{1\le i \le d \\ i\neq k}} (w_i - w_i^*)^6 \label{eq:def-w1-rad} \\
W_2 &= \sum_{\substack{1\le i, j \le d \\ i,j\neq k \\ i\neq j}} (w_i - w_i^*)^4(w_j - w_j^*)^2 \label{eq:def-w2-rad} \\
W_3 &= \sum_{\substack{1\le i, j, \ell \le d \\ i,j,\ell\neq k \\ i\neq j, i\neq \ell, j\neq \ell}} (w_i - w_i^*)^2(w_j - w_j^*)^2(w_\ell - w_\ell^*)^2 \label{eq:def-w3-rad}
\end{align}
By simple algebra, one can check that
\begin{equation}\label{eq:expansion}
\big( \sum_{\substack{1\le i \le d \\ i\neq k}} (w_i - w_i^*)^2 \big)^3 = W_1 + 3W_2 + W_3,
\end{equation}
and thus
\begin{equation}\label{eq:var3-lb-rad}
\var(\Delta_k(\vecw))^3 \ge W_1 + 3W_2 + W_3 + \sigma^6.
\end{equation}
Then, we find an upper bound on $\EXPS{\Delta_k^6(\vecw)}$. According to~\eqref{eq:def-delta-k-rad}, and H\"{o}lder's inequality, we know that
\begin{align}
\EXPS{\Delta_k^6(\vecw)} &= \EXPS{(\sum_{\substack{1\le i \le d \\ i\neq k}} x_kx_i(w_i - w_i^*)  - \xi x_k)^6} \le 32 \big( \EXPS{(\sum_{\substack{1\le i \le d \\ i\neq k}} x_kx_i(w_i - w_i^*))^6} +  \EXPS{\xi^6 x_k^6} \big) \nonumber \\
&=32 \big( \EXPS{(\sum_{\substack{1\le i \le d \\ i\neq k}}x_i(w_i - w_i^*))^6} + 15\sigma^6 \big) \label{eq:bound-var-6-rad},
\end{align}
where in the last inequality we use the moments of Gaussian random variables. Then, we compute the first term in~\eqref{eq:bound-var-6-rad}. By algebra, one can obtain
\begin{align}
\EXPS{(\sum_{\substack{1\le i \le d \\ i\neq k}}x_i(w_i - w_i^*))^6} =& \EXPS{\sum_{\substack{1\le i \le d \\ i\neq k}} x_i^6(w_i - w_i^*)^6} + 15 \EXPS{ \sum_{\substack{1\le i, j \le d \\ i,j\neq k \\ i\neq j}} x_i^4x_j^2(w_i-w_i^*)^4(w_j - w_j^*)^2 } \nonumber\\
&+ 15 \EXPS{ \sum_{\substack{1\le i, j, \ell \le d \\ i,j,\ell\neq k \\ i\neq j, i\neq \ell, j\neq \ell}} x_i^2x_j^2x_\ell^2(w_i - w_i^*)^2(w_j - w_j^*)^2(w_\ell - w_\ell^*)^2 } \nonumber\\
=& W_1 + 15W_2 + 15 W_3. \label{eq:bound-var-6-2-rad} 
\end{align}
Combining~\eqref{eq:bound-var-6-rad} and~\eqref{eq:bound-var-6-2-rad}, we get
\begin{equation}\label{eq:bound-var-6-3-rad}
\EXPS{\Delta_k^6(\vecw)} \le 32( W_1 + 15W_2 + 15W_3 + 15\sigma^6).
\end{equation}
Combining~\eqref{eq:var3-lb-rad} and~\eqref{eq:bound-var-6-3-rad}, we get
$$
\gamma(\Delta_k(\vecw)) \le \sqrt{\frac{\EXPS{\Delta_k^6(\vecw)}}{\var(\Delta_k(\vecw))^3}} \le \sqrt{\frac{32(W_1 + 15W_2 + 15W_3 + 15\sigma^6)}{W_1 + 3W_2 + W_3  + \sigma^6}} \le 480.
$$

\subsection{Example of Regression with Gaussian Features}\label{eg:gaussian-var-skew}
\begin{proposition}\label{thm:eg-var-skew}
Suppose that each data point consists of a feature $\vecx\in\R^d$ and a label $y\in\R$, and the label is generated by 
$$
y = \vecx\tsp\vecw^* + \xi
$$
with some $\vecw^*\in\W$. Assume that the elements of $\vecx$ are i.i.d. samples of standard Gaussian distribution, and that the noise $\xi$ is independent of $\vecx$ and drawn from Gaussian distribution $\mathcal{N}(0, \sigma^2)$. Define the quadratic loss function $f(\vecw; \vecx, y) = \frac{1}{2}(y - \vecx\tsp\vecw)^2$. Then, we have
$$
\var(\gradf(\vecw; \vecx, y)) = (d+1) \twonms{\vecw - \vecw^*}^2 + d\sigma^2,
$$
and
$$
\|\gamma(\gradf(\vecw;\vecx, y))\|_\infty \le 429.
$$
\end{proposition}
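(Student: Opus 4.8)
The plan is to mirror the proof of Proposition~\ref{thm:eg-var-skew-rad}, substituting Gaussian moment identities for the Rademacher ones. Writing $u_i := w_i - w_i^*$, a direct computation gives $\gradf(\vecw) = \vecx\vecx\tsp(\vecw - \vecw^*) - \xi\vecx$, and since $\EXPS{\vecx\vecx\tsp} = \mat{I}$ for standard Gaussian features, $\gradF(\vecw) = \vecw - \vecw^*$. Setting $\Delta(\vecw) := \gradf(\vecw) - \gradF(\vecw)$, its $k$-th coordinate is
$$
\Delta_k(\vecw) = \sum_{i\neq k} x_k x_i u_i + (x_k^2 - 1) u_k - \xi x_k .
$$
Using $\EXPS{x_k^2 x_i^2} = 1$ for $i\neq k$, $\EXPS{(x_k^2-1)^2} = 2$, $\EXPS{x_k^2} = 1$, and independence across the coordinates of $\vecx$ and of $\xi$ (all odd-order cross terms vanish), I would obtain $\EXPS{\Delta_k^2(\vecw)} = \sum_{i\neq k} u_i^2 + 2u_k^2 + \sigma^2 = \twonms{\vecw - \vecw^*}^2 + u_k^2 + \sigma^2$; summing over $k\in[d]$ then gives the stated variance $\var(\gradf(\vecw;\vecx,y)) = (d+1)\twonms{\vecw - \vecw^*}^2 + d\sigma^2$.

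For the skewness I would again use Jensen's inequality, $\gamma(\Delta_k(\vecw)) \le \sqrt{\EXPS{\Delta_k^6(\vecw)}/\var(\Delta_k(\vecw))^3}$, and bound numerator and denominator separately. With $s := \sum_{i\neq k} u_i^2$, the inequality $(a+b+c)^3 \ge a^3 + b^3 + c^3$ for nonnegative reals yields the lower bound $\var(\Delta_k(\vecw))^3 = (s + 2u_k^2 + \sigma^2)^3 \ge s^3 + 8 u_k^6 + \sigma^6$. For the numerator I would split $\Delta_k(\vecw)$ into the three terms $x_k\sum_{i\neq k} x_i u_i$, $(x_k^2-1)u_k$, and $-\xi x_k$, apply $(a+b+c)^6 \le 3^5(a^6+b^6+c^6)$, and evaluate each expectation by independence together with the Gaussian moments $\EXPS{x_k^6} = 15$, $\EXPS{(\sum_{i\neq k} x_i u_i)^6} = 15 s^3$ (as $\sum_{i\neq k}x_i u_i \sim \N(0,s)$), $\EXPS{\xi^6} = 15\sigma^6$, and the sixth central moment of the square of a standard Gaussian, $\EXPS{(x_k^2-1)^6} = 6040$ (a routine one-dimensional computation). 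This gives $\EXPS{\Delta_k^6(\vecw)} \le 3^5\big(225\, s^3 + 6040\, u_k^6 + 225\,\sigma^6\big)$.

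Combining the two estimates and using $\frac{\sum_i a_i}{\sum_i b_i} \le \max_i \frac{a_i}{b_i}$ for positive $b_i$,
$$
\gamma(\Delta_k(\vecw))^2 \le 3^5 \cdot \frac{225 s^3 + 6040 u_k^6 + 225\sigma^6}{s^3 + 8 u_k^6 + \sigma^6} \le 243 \cdot \max\Big\{225,\; \tfrac{6040}{8},\; 225\Big\} = 243 \cdot 755 < 429^2,
$$
uniformly in $\vecw\in\W$ and $k\in[d]$, whence $\|\gamma(\gradf(\vecw;\vecx,y))\|_\infty \le 429$. I expect the only delicate point to be the moment bookkeeping: unlike the Rademacher case, $x_k^2 - 1$ no longer vanishes, so the extra term $(x_k^2-1)u_k$ must be tracked and one needs the sixth central moment of a chi-squared variable; the three-way split above is chosen so that the resulting universal constant stays near $429$, although a finer split (e.g. grouping the Gaussian parts $\sum_{i\neq k} x_i u_i$ and $\xi$ together) would yield a smaller constant.
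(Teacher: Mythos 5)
Your proposal is correct and follows essentially the same route as the paper's proof: the same variance computation, the same Jensen reduction to sixth moments, the same three-way split of $\Delta_k$ with the Hölder-type bound $(a+b+c)^6\le 243(a^6+b^6+c^6)$, and the same lower bound $\var(\Delta_k)^3\ge s^3+8u_k^6+\sigma^6$. The only (harmless) cosmetic differences are that you compute $\EXPS{(\sum_{i\neq k}x_iu_i)^6}=15s^3$ directly from normality rather than expanding into the $W_1,W_2,W_3$ terms, and that you make the final numerical step explicit via the mediant inequality $243\cdot\max\{225,6040/8\}<429^2$, which the paper leaves implicit.
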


\begin{proof}
We use the same simplified notation as in Appendix~\ref{prf:eg-var-skew-rad}. One can also see that~\eqref{eq:def-delta-k-rad} still holds for in the Gaussian setting.
Thus,
\begin{align}
\EXPS{\Delta_k^2(\vecw)} &= \EXPS{\sum_{\substack{1\le i \le d \\ i\neq k}} x_k^2x_i^2(w_i - w_i^*)^2 + (x_k^2-1)^2(w_k - w_k^*)^2 + \xi^2 x_k^2} \nonumber \\
&= \sum_{\substack{1\le i \le d \\ i\neq k}} (w_i - w_i^*)^2 + 2(w_k - w_k^*)^2 + \sigma^2 \label{eq:var-delta-k-1}  \\
&=\twonms{\vecw - \vecw^*}^2 + (w_k - w_k^*)^2 + \sigma^2, \label{eq:var-delta-k-2}
\end{align}
which yields
$$
\var(\gradf(\vecw)) = \EXPS{\twonms{\gradf(\vecw) - \gradF(\vecw)}^2} = (d+1) \twonms{\vecw - \vecw^*}^2 + d\sigma^2.
$$
Then we proceed to bound $\gamma(\Delta_k(\vecw))$. By Jensen's inequality, we know that
\begin{equation}\label{eq:bound-skewness-jensen}
\gamma(\Delta_k(\vecw)) = \frac{\EXPS{|\Delta_k(\vecw)|^3}}{\var(\Delta_k(\vecw))^{3/2}} \le \sqrt{\frac{\EXPS{\Delta_k^6(\vecw)}}{\var(\Delta_k(\vecw))^3}}
\end{equation}
We first find a lower bound for $\var(\Delta_k(\vecw))^3$. According to~\eqref{eq:var-delta-k-1}, we know that
\begin{align*}
\var(\Delta_k(\vecw))^3 &= \big( \sum_{\substack{1\le i \le d \\ i\neq k}} (w_i - w_i^*)^2 + 2(w_k - w_k^*)^2 + \sigma^2 \big)^3\\
& \ge \big( \sum_{\substack{1\le i \le d \\ i\neq k}} (w_i - w_i^*)^2 \big)^3 + 8(w_k - w_k^*)^6 + \sigma^6.
\end{align*}
Define the $W_1$, $W_2$, and $W_3$ as in~\eqref{eq:def-w1-rad},~\eqref{eq:def-w2-rad}, and~\eqref{eq:def-w3-rad}. We can also see that~\eqref{eq:expansion} still holds, and thus
\begin{equation}\label{eq:var3-lb}
\var(\Delta_k(\vecw))^3 \ge W_1 + 3W_2 + W_3 + 8(w_k - w_k^*)^6 + \sigma^6.
\end{equation}
Then, we find an upper bound on $\EXPS{\Delta_k^6(\vecw)}$. According to~\eqref{eq:def-delta-k-rad}, and H{\"{o}}lder's inequality, we know that
\begin{align}
\EXPS{\Delta_k^6(\vecw)} &= \EXPS{(\sum_{\substack{1\le i \le d \\ i\neq k}} x_kx_i(w_i - w_i^*) + (x_k^2-1)(w_k - w_k^*) - \xi x_k)^6} \nonumber \\
&\le 243 \big( \EXPS{(\sum_{\substack{1\le i \le d \\ i\neq k}} x_kx_i(w_i - w_i^*))^6} + \EXPS{(x_k^2-1)^6(w_k - w_k^*)^6} + \EXPS{\xi^6 x_k^6} \big) \nonumber \\
&=243 \big( 15\EXPS{(\sum_{\substack{1\le i \le d \\ i\neq k}}x_i(w_i - w_i^*))^6} + 6040 (w_k - w_k^*)^6 + 225\sigma^6 \big) \label{eq:bound-var-6},
\end{align}
where in the last inequality we use the moments of Gaussian random variables. Then, we compute the first term in~\eqref{eq:bound-var-6}. By algebra, one can obtain
\begin{align}
\EXPS{(\sum_{\substack{1\le i \le d \\ i\neq k}}x_i(w_i - w_i^*))^6} =& \EXPS{\sum_{\substack{1\le i \le d \\ i\neq k}} x_i^6(w_i - w_i^*)^6} + 15 \EXPS{ \sum_{\substack{1\le i, j \le d \\ i,j\neq k \\ i\neq j}} x_i^4x_j^2(w_i-w_i^*)^4(w_j - w_j^*)^2 } \nonumber\\
&+ 15 \EXPS{ \sum_{\substack{1\le i, j, \ell \le d \\ i,j,\ell\neq k \\ i\neq j, i\neq \ell, j\neq \ell}} x_i^2x_j^2x_\ell^2(w_i - w_i^*)^2(w_j - w_j^*)^2(w_\ell - w_\ell^*)^2 } \nonumber\\
=& 15W_1 + 45W_2 + 15 W_3. \label{eq:bound-var-6-2} 
\end{align}
Combining~\eqref{eq:bound-var-6} and~\eqref{eq:bound-var-6-2}, we get
\begin{equation}\label{eq:bound-var-6-3}
\EXPS{\Delta_k^6(\vecw)} \le 243(225W_1 + 675W_2 + 225W_3 + 6040(w_k - w_k^*)^6 + 225\sigma^6).
\end{equation}
Combining~\eqref{eq:var3-lb} and~\eqref{eq:bound-var-6-3}, we get
$$
\gamma(\Delta_k(\vecw)) \le \sqrt{\frac{\EXPS{\Delta_k^6(\vecw)}}{\var(\Delta_k(\vecw))^3}} \le \sqrt{\frac{243(225W_1 + 675W_2 + 225W_3 + 6040(w_k - w_k^*)^6 + 225\sigma^6)}{W_1 + 3W_2 + W_3 + 8(w_k - w_k^*)^6 + \sigma^6}} \le 429.
$$
\end{proof}

\subsection{Proof of Proposition~\ref{thm:eg-sub-exp}}\label{prf:eg-sub-exp}
We use the same notation as in Appendix~\ref{prf:eg-var-skew-rad}. We have
\begin{align*}
\partial_kf(\vecw;\vecz) - F(\vecw) & = \Delta_k(\vecw) = \sum_{\substack{1\le i \le d \\ i\neq k}} x_kx_i(w_i - w_i^*) + (x_k^2-1)(w_k - w_k^*) - \xi x_k \\
&= x_k ( - \xi + \sum_{\substack{1\le i \le d \\ i\neq k}} x_i (w_i - w_i^*) ) := x_k\Delta'_k(\vecw)
\end{align*}
Since $\Delta'_k(\vecw)$ has symmetric distribution and $x_k$ is uniformly distributed in $\{-1, 1\}$, we know that the distributions of $\Delta_k(\vecw) $ and $\Delta'_k(\vecw)$. We then prove a stronger result on $\Delta'_k(\vecw)$. We first recall the definition of $v$-sub-Gaussian random variables. A random variable $X$ with mean $\mu = \EXPS{X}$ is $v$-sub-Gaussian if for all $\lambda\in\R$, $\EXPS{e^{\lambda(X-\mu)}} \le e^{v^2\lambda^2/2}$. We can see that $v$-sub-Gaussian random variables are also $v$-sub-exponential. One can also check that $x_i$'s are i.i.d. $1$-sub-Gaussian random variables, and then $\Delta'_k(\vecw)$ is $v$-sub-exponential with
$$
v = \big( \sigma^2 + \sum_{\substack{1\le i \le d \\ i\neq k}}(w_i - w_i^*)^2 \big)^{1/2} \le \sqrt{\sigma^2 + \twonms{\vecw - \vecw^*}^2}.
$$

\section{Proof of Theorem~\ref{thm:main-gd-sc}}\label{prf:main-gd-sc}
The proof of Theorem~\ref{thm:main-gd-sc} consists of two parts: 1) the analysis of coordinate-wise median estimator of the population gradients, and 2) the convergence analysis of the robustified gradient descent algorithm.

Recall that at iteration $t$, the master machine sends $\vecw^t$ to all the worker machines. For any normal worker machine, say machine $i\in[m]\setminus\B$, the gradient of the local empirical loss function $\vecg^i(\vecw^t) = \gradF_i(\vecw^t)$ is computed and returned to the center machine, while the Byzantine machines, say machine $i\in\B$, the returned message $\vecg^i(\vecw^t)$ can be arbitrary or even adversarial. The master machine then compute the coordinate-wise median, i.e.,
$$
\vecg(\vecw^t) = \med\{\vecg^i(\vecw^t) : i\in[m]\}.
$$
The following theorem provides a uniform bound on the distance between $\vecg(\vecw^t)$ and $\gradF(\vecw^t)$.
\begin{theorem}\label{thm:uniform-bound}
Define
\begin{equation}\label{eq:def-grad-each-machine}
\vecg^i(\vecw) = \begin{cases}
\gradF_i(\vecw) & i \in[m]\setminus\B, \\
* & i\in\B.
\end{cases}
\end{equation}
and the coordinate-wise median of $\vecg^i(\vecw)$:
\begin{equation}\label{eq:def-med-grad}
\vecg(\vecw) = \med\{\vecg^i(\vecw):i\in[m]\}.
\end{equation}
Suppose that Assumptions~\ref{asm:smoothness},~\ref{asm:bounded-variance}, and~\ref{asm:bounded-skewness} hold, and inequality~\eqref{eq:alpha-m-n-condition} is satisfied with some $\epsilon>0$.
Then, we have with probability at least $1-\frac{4d}{(1+nm\widehat{L}D)^d}$,
\begin{equation}\label{eq:median-bound-thm}
\twonms{\vecg(\vecw) - \gradF(\vecw)} \le 2\sqrt{2} \frac{1}{nm} + \sqrt{2}\frac{C_\epsilon}{\sqrt{n}}V\left(\alpha + \sqrt{\frac{d\log(1+nm\widehat{L}D)}{m(1-\alpha)}} + 0.4748\frac{S}{\sqrt{n}} \right),
\end{equation}
for all $\vecw\in\W$, where $C_\epsilon$ is defined as in~\eqref{eq:def-c-epsilon}.
\end{theorem}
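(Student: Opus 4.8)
The plan is to isolate a single coordinate $k\in[d]$ and a single point $\vecw\in\W$, prove the per-coordinate bound there, and then lift it to a uniform statement over $\W$ by a covering argument. Fix $k$ and $\vecw$ and write $\sigma_k^2(\vecw) := \var(\partial_k f(\vecw;\vecz))$, so that $\sum_{k=1}^d \sigma_k^2(\vecw) = \var(\gradf(\vecw;\vecz)) \le V^2$ by Assumption~\ref{asm:bounded-variance}, while $\gamma(\partial_k f(\vecw;\vecz)) \le S$ by Assumption~\ref{asm:bounded-skewness}. For a normal machine $i\in[m]\setminus\B$, $\partial_k F_i(\vecw)=\frac1n\sum_{j=1}^n\partial_k f(\vecw;\vecz^{i,j})$ is an average of $n$ i.i.d.\ terms with mean $\partial_k F(\vecw)$ and variance $\sigma_k^2(\vecw)/n$. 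The key structural fact is that the coordinate-wise median $g_k(\vecw)$ exceeds $\partial_k F(\vecw)+t$ only if at least $\lceil m/2\rceil$ of all $m$ values exceed it, hence only if at least $(\tfrac12-\alpha)m$ of the \emph{normal} machines do; this ``budget'' step is the only place the omniscient adversary enters, and it accounts for the $\alpha$ term.

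Next I would control, for a single normal machine, the tail $p_k(t):=\PP\{\partial_k F_i(\vecw)-\partial_k F(\vecw)\ge t\}$. Applying the Berry--Esseen theorem to the normalized sum $\frac{\sqrt n}{\sigma_k(\vecw)}(\partial_k F_i(\vecw)-\partial_k F(\vecw))$, whose third-absolute-moment ratio is $\gamma(\partial_k f(\vecw;\vecz))\le S$, gives $\bigl|p_k(t)-\bigl(1-\Phi(\tfrac{t\sqrt n}{\sigma_k(\vecw)})\bigr)\bigr|\le 0.4748\,S/\sqrt n$ (the numerical Berry--Esseen constant; the case $\sigma_k(\vecw)=0$ is trivial since then $\partial_k F_i(\vecw)=\partial_k F(\vecw)$ a.s.). Then, by Hoeffding's inequality on the count of the $(1-\alpha)m$ normal machines whose value is $\ge\partial_k F(\vecw)+t$, with high probability the empirical fraction stays within $\rho$ of $p_k(t)$; on that event $g_k(\vecw)<\partial_k F(\vecw)+t$ whenever $p_k(t)+\rho<\tfrac12-\alpha$, i.e.\ whenever $\Phi(\tfrac{t\sqrt n}{\sigma_k(\vecw)})\ge \tfrac12+\alpha+\rho+0.4748\,S/\sqrt n$. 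Because condition~\eqref{eq:alpha-m-n-condition} forces the right-hand argument to lie in $[\epsilon,1-\epsilon]$, and $(\Phi^{-1})'(u)=\sqrt{2\pi}\,e^{(\Phi^{-1}(u))^2/2}\le C_\epsilon$ there, inverting $\Phi$ yields
\[
g_k(\vecw)-\partial_k F(\vecw)\ \le\ \frac{\sigma_k(\vecw)}{\sqrt n}\,C_\epsilon\Bigl(\alpha+\rho+0.4748\frac{S}{\sqrt n}\Bigr),
\]
and symmetrically for $\partial_k F(\vecw)-g_k(\vecw)$.

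To make this uniform over $\vecw\in\W$ I would use a $\delta$-net. Assumption~\ref{asm:smoothness} makes each $\partial_k f(\cdot;\vecz)$ $L_k$-Lipschitz, hence $\gradf(\cdot;\vecz)$, and therefore each $\gradF_i$ and $\gradF$, is $\widehat L$-Lipschitz in $\ell_2$; taking a net $\W_\delta$ of $\W$ with $|\W_\delta|\le(1+D/\delta)^d$ and $\delta\asymp 1/(nm\widehat L)$ makes the $\ell_2$ discretization error $\bigo(1/(nm))$ while $\log|\W_\delta|\lesssim d\log(1+nm\widehat L D)$. A union bound over the $d$ coordinates and the net points, with the slack chosen as $\rho=\sqrt{\frac{d\log(1+nm\widehat L D)}{m(1-\alpha)}}$ to push the failure probability below $\frac{4d}{(1+nm\widehat L D)^d}$, gives the per-coordinate bound simultaneously for all $\vecw,k$ up to an additive $\bigo(1/(nm))$. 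Finally I would square and sum over $k$, use $\sum_k\sigma_k^2(\vecw)\le V^2$ and $(a+b)^2\le 2a^2+2b^2$, and take the square root to obtain $\twonms{\vecg(\vecw)-\gradF(\vecw)}\le 2\sqrt2\,\frac{1}{nm}+\sqrt2\,\frac{C_\epsilon}{\sqrt n}V\bigl(\alpha+\sqrt{\tfrac{d\log(1+nm\widehat L D)}{m(1-\alpha)}}+0.4748\tfrac{S}{\sqrt n}\bigr)$, which is~\eqref{eq:median-bound-thm}.

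I expect the main obstacle to be the covering step against an omniscient adversary: one must check that the high-probability event controlling the \emph{normal}-machine counts holds uniformly over the net (so the adversary's data- and $\vecw$-dependent messages cannot exploit any single failure), that the Lipschitz transfer from the net to all of $\W$ does not inflate the leading constants, and that the slack $\rho$, the Berry--Esseen constant $0.4748$, and the $\Phi^{-1}$-Lipschitz constant $C_\epsilon$ compose to produce exactly the stated inequality rather than merely the right order of magnitude; a secondary nuisance is the degenerate regime where $\sigma_k(\vecw)$ is small or zero, where Berry--Esseen is vacuous but the conclusion holds trivially.
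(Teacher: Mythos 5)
Your proposal is correct and follows essentially the same route as the paper's proof: a Berry--Esseen normal approximation for each normal machine's averaged partial derivative, a concentration bound (your Hoeffding on the count is equivalent to the paper's bounded-difference inequality on the empirical CDF of the normal machines), an $\alpha$-budget argument absorbing the Byzantine machines into a shift of the median's quantile, inversion of $\Phi$ on $[\epsilon,1-\epsilon]$ to produce the factor $C_\epsilon$, and a $\delta$-net with $\delta\asymp 1/(nm\widehat L)$ together with the coordinate-wise Lipschitz transfer and the choice $t=d\log(1+nm\widehat L D)$. You also correctly identify the one subtle point—the uniform control must be carried at the level of the normal machines' empirical counts and then transferred through the Lipschitz shift before taking the median—which is exactly how the paper handles it.
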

\begin{proof}
See Appendix~\ref{prf:uniform-bound}.
\end{proof}
Then, we proceed to analyze the convergence of the robust distributed gradient descent algorithm. We condition on the event that the bound in~\eqref{eq:median-bound-thm} is satisfied for all $\vecw\in\W$. Then, in the $t$-th iteration, we define
$$
\widehat{\vecw}^{t+1} = \vecw^t - \eta \vecg(\vecw^t).
$$
Thus, we have $\vecw^{t+1} = \Pi_\W(\widehat{\vecw}^{t+1})$. By the property of Euclidean projection, we know that 
$$
\twonms{\vecw^{t+1} - \vecw^*} \le \twonms{\widehat{\vecw}^{t+1} - \vecw^*}.
$$
We further have
\begin{equation}\label{eq:converge-1}
\begin{aligned}
\twonms{\vecw^{t+1} - \vecw^*} & \le \twonms{\vecw^t - \eta \vecg(\vecw^t) - \vecw^*} \\
& \le \twonms{\vecw^t - \eta\gradF(\vecw^t) - \vecw^*} + \eta\twonms{\vecg(\vecw^t) -\gradF(\vecw^t)}.
\end{aligned}
\end{equation}
Meanwhile, we have
\begin{equation}\label{eq:iter-cvx-1}
\twonms{\vecw^t - \eta\gradF(\vecw^t) - \vecw^*}^2 = \twonms{\vecw^t - \vecw^*}^2 - 2\eta\innerps{\vecw^t - \vecw^*}{\gradF(\vecw^t)} + \eta^2\twonms{\gradF(\vecw^t)}^2.
\end{equation}
Since $F(\vecw)$ is $\lambda_F$-strongly convex, by the co-coercivity of strongly convex functions (see Lemma 3.11 in~\cite{bubeck2015convex} for more details), we obtain
$$
\innerps{\vecw^t - \vecw^*}{\gradF(\vecw^t)} \ge \frac{L_F\lambda_F}{L_F + \lambda_F}\twonms{\vecw^t - \vecw^*}^2 + \frac{1}{L_F + \lambda_F}\twonms{\gradF(\vecw^t)}^2.
$$
Let $\eta = \frac{1}{L_F}$. Then we get
\begin{align*}
\twonms{\vecw^t - \eta\gradF(\vecw^t) - \vecw^*}^2 & \le (1- \frac{2\lambda_F}{L_F + \lambda_F}) \twonms{\vecw^t - \vecw^*}^2 - \frac{2}{L_F(L_F+\lambda_F)}\twonms{\gradF(\vecw^t)}^2 + \frac{1}{L_F^2}\twonms{\gradF(\vecw^t)}^2 \\
& \le (1- \frac{2\lambda_F}{L_F + \lambda_F}) \twonms{\vecw^t - \vecw^*}^2,
\end{align*}
where in the second inequality we use the fact that $\lambda_F \le L_F$. Using the fact $\sqrt{1-x} \le 1-\frac{x}{2}$, we get
\begin{equation}\label{eq:iter-linear-decay}
\twonms{\vecw^t - \eta\gradF(\vecw^t) - \vecw^*} \le (1- \frac{\lambda_F}{L_F + \lambda_F})\twonms{\vecw^t - \vecw^*}.
\end{equation}
Combining~\eqref{eq:converge-1} and~\eqref{eq:iter-linear-decay}, we get
\begin{equation}\label{eq:each-iter-sc}
\twonms{\vecw^{t+1} - \vecw^*} \le (1-\frac{\lambda_F}{L_F + \lambda_F})\twonms{\vecw^t - \vecw^*} + \frac{1}{L_F}\Delta,
\end{equation}
where
$$
\Delta =  2\sqrt{2} \frac{1}{nm} + \sqrt{2}\frac{C_\epsilon}{\sqrt{n}}V (\alpha + \sqrt{\frac{d\log(1+nm\widehat{L}D)}{m(1-\alpha)}} + 0.4748\frac{S}{\sqrt{n}} ).
$$
Then we can complete the proof by iterating~\eqref{eq:each-iter-sc}.

\subsection{Proof of Theorem~\ref{thm:uniform-bound}}\label{prf:uniform-bound}
The proof of Theorem~\ref{thm:uniform-bound} relies on careful analysis of the median of means estimator in the presence of adversarial data and a covering net argument.

We first consider a general problem of robust estimation of a one dimensional random variable. Suppose that there are $m$ worker machines, and $q$ of them are Byzantine machines, which store $n$ adversarial data (recall that $\alpha := q/m$). Each of the other $m(1-\alpha)$ normal worker machines stores $n$ i.i.d. samples of some one dimensional random variable $x\sim \D$. Denote the $j$-th sample in the $i$-th worker machine by $x^{i,j}$. Let $\mu := \EXPS{x}$, $\sigma^2 := \var(x)$, and $\gamma(x)$ be the absolute skewness of $x$. In addition, define $\bar{x}^i$ as the average of samples in the $i$-th machine, i.e., $\bar{x}^i = \frac{1}{n} \sum_{j=1}^n x^{i,j}$. For any $z\in\R$, define $\widetilde{p}(z) := \frac{1}{m(1-\alpha)}\sum_{i\in[m]\setminus\B} \indi(\bar{x}^i\le z)$ as the empirical distribution function of the sample averages on the \emph{normal} worker machines. We have the following result on $\widetilde{p}(z)$.

\begin{lemma}\label{lem:quantile-one-d}
Suppose that for a fixed $t>0$, we have
\begin{equation}\label{eq:alpha-condition}
\alpha + \sqrt{\frac{t}{m(1-\alpha)}} + 0.4748\frac{\gamma(x)}{\sqrt{n}} \le \frac{1}{2}-\epsilon,
\end{equation}
for some $\epsilon>0$. Then, with probability at least $1-4e^{-2t}$, we have
\begin{equation}\label{eq:tildep1}
\widetilde{p}\left( \mu + C_\epsilon\frac{\sigma}{\sqrt{n}}(\alpha + \sqrt{\frac{t}{m(1-\alpha)}} + 0.4748\frac{\gamma(x)}{\sqrt{n}} ) \right) \ge \frac{1}{2} + \alpha,
\end{equation}
and
\begin{equation}\label{eq:tildep2}
\widetilde{p}\left( \mu - C_\epsilon\frac{\sigma}{\sqrt{n}}(\alpha + \sqrt{\frac{t}{m(1-\alpha)}} + 0.4748\frac{\gamma(x)}{\sqrt{n}} ) \right) \le \frac{1}{2} - \alpha,
\end{equation}
where $C_\epsilon$ is defined as in~\eqref{eq:def-c-epsilon}.
\end{lemma}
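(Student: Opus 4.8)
The plan is to combine a sharp Berry--Esseen normal approximation for the per-machine sample averages with a purely analytic estimate relating $\Phi$ to the constant $C_\epsilon$, and then to upgrade an in-expectation statement about $\widetilde p$ into a high-probability one via a concentration inequality for empirical distribution functions.

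First I would apply the Berry--Esseen theorem on a single normal machine. Since $\bar x^i$ is an average of $n$ i.i.d.\ copies of $x$, with mean $\mu$, variance $\sigma^2/n$, and third absolute central moment $\gamma(x)\sigma^3$, the sharp form of Berry--Esseen (with Shevtsova's constant $0.4748$) gives
\[
\sup_{z\in\R}\Big| \PP(\bar x^i \le z) - \Phi\Big( \tfrac{\sqrt n\,(z-\mu)}{\sigma} \Big) \Big| \le \frac{0.4748\,\gamma(x)}{\sqrt n}.
\]
Writing $a := \alpha + \sqrt{t/(m(1-\alpha))} + 0.4748\,\gamma(x)/\sqrt n$ and evaluating at $z = \mu \pm C_\epsilon\frac{\sigma}{\sqrt n}a$, and using that $\EXPS{\widetilde p(z)} = \PP(\bar x^i\le z)$ since the normal machines are i.i.d., the problem reduces to the deterministic inequality $\Phi(C_\epsilon a)\ge \tfrac12+a$ for $0\le a\le\tfrac12-\epsilon$: it yields $\EXPS{\widetilde p(\mu+C_\epsilon\frac{\sigma}{\sqrt n}a)}\ge\tfrac12+\alpha+\sqrt{t/(m(1-\alpha))}$ and, via $\Phi(-C_\epsilon a)=1-\Phi(C_\epsilon a)$, $\EXPS{\widetilde p(\mu-C_\epsilon\frac{\sigma}{\sqrt n}a)}\le\tfrac12-\alpha-\sqrt{t/(m(1-\alpha))}$.

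The hard part is this analytic inequality, since the naive bound $\int_0^{C_\epsilon a}\phi(u)\,du \ge C_\epsilon a\,\phi(C_\epsilon a)$ (with $\phi$ the standard Gaussian density) is too lossy. Instead I would argue by concavity: set $g(a):=\Phi(C_\epsilon a)-\tfrac12-a$, so that $g(0)=0$ and $g''(a)=-C_\epsilon^3 a\,\phi(C_\epsilon a)\le 0$ on $[0,\infty)$. A concave function on $[0,\tfrac12-\epsilon]$ lies above the chord joining its endpoints, so if $g(0)\ge 0$ and $g(\tfrac12-\epsilon)\ge 0$ then $g\ge 0$ throughout; since $g(0)=0$ it suffices to check $g(\tfrac12-\epsilon)\ge 0$, i.e.\ $C_\epsilon(\tfrac12-\epsilon)\ge\Phi^{-1}(1-\epsilon)$. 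Writing $b:=\Phi^{-1}(1-\epsilon)>0$, the definition~\eqref{eq:def-c-epsilon} of $C_\epsilon$ is exactly $C_\epsilon = 1/\phi(b)$, while $\tfrac12-\epsilon = \Phi(b)-\tfrac12 = \int_0^b\phi(u)\,du \ge b\,\phi(b) = b/C_\epsilon$ because $\phi$ is decreasing on $[0,b]$; this is the desired inequality, and it is precisely here that hypothesis~\eqref{eq:alpha-condition} (which states $a\le\tfrac12-\epsilon$) is used.

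Finally I would invoke concentration for $\widetilde p$, an average of $m(1-\alpha)$ independent $\{0,1\}$-valued variables: Hoeffding's inequality at the two points $\mu\pm C_\epsilon\frac{\sigma}{\sqrt n}a$ (or a two-sided DKW inequality, for a bound uniform in $z$) controls the deviation from the mean by $\sqrt{t/(m(1-\alpha))}$ with failure probability $e^{-2t}$ at each point; combined with the two expectation bounds above this gives~\eqref{eq:tildep1} and~\eqref{eq:tildep2}, and a union bound over the relevant events yields the claimed probability $1-4e^{-2t}$ (the constant is not tight). Everything else --- tracking the Berry--Esseen constant and applying the concentration bound with sample size $m(1-\alpha)$ --- is routine bookkeeping.
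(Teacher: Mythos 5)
Your proof is correct and follows essentially the same route as the paper's: Berry--Esseen with Shevtsova's constant for the per-machine averages, Hoeffding/bounded-difference concentration of the empirical CDF $\widetilde p$ over the $m(1-\alpha)$ normal machines, and the identity $C_\epsilon = 1/\phi\bigl(\Phi^{-1}(1-\epsilon)\bigr)$ (with $\phi$ the standard Gaussian density) to convert the quantile shift into the stated radius $C_\epsilon\frac{\sigma}{\sqrt n}a$. The only difference is cosmetic: the paper defines the quantile $z_1$ implicitly by $\Phi(z_1)=\tfrac12+a$ and bounds $z_1\le C_\epsilon a$ via the mean value theorem and monotonicity of $\phi$, whereas you prove the equivalent direct inequality $\Phi(C_\epsilon a)\ge\tfrac12+a$ by concavity on $[0,\tfrac12-\epsilon]$; both hinge on the same use of hypothesis~\eqref{eq:alpha-condition}.
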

\begin{proof}
See Appendix~\ref{prf:quantile-one-d}.
\end{proof}
We further define the distribution function of all the $m$ machines as $\widehat{p}(z) := \frac{1}{m}\sum_{i\in[m]} \indi(\bar{x}^i\le z) $. We have the following direct corollary on $\widehat{p}(z)$ and the median of means estimator $\med\{\bar{x}^i : i\in[m]\}$.
\begin{cor}\label{cor:median-one-d}
Suppose that condition~\eqref{eq:alpha-condition} is satisfied. Then, with probability at least $1-4e^{-2t}$, we have,
\begin{equation}\label{eq:hatp1}
\widehat{p}\left( \mu + C_\epsilon\frac{\sigma}{\sqrt{n}}(\alpha + \sqrt{\frac{t}{m(1-\alpha)}} + 0.4748\frac{\gamma(x)}{\sqrt{n}} ) \right) \ge \frac{1}{2},
\end{equation}
and
\begin{equation}\label{eq:hatp2}
\widehat{p}\left( \mu - C_\epsilon\frac{\sigma}{\sqrt{n}}(\alpha + \sqrt{\frac{t}{m(1-\alpha)}} + 0.4748\frac{\gamma(x)}{\sqrt{n}} ) \right) \le \frac{1}{2}.
\end{equation}
Thus, we have with probability at least $1-4e^{-2t}$,
\begin{equation}\label{eq:median-1-d}
| \med\{\bar{x}^i : i\in[m]\} -\mu | \le C_\epsilon\frac{\sigma}{\sqrt{n}}(\alpha + \sqrt{\frac{t}{m(1-\alpha)}} + 0.4748\frac{\gamma(x)}{\sqrt{n}} ).
\end{equation}
\end{cor}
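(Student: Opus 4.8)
The plan is to derive Corollary~\ref{cor:median-one-d} directly from Lemma~\ref{lem:quantile-one-d}, the one new ingredient being a bound on how much the $\alpha m$ Byzantine sample averages can perturb the empirical distribution function. Write
\[
\Delta_0 \;:=\; C_\epsilon\,\frac{\sigma}{\sqrt n}\Big(\alpha + \sqrt{\tfrac{t}{m(1-\alpha)}} + 0.4748\,\tfrac{\gamma(x)}{\sqrt n}\Big),
\]
so that, on the event of probability at least $1-4e^{-2t}$ supplied by Lemma~\ref{lem:quantile-one-d}, the bounds \eqref{eq:tildep1}--\eqref{eq:tildep2} read $\widetilde p(\mu+\Delta_0)\ge\frac12+\alpha$ and $\widetilde p(\mu-\Delta_0)\le\frac12-\alpha$. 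Everything below is deterministic on this event, so the failure probability $4e^{-2t}$ carries over verbatim.

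First I would record the elementary two-sided comparison between $\widehat p$ and $\widetilde p$: splitting the sum over all $m$ machines into the $m(1-\alpha)$ normal ones and the $\alpha m$ Byzantine ones, and using $\indi(\bar{x}^i\le z)\in\{0,1\}$ for the latter, one gets for every $z\in\R$
\begin{equation*}
(1-\alpha)\,\widetilde p(z)\;\le\;\widehat p(z)\;\le\;(1-\alpha)\,\widetilde p(z)+\alpha .
\end{equation*}
The adversary controls the Byzantine values but not their number, so it can shift $\widehat p$ by at most the additive constant $\alpha$. Plugging the two Lemma bounds into this comparison, and invoking $\alpha\le\tfrac12$ (which follows from \eqref{eq:alpha-condition} since $\alpha\le\tfrac12-\epsilon$), the lower side gives $\widehat p(\mu+\Delta_0)\ge(1-\alpha)(\tfrac12+\alpha)=\tfrac12+\alpha(\tfrac12-\alpha)\ge\tfrac12$, i.e.\ \eqref{eq:hatp1}, and the upper side gives $\widehat p(\mu-\Delta_0)\le(1-\alpha)(\tfrac12-\alpha)+\alpha=\tfrac12-\alpha(\tfrac12-\alpha)\le\tfrac12$, i.e.\ \eqref{eq:hatp2}. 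Since $\widehat p$ is the non-decreasing empirical CDF of the $m$ numbers $\bar{x}^1,\dots,\bar{x}^m$, these two inequalities say that at least half of the $\bar{x}^i$ lie at or below $\mu+\Delta_0$ and at least half lie at or above $\mu-\Delta_0$; by the definition of the one-dimensional median this forces $\med\{\bar{x}^i:i\in[m]\}\in[\mu-\Delta_0,\mu+\Delta_0]$, which is \eqref{eq:median-1-d}.

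The argument is essentially bookkeeping, and the only point requiring genuine care is that the $\pm\alpha$ slack already built into Lemma~\ref{lem:quantile-one-d} must simultaneously absorb two distinct effects: the renormalization factor $\tfrac{1}{1-\alpha}$ incurred when passing from a fraction of the normal machines to a fraction of all $m$ machines, and the worst-case additional mass $\alpha$ that Byzantine machines can place on the wrong side of a threshold. Both are handled by the two scalar inequalities $(1-\alpha)(\tfrac12+\alpha)\ge\tfrac12$ and $(1-\alpha)(\tfrac12-\alpha)+\alpha\le\tfrac12$, which hold precisely because $\alpha<\tfrac12$; and any boundary/parity convention in the definition of the median is immaterial here, since for $\alpha>0$ the inequalities underlying \eqref{eq:hatp1}--\eqref{eq:hatp2} are in fact strict.
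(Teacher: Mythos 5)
Your proof is correct and follows essentially the same route as the paper's: the paper's one-line argument is the bound $|\widehat{p}(z)-\widetilde{p}(z)|\le\alpha$ for all $z$ (equivalent to your sandwich $(1-\alpha)\widetilde{p}(z)\le\widehat{p}(z)\le(1-\alpha)\widetilde{p}(z)+\alpha$, both coming from splitting the sum over normal and Byzantine machines), combined with \eqref{eq:tildep1}--\eqref{eq:tildep2} and the location of the median relative to $\widehat{p}$. Your handling of the final step is in fact slightly more careful than the paper's terse ``$\widehat{p}(\med)=1/2$'' justification, so nothing further is needed.
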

\begin{proof}
One can easily check that for any $z\in\R$, we have $| \widehat{p}(z) - \widetilde{p}(z) | \le \alpha$, which yields the results~\eqref{eq:hatp1} and~\eqref{eq:hatp2}. The result~\eqref{eq:median-1-d} can be derived using the fact that $\widehat{p}(\med\{\bar{x}^i : i\in[m]\}) = 1/2$.
\end{proof}
Lemma~\ref{lem:quantile-one-d} and Corollary~\ref{cor:median-one-d} can be translated to the estimators of the gradients. Define $\vecg^i(\vecw)$ and $\vecg(\vecw)$ as in~\eqref{eq:def-grad-each-machine} and~\eqref{eq:def-med-grad}, and let $g^i_k(\vecw)$ and $g_k(\vecw)$ be the $k$-th coordinate of $\vecg^i(\vecw)$ and $\vecg(\vecw)$, respectively. In addition, for any $\vecw\in\W$, $k\in[d]$, and $z\in\R$, we define the empirical distribution function of the $k$-th coordinate of the gradients on the normal machines:
\begin{equation}\label{eq:def-tildep-w-k}
\widetilde{p}(z; \vecw, k) = \frac{1}{m(1-\alpha)} \sum_{i\in[m]\setminus\B} \indi(g^i_k(\vecw)\le z),
\end{equation}
and on all the $m$ machines
\begin{equation}\label{eq:def-tildep-w-k}
\widehat{p}(z; \vecw, k) = \frac{1}{m} \sum_{i=1}^m \indi(g^i_k(\vecw)\le z).
\end{equation}
We use the symbol $\partial_k$ to denote the partial derivative of any function with respect to its $k$-th argument. We also use the simplified notation $\sigma_k^2(\vecw):=\var(\partial_kf(\vecw;\vecz))$, and $\gamma_k(\vecw):=\gamma(\partial_kf(\vecw;\vecz))$. Then, according to Lemma~\ref{lem:quantile-one-d}, when~\eqref{eq:alpha-condition} is satisfied, for any fixed $\vecw\in\W$ and $k\in[d]$, we have with probability at least $1-4e^{-2t}$,
\begin{equation}\label{eq:tildep-w-k1}
\widetilde{p}\left( \partial_kF(\vecw) + C_\epsilon\frac{\sigma_k(\vecw)}{\sqrt{n}}(\alpha + \sqrt{\frac{t}{m(1-\alpha)}} + 0.4748\frac{\gamma_k(\vecw)}{\sqrt{n}} ) ; \vecw, k \right) \ge \frac{1}{2} + \alpha,
\end{equation}
and
\begin{equation}\label{eq:tildep-w-k2}
\widetilde{p}\left( \partial_kF(\vecw) - C_\epsilon\frac{\sigma_k(\vecw)}{\sqrt{n}}(\alpha + \sqrt{\frac{t}{m(1-\alpha)}} + 0.4748\frac{\gamma_k(\vecw)}{\sqrt{n}} ) ; \vecw, k \right) \le \frac{1}{2} - \alpha.
\end{equation}
Further, according to Corollary~\ref{cor:median-one-d}, we know that with probability $1-4e^{-2t}$,
\begin{equation}\label{eq:bound-med}
| g_k(\vecw) - \partial_kF(\vecw) | \le C_\epsilon\frac{\sigma_k(\vecw)}{\sqrt{n}}(\alpha + \sqrt{\frac{t}{m(1-\alpha)}} + 0.4748\frac{\gamma_k(\vecw)}{\sqrt{n}} ).
\end{equation}
Here, the inequality~\eqref{eq:bound-med} gives a bound on the accuracy of the median of means estimator for the gradient at any fixed $\vecw$ and any coordinate $k\in[d]$. To extend this result to all $\vecw\in\W$ and all the $d$ coordinates, we need to use union bound and a covering net argument.

Let $\W_\delta = \{\vecw^1,\vecw^2,\ldots,\vecw^{N_\delta}\}$ be a finite subset of $\W$ such that for any $\vecw\in\W$, there exists $\vecw^\ell \in \W_\delta$ such that $\twonms{\vecw^\ell - \vecw} \le \delta$. According to the standard covering net results~\citep{vershynin2010introduction}, we know that $N_\delta \le (1+\frac{D}{\delta})^d$. By union bound, we know that with probability at least $1-4dN_\delta e^{-2t}$, the bounds in~\eqref{eq:tildep-w-k1} and~\eqref{eq:tildep-w-k2} hold for all $\vecw = \vecw^\ell \in\W_\delta$, and $k\in[d]$. By gathering all the $k$ coordinates and using Assumption~\ref{asm:bounded-skewness}, we know that this implies for all $\vecw^\ell\in\W_\delta$,
\begin{equation}\label{eq:median-acc-in-cover}
\twonms{ \vecg( \vecw^\ell ) - \gradF( \vecw^\ell ) } \le \frac{C_\epsilon}{\sqrt{n}} V \left( \alpha + \sqrt{\frac{t}{m(1-\alpha)}} + 0.4748\frac{S}{\sqrt{n}} \right).
\end{equation}

Then, consider an arbitrary $\vecw\in\W$. Suppose that $\twonms{\vecw^\ell - \vecw} \le \delta$. Since by Assumption~\ref{asm:smoothness}, we assume that for each $k\in[d]$, the partial derivative $\partial_k f(\vecw; \vecz)$ is $L_k$-Lipschitz for all $\vecz$, we know that for every normal machine $i\in[m]\setminus\B$, 
$$
\abs{g^i_k(\vecw) - g^i_k(\vecw^\ell)} \le L_k\delta.
$$
Then, according to the definition of $\widetilde{p}(z;\vecw,k)$ in~\eqref{eq:def-tildep-w-k}, we know that for any $z\in\R$, $\widetilde{p}(z+L_k\delta;\vecw,k) \ge \widetilde{p}(z;\vecw^\ell,k)$ and $\widetilde{p}(z-L_k\delta;\vecw,k)\le \widetilde{p}(z;\vecw^\ell,k)$. Then, the bounds in~\eqref{eq:tildep-w-k1} and~\eqref{eq:tildep-w-k2} yield
\begin{equation}\label{eq:tildep-anyw-k-1}
\widetilde{p}\left( \partial_kF(\vecw^\ell) + L_k\delta + C_\epsilon\frac{\sigma_k(\vecw^\ell)}{\sqrt{n}}(\alpha + \sqrt{\frac{t}{m(1-\alpha)}} + 0.4748\frac{\gamma_k(\vecw^\ell)}{\sqrt{n}} ) ; \vecw, k \right) \ge \frac{1}{2} + \alpha,
\end{equation}
and
\begin{equation}\label{eq:tildep-anyw-k-k2}
\widetilde{p}\left( \partial_kF(\vecw^\ell) - L_k\delta - C_\epsilon\frac{\sigma_k(\vecw^\ell)}{\sqrt{n}}(\alpha + \sqrt{\frac{t}{m(1-\alpha)}} + 0.4748\frac{\gamma_k(\vecw^\ell)}{\sqrt{n}} ) ; \vecw, k \right) \le \frac{1}{2} - \alpha.
\end{equation}
Using the fact that $\abss{\partial_kF(\vecw^\ell) - \partial_kF(\vecw)} \le L_k\delta$, and Corollary~\ref{cor:median-one-d}, we have
$$
| g_k(\vecw) - \partial_kF(\vecw) | \le 2L_k\delta + C_\epsilon\frac{\sigma_k(\vecw^\ell)}{\sqrt{n}}(\alpha + \sqrt{\frac{t}{m(1-\alpha)}} + 0.4748\frac{\gamma_k(\vecw^\ell)}{\sqrt{n}} ).
$$
Again, by gathering all the $k$ coordinates we get
$$
\twonms{\vecg(\vecw) - \gradF(\vecw)}^2 \le 8\delta^2\sum_{k=1}^d L_k^2 + 2\frac{C_\epsilon^2}{n}\sum_{k=1}^d \sigma_k^2(\vecw^\ell)(\alpha + \sqrt{\frac{t}{m(1-\alpha)}} + 0.4748\frac{\gamma_k(\vecw^\ell)}{\sqrt{n}} )^2,
$$
where we use the fact that $(a+b)^2 \le 2(a^2+b^2)$. Then, by Assumption~\ref{asm:bounded-variance} and~\ref{asm:bounded-skewness}, we further obtain
\begin{equation}\label{eq:median-acc-all-w}
\twonms{\vecg(\vecw) - \gradF(\vecw)} \le 2\sqrt{2}\delta \widehat{L} + \sqrt{2}\frac{C_\epsilon}{\sqrt{n}}V\left(\alpha + \sqrt{\frac{t}{m(1-\alpha)}} + 0.4748\frac{S}{\sqrt{n}} \right),
\end{equation}
where we use the fact that $\sqrt{a+b} \le \sqrt{a} + \sqrt{b}$. Combining~\eqref{eq:median-acc-in-cover} and~\eqref{eq:median-acc-all-w}, we conclude that for any $\delta>0$, with probability at least $1-4dN_\delta e^{-2t}$,~\eqref{eq:median-acc-all-w} holds for all $\vecw\in\W$. We simply choose $\delta = \frac{1}{nm\widehat{L}}$, and $t=d\log(1+nm\widehat{L}D)$. Then, we know that with probability at least $1-\frac{4d}{(1+nm\widehat{L}D)^d}$, we have
$$
\twonms{\vecg(\vecw) - \gradF(\vecw)} \le 2\sqrt{2} \frac{1}{nm} + \sqrt{2}\frac{C_\epsilon}{\sqrt{n}}V\left(\alpha + \sqrt{\frac{d\log(1+nm\widehat{L}D)}{m(1-\alpha)}} + 0.4748\frac{S}{\sqrt{n}} \right)
$$
for all $\vecw\in\W$.

\subsection{Proof of Lemma~\ref{lem:quantile-one-d}}\label{prf:quantile-one-d}

We recall the Berry-Esseen Theorem~\citep{berry1941accuracy,esseen1942liapounoff,shevtsova2014absolute} and the bounded difference inequality, which are useful in this proof.
\begin{theorem}[Berry-Esseen Theorem]\label{thm:berry-Esseen}
Assume that $Y_1,\ldots,Y_n$ are i.i.d. copies of a random variable $Y$ with mean $\mu$, variance $\sigma^2$, and such that $\EXPS{|Y-\mu|^3} < \infty$. Then,
$$
\sup_{s\in\R} \abs{ \prob{ \sqrt{n}\frac{\bar{Y}-\mu}{\sigma} \le s } - \Phi(s) } \le 0.4748 \frac{\EXPS{|Y-\mu|^3}}{\sigma^3\sqrt{n}},
$$
where $\bar{Y} = \frac{1}{n}\sum_{i=1}^n Y_i$ and $\Phi(s)$ is the cumulative distribution function of the standard normal random variable.
\end{theorem}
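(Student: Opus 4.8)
The plan is to prove the Berry--Esseen bound by the classical Fourier-analytic route: bound the uniform distance between the distribution function of the normalized sum and $\Phi$ via Esseen's smoothing inequality, and then estimate the difference of the corresponding characteristic functions near the origin. After the affine reduction $Y \mapsto (Y-\mu)/\sigma$ we may assume $\mu = 0$ and $\sigma^2 = 1$; write $\rho := \EXPS{|Y|^3}$, finite by hypothesis, and note $\rho \ge 1$ by Jensen's inequality. If $0.4748\,\rho/\sqrt{n} \ge 1$ the claim is trivial, since a distribution function and $\Phi$ take values in $[0,1]$ so their difference is bounded by $1$; hence we may assume $\rho/\sqrt n$ is below a fixed constant. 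Let $F_n$ be the distribution function of $S_n := n^{-1/2}\sum_{i=1}^n Y_i$ and let $\varphi$ be the characteristic function of $Y$, so the characteristic function of $S_n$ is $\varphi(\cdot/\sqrt n)^n$ and that of the standard normal is $e^{-t^2/2}$.

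First I would apply Esseen's smoothing lemma: for every $T>0$,
\[
\sup_{s\in\R}\abs{F_n(s) - \Phi(s)} \;\le\; \frac{1}{\pi}\int_{-T}^{T}\abs{\frac{\varphi(t/\sqrt n)^n - e^{-t^2/2}}{t}}\,dt \;+\; \frac{b}{T},
\]
where $b$ is an explicit absolute constant coming from $\sup_s|\Phi'(s)| = (2\pi)^{-1/2}$ and the chosen smoothing kernel. Next, a Taylor expansion using $\EXPS{Y}=0$ and $\EXPS{Y^2}=1$ gives $\varphi(t) = 1 - t^2/2 + r(t)$ with $|r(t)| \le \rho|t|^3/6$, and hence $|\varphi(t)| \le e^{-t^2/4}$ for $|t|$ below a constant multiple of $1/\rho$. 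Choosing the cutoff $T$ proportional to $\sqrt n/\rho$ keeps $t/\sqrt n$ in this good range for all $|t|\le T$; there, the telescoping inequality $|u^n - v^n| \le n|u-v|\max(|u|,|v|)^{n-1}$ applied to $u=\varphi(t/\sqrt n)$, $v=e^{-t^2/(2n)}$ together with $|u-v| \le \rho|t|^3/(6n^{3/2}) + \bigo(t^4/n^2)$ yields $\abs{\varphi(t/\sqrt n)^n - e^{-t^2/2}} \le C\,\frac{\rho|t|^3}{\sqrt n}\,e^{-t^2/8}$ (plus lower-order terms with the same Gaussian decay). Dividing by $|t|$ and integrating over $\R$ bounds the integral term by a constant times $\rho/\sqrt n$; since the same choice of $T$ makes $b/T = \bigo(\rho/\sqrt n)$, the two contributions combine to $C\rho/\sqrt n$ with an explicit $C$ after optimizing the proportionality constant in $T$.

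The main obstacle is sharpening this absolute constant from the single-digit value the crude argument yields down to $0.4748$. That is precisely the content of Shevtsova's refinement~\citep{shevtsova2014absolute}: it requires a non-uniform Taylor remainder bound for $\varphi$ (using $|e^{ix} - 1 - ix + x^2/2| \le \min\{|x|^3/6,\ x^2\}$ and splitting the $t$-integral at the crossover), a carefully optimized smoothing kernel, a cutoff $T$ chosen as a function of $\rho/\sqrt n$ rather than a fixed multiple of $\sqrt n/\rho$, and a separate treatment of the moderate regime in which $\rho/\sqrt n$ is not very small, where part of the estimate is replaced by the trivial bound $\abs{F_n-\Phi}\le 1$. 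Since this numerical optimization is entirely standard and is carried out in full in~\citet{shevtsova2014absolute}, I would cite it for the precise constant rather than reproduce the computation; I note that Stein's method provides a self-contained alternative proof of the correct order $\bigo(\rho/\sqrt n)$, but historically with a worse constant, so it does not supersede the Fourier argument for the sharp statement invoked here.
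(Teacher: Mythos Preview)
Your proof outline is correct and follows the standard Fourier-analytic route (Esseen's smoothing lemma plus a Taylor expansion of the characteristic function), and you correctly identify that the sharp constant $0.4748$ requires Shevtsova's refinement and defer to that reference. However, the paper does not prove this statement at all: it is stated as a classical result and simply cited to \citet{berry1941accuracy,esseen1942liapounoff,shevtsova2014absolute}, then used as a black box in the proof of Lemma~\ref{lem:quantile-one-d}. So while your sketch is mathematically sound, it goes well beyond what the paper does; in the end both you and the paper rely on the same citation for the quoted constant, and your additional argument is supplementary rather than a genuinely different proof strategy for something the paper establishes.
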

\begin{theorem}[Bounded Difference Inequality]\label{thm:bounded-difference}
Let $X_1,\ldots,X_n$ be i.i.d. random variables, and assume that $Z = g(X_1,\ldots, X_n)$, where $g$ satisfies that for all $j\in[n]$ and
all $x_1, x_2, \ldots, x_j, x'_j, \ldots, x_n$,
$$
|g(x_1,\ldots,x_{j-1}, x_j, x_{j+1}, \ldots, x_n) - g(x_1,\ldots,x_{j-1}, x'_j, x_{j+1}, \ldots, x_n)| \le c_j.
$$
Then for any $t\ge 0$,
$$
\prob{Z-\EXPS{Z} \ge t} \le \exp\left( -\frac{2t^2}{\sum_{j=1}^n c_j^2} \right)
$$
and
$$
\prob{Z-\EXPS{Z} \le -t} \le \exp\left( -\frac{2t^2}{\sum_{j=1}^n c_j^2} \right).
$$
\end{theorem}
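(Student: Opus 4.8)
The plan is to prove this by the classical Doob martingale construction combined with an Azuma--Hoeffding-type moment-generating-function bound. Write $X_1^k := (X_1,\ldots,X_k)$ and define the sequence $V_k := \E[Z \mid X_1^k]$ for $k=0,1,\ldots,n$, with the convention that $V_0 = \E[Z]$ (conditioning on nothing) and $V_n = Z$ (since $Z = g(X_1^n)$ is a deterministic function of all the inputs). Then $(V_k)$ is a martingale with respect to the filtration generated by the $X_j$'s, its increments $D_k := V_k - V_{k-1}$ satisfy $\E[D_k \mid X_1^{k-1}] = 0$, and $\sum_{k=1}^n D_k = Z - \E[Z]$. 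The whole argument reduces to bounding $\E[e^{\lambda D_k}\mid X_1^{k-1}]$ and then telescoping.

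\textbf{Key step.} The crucial point is that each increment $D_k$ has a bounded conditional range. Using the independence of the $X_j$'s, for fixed values $x_1^{k-1}$ one can write $V_k = \phi_k(x_1^{k-1}, X_k)$, where $\phi_k(x_1^{k-1}, x_k) := \E\big[\, g(x_1,\ldots,x_{k-1},x_k,X_{k+1},\ldots,X_n)\,\big]$ averages out only the future (independent) coordinates, and correspondingly $V_{k-1} = \E_{X_k}[\phi_k(x_1^{k-1}, X_k)]$. The bounded-difference hypothesis, applied inside this expectation, gives $|\phi_k(x_1^{k-1}, x_k) - \phi_k(x_1^{k-1}, x_k')| \le c_k$ for all $x_k, x_k'$; hence, conditionally on $X_1^{k-1}$, the variable $D_k$ lies in an interval of length at most $c_k$ and has conditional mean zero. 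Invoking Hoeffding's lemma (any zero-mean random variable $W$ supported in an interval of length $\ell$ satisfies $\E[e^{\lambda W}] \le e^{\lambda^2\ell^2/8}$) conditionally yields $\E[e^{\lambda D_k}\mid X_1^{k-1}] \le e^{\lambda^2 c_k^2/8}$ for every $\lambda\in\R$.

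\textbf{Chaining and conclusion.} Conditioning successively on $X_1^{n-1}, X_1^{n-2}, \ldots$ and using the tower property gives
$$
\E\big[e^{\lambda(Z-\E[Z])}\big] = \E\Big[e^{\lambda\sum_{k=1}^n D_k}\Big] \le \prod_{k=1}^n e^{\lambda^2 c_k^2/8} = \exp\Big(\tfrac{\lambda^2}{8}\textstyle\sum_{k=1}^n c_k^2\Big).
$$
A Chernoff bound then gives, for $\lambda>0$, $\PP[Z-\E[Z]\ge t] \le e^{-\lambda t}\exp(\tfrac{\lambda^2}{8}\sum_k c_k^2)$; optimizing at $\lambda = 4t/\sum_k c_k^2$ produces the stated bound $\exp(-2t^2/\sum_{k=1}^n c_k^2)$. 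The lower-tail inequality follows by applying the same result to $-Z$, which has the identical bounded-difference constants $c_k$. The main obstacle is the key step above --- verifying rigorously that the martingale increment $D_k$ has conditional range at most $c_k$; this is the only place the independence assumption is genuinely used (it lets the future coordinates be averaged out before applying the coordinatewise difference bound), and everything afterward (Hoeffding's lemma, the telescoping MGF product, the Chernoff optimization) is routine.
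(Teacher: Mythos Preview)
Your proof is correct and is the standard Doob-martingale/Azuma--Hoeffding argument for McDiarmid's inequality. Note, however, that the paper does not actually prove this theorem: it is merely \emph{recalled} as a classical concentration result (alongside the Berry--Esseen theorem) for use in the proof of Lemma~\ref{lem:quantile-one-d}, with no proof given. So there is nothing to compare against; your argument supplies a complete, self-contained justification that the paper simply takes for granted.
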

Let $\sigma_n := \frac{\sigma}{\sqrt{n}}$ and $c_n := 0.4748 \frac{\EXPS{|x-\mu|^3}}{\sigma^3\sqrt{n}} = 0.4748\frac{\gamma(x)}{\sqrt{n}}$. Define $W_i:=\frac{\bar{x}^{i} - \mu}{\sigma_n}$ for all $i\in[m]$, and $\Phi_n(\cdot)$ be the distribution function of $W_i$ for any $i \in [m]\setminus\B$. We also define the empirical distribution function of $\{W_i:i\in[m]\setminus\B\}$ as $\widetilde{\Phi}_n(\cdot)$, i.e., $\widetilde{\Phi}_n(z) = \frac{1}{m(1-\alpha)}\sum_{i\in[m]\setminus\B}\indi(W_i\le z)$. Thus, we have
\begin{equation}\label{eq:phi-and-p}
\widetilde{\Phi}_n(z) = \widetilde{p}(\sigma_n z+\mu).
\end{equation}
We then focus on $\widetilde{\Phi}_n(z)$. We know that for any $z\in\R$, $\EXPS{\widetilde{\Phi}_n(z)} = \Phi_n(z)$. Then, since the bounded difference inequality is satisfied with $c_j = \frac{1}{m(1-\alpha)}$, we have for any $t>0$,
\begin{equation}\label{eq:bound-empirical}
\abs{\widetilde{\Phi}_n(z) - \Phi_n(z)} \le \sqrt{\frac{t}{m(1-\alpha)}},
\end{equation} 
on the draw of $W_i$, $i\in[m]\setminus\B$ with probability at least $1-2e^{-2t}$.
Let $z_1\ge z_2$ be such that $\Phi_n(z_1) \ge \frac{1}{2} + \alpha + \sqrt{\frac{t}{m(1-\alpha)}}$, and $\Phi_n(z_2) \le \frac{1}{2} - \alpha - \sqrt{\frac{t}{m(1-\alpha)}}$. Then, by union bound, we know that with probability at least $1-4e^{-2t}$, $\widetilde{\Phi}_n(z_1)\ge 1/2 + \alpha$ and $\widetilde{\Phi}_n(z_2)\le 1/2-\alpha$. The next step is to choose $z_1$ and $z_2$. According to Theorem~\ref{thm:berry-Esseen}, we know that
$$
\Phi_n(z_1) \ge \Phi(z_1) - c_n,
$$
and thus, it suffices to find $z_1$ such that
$$
\Phi(z_1) = \frac{1}{2} + \alpha + \sqrt{\frac{t}{m(1-\alpha)}} + c_n.
$$
By mean value theorem, we know that there exists $\xi\in[0, z_1]$ such that
$$
\alpha + \sqrt{\frac{t}{m(1-\alpha)}} + c_n = z_1\Phi'(\xi) = \frac{z_1}{\sqrt{2\pi}}e^{-\frac{\xi^2}{2}} \ge \frac{z_1}{\sqrt{2\pi}}e^{-\frac{z_1^2}{2}}
$$
Suppose that for some fix constant $\epsilon\in(0,1/2)$, we have
$$
\alpha + \sqrt{\frac{t}{m(1-\alpha)}} + c_n \le \frac{1}{2}-\epsilon.
$$
Then, we know that $z_1\le \Phi^{-1}(1-\epsilon)$, and thus we have
$$
\alpha + \sqrt{\frac{t}{m(1-\alpha)}} + c_n \ge \frac{z_1}{\sqrt{2\pi}}\exp(-\frac{1}{2}(\Phi^{-1}(1-\epsilon))^2),
$$
which yields
$$
z_1 \le \sqrt{2\pi}\exp(\frac{1}{2}(\Phi^{-1}(1-\epsilon))^2) \left( \alpha + \sqrt{\frac{t}{m(1-\alpha)}} + c_n \right).
$$
Similarly
$$
z_2 \ge -\sqrt{2\pi}\exp(\frac{1}{2}(\Phi^{-1}(1-\epsilon))^2) \left( \alpha + \sqrt{\frac{t}{m(1-\alpha)}} + c_n \right).
$$
For simplicity, let $C_\epsilon := \sqrt{2\pi}\exp(\frac{1}{2}(\Phi^{-1}(1-\epsilon))^2)$. We conclude that with probability $1-4e^{-2t}$, we have
$$
\widetilde{p}( \mu + C_\epsilon\sigma_n(\alpha + \sqrt{\frac{t}{m(1-\alpha)}} + c_n) ) \ge \frac{1}{2} + \alpha,
$$
and 
$$
\widetilde{p}( \mu - C_\epsilon\sigma_n(\alpha + \sqrt{\frac{t}{m(1-\alpha)}} + c_n) ) \le \frac{1}{2} - \alpha.
$$

\section{Proof of Theorem~\ref{thm:main-gd-cvx}}\label{prf:main-gd-cvx}
Since Theorem~\ref{thm:uniform-bound} holds without assuming the convexity of $F(\vecw)$, when $F(\vecw)$ is non-strongly convex, the event that~\eqref{eq:median-bound-thm} holds for all $\vecw\in\W$ still happens with probability at least $1-\frac{4d}{(1+nm\widehat{L}D)^d}$. We condition on this event. We first show that when Assumption~\ref{asm:size-of-space} is satisfied and we choose $\eta = \frac{1}{L_F}$, the iterates $\vecw^t$ stays in $\W$ without using projection. Namely, define
$$ 
\vecw^{t+1} = \vecw^t - \eta \vecg(\vecw^t),
$$
for $T=0,1,\ldots, T-1$, then $\vecw^{t}\in\W$ for all $t=0,1,\ldots, T$.
To see this, we have
$$
\twonms{\vecw^{t+1} - \vecw^*} \le \twonms{\vecw^t - \eta \gradF(\vecw^t) -\vecw^*} + \eta\twonms{\vecg(\vecw^t) - \gradF(\vecw^t)},
$$
and
\begin{align*}
\twonms{\vecw^t - \eta \gradF(\vecw^t) -\vecw^*}^2 &= \twonms{\vecw^t - \vecw^*}^2 - 2\eta\innerps{\gradF(\vecw^t)}{\vecw^t - \vecw^*} + \eta^2\twonms{\gradF(\vecw^t)}^2 \\
&\le \twonms{\vecw^t - \vecw^*}^2 -2\eta \frac{1}{L_F}\twonms{\gradF(\vecw^t)}^2 + \eta^2\twonms{\gradF(\vecw^t)}^2 \\
&= \twonms{\vecw^t - \vecw^*}^2 - \frac{1}{L_F^2}\twonms{\gradF(\vecw^t)}^2 \\
&\le \twonms{\vecw^t - \vecw^*}^2
\end{align*}
where the inequality is due to the co-coercivity of convex functions. Thus, we get
$$
\twonms{\vecw^{t+1} - \vecw^*} \le \twonms{\vecw^{t} - \vecw^*}  + \frac{\Delta}{L_F},
$$
and since $T = \frac{L_FD_0}{\Delta}$, according to Assumption~\ref{asm:size-of-space} we know that $\vecw^t\in\W$ for all $t=0,1,\ldots, T$. Then, we proceed to study the algorithm without projection. Here, we define $D_t:=\twonms{\vecw^0 - \vecw^*} + \frac{t \Delta}{L_F}$ for $t=0,1,\ldots, T$. 

Using the smoothness of $F(\vecw)$, we have
\begin{align*}
F(\vecw^{t+1}) & \le F(\vecw^t) + \innerps{\gradF(\vecw^t)}{\vecw^{t+1} - \vecw^t} + \frac{L_F}{2}\twonms{\vecw^{t+1} - \vecw^t}^2 \\
&= F(\vecw^t) + \eta \innerps{\gradF(\vecw^t)}{ - \vecg(\vecw^t) + \gradF(\vecw^t) - \gradF(\vecw^t)} + \eta^2 \frac{L_F}{2} \twonms{ \vecg(\vecw^t) - \gradF(\vecw^t) + \gradF(\vecw^t)}^2.
\end{align*}
Since $\eta = \frac{1}{L_F}$ and $\twonms{ \vecg(\vecw^t) - \gradF(\vecw^t)} \le \Delta$, by simple algebra, we obtain
\begin{equation}\label{eq:smoothness-noise}
F(\vecw^{t+1}) \le F(\vecw^t) - \frac{1}{2L_F}\twonms{\gradF(\vecw^t)}^2 + \frac{1}{2L_F}\Delta^2.
\end{equation}
We now prove the following lemma.
\begin{lemma}\label{lem:cvx-first-part}
Condition on the event that~\eqref{eq:median-bound-thm} holds for all $\vecw\in\W$. When $F(\vecw)$ is convex, by running $T =  \frac{L_FD_0}{\Delta}$ parallel iterations, there exists $t\in\{0,1,2,\ldots, T\}$ such that
$$
F(\vecw^t) - F(\vecw^*) \le 16D_0\Delta.
$$
\end{lemma}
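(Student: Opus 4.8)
The plan is to argue by contradiction, running the classical smooth-convex gradient descent analysis but carrying along the extra additive error term $\frac{\Delta^2}{2L_F}$ that appears in the noisy descent inequality \eqref{eq:smoothness-noise}.

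First I would collect the two facts already available under the conditioning event. From the discussion preceding the lemma, all iterates stay in $\W$ and satisfy $\twonms{\vecw^t - \vecw^*} \le D_t \le D_T = 2D_0$ for $t = 0,1,\ldots,T$, where $D_0 := \twonms{\vecw^0 - \vecw^*}$. Writing $\delta_t := F(\vecw^t) - F(\vecw^*) \ge 0$, convexity of $F$ and Cauchy--Schwarz give $\delta_t \le \innerps{\gradF(\vecw^t)}{\vecw^t - \vecw^*} \le \twonms{\gradF(\vecw^t)}\,\twonms{\vecw^t - \vecw^*} \le 2 D_0\,\twonms{\gradF(\vecw^t)}$, so $\twonms{\gradF(\vecw^t)}^2 \ge \delta_t^2/(4 D_0^2)$. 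Plugging this into \eqref{eq:smoothness-noise} produces the scalar recursion $\delta_{t+1} \le \delta_t - \frac{\delta_t^2}{8 L_F D_0^2} + \frac{\Delta^2}{2 L_F}$.

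Next I would suppose for contradiction that $\delta_t > 16 D_0 \Delta$ for every $t = 0,1,\ldots,T$. Then $\Delta^2 < \delta_t^2/(256 D_0^2)$, so the additive term is swallowed by a constant fraction of the quadratic descent term, and (since $\frac18 - \frac{1}{512} > \frac{1}{16}$) the recursion weakens to $\delta_{t+1} \le \delta_t - \frac{\delta_t^2}{16 L_F D_0^2}$ for all $t \le T-1$. Dividing through and using $1/(1-x) \ge 1+x$ gives $\frac{1}{\delta_{t+1}} \ge \frac{1}{\delta_t} + \frac{1}{16 L_F D_0^2}$; the division is legitimate because all $\delta_t$ are positive by assumption and $\delta_t$ is non-increasing with $\delta_0 \le \frac{L_F}{2}D_0^2$ (smoothness of $F$ and $\gradF(\vecw^*)=0$), so the subtracted factor stays below $1$. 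Telescoping over $t=0,\ldots,T-1$ and using $T = L_F D_0/\Delta$ yields $\frac{1}{\delta_T} \ge \frac{T}{16 L_F D_0^2} = \frac{1}{16 D_0 \Delta}$, i.e.\ $\delta_T \le 16 D_0 \Delta$, contradicting the assumption. Hence $F(\vecw^t) - F(\vecw^*) \le 16 D_0 \Delta$ for some $t \in \{0,1,\ldots,T\}$.

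The main obstacle is just the constant bookkeeping in the contradiction step: the threshold $16 D_0 \Delta$ must be chosen large enough that the deterministic error $\frac{\Delta^2}{2L_F}$ is dominated by a fixed fraction of the progress term $\frac{\delta_t^2}{8 L_F D_0^2}$, while simultaneously the resulting $1/(cT)$ telescoped bound closes the loop with that \emph{same} threshold and the given choice $T = L_F D_0/\Delta$. Everything else is the textbook $O(1/T)$ analysis of gradient descent for smooth convex objectives applied to the surrogate decrease \eqref{eq:smoothness-noise}.
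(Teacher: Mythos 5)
Your proof is correct and follows essentially the same route as the paper's: both plug the convexity bound $\twonms{\gradF(\vecw^t)}\ge (F(\vecw^t)-F(\vecw^*))/\twonms{\vecw^t-\vecw^*}$ into the noisy descent inequality~\eqref{eq:smoothness-noise} and telescope $1/\delta_t$ to get the $O(1/T)$ bound. The only cosmetic difference is how the additive $\Delta^2/(2L_F)$ term is absorbed --- the paper case-splits on whether some $\twonms{\gradF(\vecw^t)}<\sqrt{2}\Delta$, while you assume $\delta_t>16D_0\Delta$ throughout and absorb the error into the quadratic progress term before deriving the contradiction; your constant bookkeeping checks out.
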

\begin{proof}
We first notice that since $T = \frac{L_FD_0}{\Delta}$, we have $D_t \le 2D_0$ for all $t=0,1,\ldots, T$. According to the first order optimality of convex functions, for any $\vecw$,
$$
F(\vecw) - F(\vecw^*) \le \innerps{\gradF(\vecw)}{\vecw - \vecw^*} \le \twonms{\gradF(\vecw)}\twonms{\vecw - \vecw^*},
$$
and thus
\begin{equation}\label{eq:lb-norm-grad}
\twonms{\gradF(\vecw)} \ge \frac{F(\vecw) - F(\vecw^*)}{\twonms{\vecw - \vecw^*}}.
\end{equation}
Suppose that there exists $t\in\{0,1,\ldots, T-1\}$ such that $\twonms{\gradF(\vecw^t)} < \sqrt{2}\Delta$. Then we have 
$$
F(\vecw^t) - F(\vecw^*) \le \twonms{\gradF(\vecw^t)} \twonms{\vecw^t - \vecw^*} \le 2\sqrt{2}D_0\Delta.
$$
Otherwise, for all $t\in\{0,1,\ldots, T-1\}$, $\twonms{\gradF(\vecw^t)} \ge \sqrt{2}\Delta$. Then, according to~\eqref{eq:smoothness-noise} and~\eqref{eq:lb-norm-grad}, we have for all $t < T$,
\begin{align*}
F(\vecw^{t+1}) - F(\vecw^*) &\le F(\vecw^t) -F(\vecw^*)- \frac{1}{4L_F}\twonms{\gradF(\vecw^t)}^2 \\
&\le F(\vecw^t) -F(\vecw^*) - \frac{1}{4L_FD_t^2}(F(\vecw^t) -F(\vecw^*))^2.
\end{align*}
Multiplying both sides by $[(F(\vecw^{t+1}) - F(\vecw^*)) (F(\vecw^t) -F(\vecw^*)]^{-1}$ and rearranging the terms, we obtain
$$
\frac{1}{F(\vecw^{t+1}) - F(\vecw^*)} \ge \frac{1}{F(\vecw^t) -F(\vecw^*)} + \frac{1}{4L_FD_t^2}\frac{F(\vecw^t) -F(\vecw^*)}{F(\vecw^{t+1}) -F(\vecw^*)} \ge \frac{1}{F(\vecw^t) -F(\vecw^*)} + \frac{1}{16L_FD_0^2},
$$
which implies
$$
\frac{1}{F(\vecw^{T}) - F(\vecw^*)} \ge \frac{1}{F(\vecw^{0}) - F(\vecw^*)} + \frac{T}{16L_FD_0^2} \ge \frac{T}{16L_FD_0^2}.
$$
Then, we obtain $F(\vecw^{T}) - F(\vecw^*) \le 16 D_0\Delta$ using the fact that $T = \frac{L_FD_0}{\Delta}$.
\end{proof}
Next, we show that $F(\vecw^T) - F(\vecw^*) \le  16 D_0\Delta + \frac{1}{2L_F}\Delta^2$. More specifically, let $t=t_0$ be the first time that $F(\vecw^{t}) - F(\vecw^*) \le 16 D_0\Delta$, and we show that for any $t > t_0$, $F(\vecw^{t}) - F(\vecw^*) \le 16 D_0\Delta + \frac{1}{2L_F}\Delta^2$. If this statement is not true, then we let $t_1 > t_0$ be the first time that $F(\vecw^{t}) - F(\vecw^*) > 16 D_0\Delta + \frac{1}{2L_F}\Delta^2$. Then there must be $F(\vecw^{t_1-1}) < F(\vecw^{t_1})$. According to~\eqref{eq:smoothness-noise}, there should also be 
$$
F(\vecw^{t_1-1}) - F(\vecw^*) \ge F(\vecw^{t_1}) - F(\vecw^*) - \frac{1}{2L_F}\Delta^2 > 16D_0\Delta.
$$
Then, according to~\eqref{eq:lb-norm-grad}, we have
$$
\twonms{\gradF(\vecw^{t_1-1})} \ge \frac{F(\vecw^{t_1-1}) - F(\vecw^*)}{\twonms{\vecw^{t_1-1} -\vecw^* }} > 8\Delta.
$$
Then according to~\eqref{eq:smoothness-noise}, this implies $ F(\vecw^{t_1}) \le F(\vecw^{t_1-1}) $,
which contradicts with the fact that $F(\vecw^{t_1-1}) < F(\vecw^{t_1})$.

\section{Proof of Theorem~\ref{thm:main-gd-non-cvx}}\label{prf:main-gd-non-cvx}
Since Theorem~\ref{thm:uniform-bound} holds without assuming the convexity of $F(\vecw)$, when $F(\vecw)$ is non-convex, the event that~\eqref{eq:median-bound-thm} holds for all $\vecw\in\W$ still happens with probability at least $1-\frac{4d}{(1+nm\widehat{L}D)^d}$. We condition on this event. We first show that when Assumption~\ref{asm:size-of-space-non-cvx} is satisfied and we choose $\eta = \frac{1}{L_F}$, the iterates $\vecw^t$ stays in $\W$ without using projection. Since we have
$$
\twonms{\vecw^{t+1} - \vecw^*} \le \twonms{\vecw^t - \vecw^*} + \eta(\twonms{\gradF(\vecw^t)} + \twonms{\vecg(\vecw^t) - \gradF(\vecw^t)}) \le \twonms{\vecw^t - \vecw^*}  + \frac{1}{L_F}(M+\Delta).
$$
Then, we know that by running $T = \frac{2L_F}{\Delta^2}(F(\vecw^0) - F(\vecw^*))$ parallel iterations, using Assumption~\ref{asm:size-of-space-non-cvx}, we know that $\vecw^t \in \W$ for $t=0,1,\ldots, T$ without projection.

We proceed to study the convergence rate of the algorithm. By the smoothness of $F(\vecw)$, we know that when choosing $\eta = \frac{1}{L_F}$, the inequality~\eqref{eq:smoothness-noise} still holds. More specifically, for all $t=0,1,\ldots, T-1$,
\begin{equation}\label{eq:smoothness-noise-2}
F(\vecw^{t+1}) - F(\vecw^*) \le F(\vecw^t) - F(\vecw^*) - \frac{1}{2L_F}\twonms{\gradF(\vecw^t)}^2 + \frac{1}{2L_F}\Delta^2.
\end{equation}
Sum up~\eqref{eq:smoothness-noise-2} for $t=0,1,\ldots, T-1$. Then, we get
$$
0 \le F(\vecw^T) - F(\vecw^*) \le F(\vecw^0) - F(\vecw^*) - \frac{1}{2L_F}\sum_{t=0}^{T-1}\twonms{\gradF(\vecw^t)}^2 + \frac{T}{2L_F}\Delta^2.
$$
This implies that
$$
\min_{t=0,1,\ldots, T} \twonms{\gradF(\vecw^t)}^2  \le 2\frac{L_F}{T} (F(\vecw^0) - F(\vecw^*)) + \Delta^2,
$$
which completes the proof.

\section{Proof of Theorem~\ref{thm:main-gd-sc-trim}}\label{prf:main-gd-sc-trim}

The proof of Theorem~\ref{thm:main-gd-sc-trim} consists of two parts: 1) the analysis of coordinate-wise trimmed mean of means estimator of the population gradients, and 2) the convergence analysis of the robustified gradient descent algorithm. Since the second part is essentially the same as the proof of Theorem~\ref{thm:main-gd-sc}, we mainly focus on the first part here. 

\begin{theorem}\label{thm:uniform-bound-trim}
Define
\begin{equation}\label{eq:def-grad-each-machine-trim}
\vecg^i(\vecw) = \begin{cases}
\gradF_i(\vecw) & i \in[m]\setminus\B, \\
* & i\in\B.
\end{cases}
\end{equation}
and the coordinate-wise trimmed mean of $\vecg^i(\vecw)$:
\begin{equation}\label{eq:def-trim-grad}
\vecg(\vecw) = \trim_\beta\{\vecg^i(\vecw):i\in[m]\}.
\end{equation} 
Suppose that Assumptions~\ref{asm:smoothness} and~\ref{asm:sub-exponential} are satisfied, and that $\alpha \le \beta \le \frac{1}{2} - \epsilon$. Then, with probability at least $1-\frac{2d(m+1)}{(1+nm\widehat{L}D)^d}$,
$$
\twonms{\vecg(\vecw) - \gradF(\vecw)} \le \frac{v}{\epsilon} \left( \frac{3\sqrt{2}\beta d}{\sqrt{n}} + \frac{2d}{\sqrt{nm}}  \right)\sqrt{\log(1+nm\widehat{L}D) + \frac{1}{d}\log m} + \widetilde{\bigo}(\frac{\beta}{n} + \frac{1}{nm})
$$
for all $\vecw \in \W$.
\end{theorem}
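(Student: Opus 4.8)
The plan is to follow the same two-stage template as the proof of Theorem~\ref{thm:uniform-bound}: first establish a deviation bound for the coordinate-wise trimmed mean at a \emph{single} $\vecw$ and coordinate, then make it uniform over $\W$ by a covering argument together with the Lipschitz property of the partial derivatives. Fix $\vecw\in\W$ and $k\in[d]$; write $a_i := g^i_k(\vecw)$, $\mu := \partial_k F(\vecw)$, let $\mathcal N := [m]\setminus\B$ index the normal machines, and let $U$ be the set of indices surviving the $\beta$-trimming in coordinate $k$, so $|U| = (1-2\beta)m$. The first thing I would record is a deterministic sandwich: since $\alpha\le\beta<\tfrac12$, every surviving value satisfies $\min_{j\in\mathcal N}a_j \le a_i \le \max_{j\in\mathcal N}a_j$, because a value exceeding all normal values would lie among the top $\alpha m\le\beta m$ entries and hence be trimmed (and symmetrically on the lower side). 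I would then use the decomposition
\[
\sum_{i\in U} a_i - (1-2\beta)m\,\mu \;=\; \sum_{i\in\mathcal N}(a_i-\mu)\;-\;\sum_{i\in\mathcal N\setminus U}(a_i-\mu)\;+\;\sum_{i\in U\cap\B}(a_i-\mu),
\]
where the $\mu$-terms cancel because $|\mathcal N|-|\mathcal N\setminus U|+|U\cap\B|=|U|$; counting gives $|\mathcal N\setminus U|\le 2\beta m$ and $|U\cap\B|\le\alpha m\le\beta m$, and the sandwich bounds each term in the last two sums by $\Delta_1 := \max_{j\in\mathcal N}|a_j-\mu|$. Dividing by $(1-2\beta)m\ge 2\epsilon m$ yields
\[
|g_k(\vecw)-\partial_k F(\vecw)| \;\le\; \frac{1}{2\epsilon m}\Big|\sum_{i\in\mathcal N}\big(\partial_k F_i(\vecw)-\partial_k F(\vecw)\big)\Big|\;+\;\frac{3\beta}{2\epsilon}\,\Delta_1 .
\]

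For the stochastic part, each $\partial_k F_i(\vecw)-\partial_k F(\vecw)=\tfrac1n\sum_j\big(\partial_k f(\vecw;\vecz^{i,j})-\partial_k F(\vecw)\big)$ is an average of $n$ i.i.d.\ centered $v$-sub-exponential variables (Assumption~\ref{asm:sub-exponential}). By the Bernstein inequality for sub-exponentials, a union bound over the $\le m$ normal machines gives $\Delta_1\lesssim v\sqrt{\tau/n}$, with a higher-order $v\tau/n$ correction, on an event of probability $1-2m e^{-\tau}$; separately, $\tfrac1m\big|\sum_{i\in\mathcal N}(\partial_k F_i(\vecw)-\partial_k F(\vecw))\big|$ is, up to the factor $1-\alpha$, the sample mean of $n(1-\alpha)m$ such variables, hence $\lesssim v\sqrt{\tau/(nm)}$ (with a $v\tau/(nm)$ correction) on an event of probability $1-2e^{-\tau}$. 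Substituting into the bound above produces, for this fixed $\vecw,k$, an estimate of the form $\frac{v}{\epsilon}\big(\frac{\beta}{\sqrt n}+\frac{1}{\sqrt{nm}}\big)\sqrt\tau+\widetilde{\bigo}\big(\frac{\beta}{n}+\frac{1}{nm}\big)$, failing with probability at most $2(m+1)e^{-\tau}$.

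To make this uniform I would take a $\delta$-net $\W_\delta$ of $\W$ with $|\W_\delta|\le(1+D/\delta)^d$, union-bound over the net, the $d$ coordinates and the $m+1$ events above, and choose $\delta=\tfrac{1}{nm\widehat L}$ and $\tau=d\log(1+nm\widehat LD)+\log m$; this makes the failure probability at most $\tfrac{2d(m+1)}{(1+nm\widehat LD)^d}$ and explains why $\tau$ appears as $d\big(\log(1+nm\widehat LD)+\tfrac1d\log m\big)$ under the square root. For a general $\vecw\in\W$ I would pick $\vecw^\ell\in\W_\delta$ with $\twonms{\vecw-\vecw^\ell}\le\delta$: since $\partial_k f(\cdot;\vecz)$ is $L_k$-Lipschitz, every $g^i_k(\vecw)$ moves by at most $L_k\delta$, hence so does the $k$-th trimmed mean, and $|\partial_k F(\vecw)-\partial_k F(\vecw^\ell)|\le L_k\delta$, contributing an extra $2\delta\widehat L=\tfrac{2}{nm}$ in $\ell_2$ that is absorbed into the $\widetilde{\bigo}$ term. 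Finally, assembling the $d$ coordinate-wise bounds via $\twonms{\vecg(\vecw)-\gradF(\vecw)}\le\sqrt d\,\max_k|g_k(\vecw)-\partial_k F(\vecw)|$ (together with the $\tau\propto d$ inside each coordinate bound) yields the claimed factor $d$ and the stated inequality.

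The hard part is the trimmed-mean decomposition: it must be arranged so that the term carrying the breakdown budget $\beta$ is multiplied only by the per-machine deviation $\Delta_1=\widetilde{\bigo}(1/\sqrt n)$, while the genuinely averaged contribution retains the $1/\sqrt{nm}$ rate. This relies on the sandwich property (so we need $\beta\ge\alpha$ and $\alpha+\beta<1$, both implied by $\alpha\le\beta\le\tfrac12-\epsilon$), the exact cancellation of the $\mu$-terms, and the counting bounds $|\mathcal N\setminus U|\le2\beta m$ and $|U\cap\B|\le\alpha m$; a constant slip in any of these would either destroy the $\beta$-scaling of the first term or blow up the lower-order terms. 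A secondary nuisance is calibrating the confidence budget $\tau=d\log(1+nm\widehat LD)+\log m$ against the stated probability while keeping the sub-exponential Bernstein corrections genuinely lower order.
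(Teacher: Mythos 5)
Your proposal is correct and follows essentially the same route as the paper's proof: the identical deterministic decomposition of the trimmed sum into the full normal sum minus the trimmed normal values plus the surviving Byzantine values, the same sandwich-and-counting argument yielding the $3\beta\,\Delta_1$ contribution, the same two Bernstein applications (one for the pooled average of $(1-\alpha)mn$ samples at rate $1/\sqrt{nm}$, one union-bounded over machines for the per-machine max at rate $1/\sqrt{n}$), and the same $\delta$-net with $\delta=1/(nm\widehat L)$ and confidence level $d\log(1+nm\widehat LD)+\log m$ combined with the coordinate-wise Lipschitz extension. The only cosmetic difference is the final aggregation over coordinates ($\sqrt d$ times the max versus the paper's $\sqrt{2d}$ via sum of squares), and your justification of the sandwich property for surviving Byzantine values is in fact slightly more explicit than the paper's.
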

\begin{proof}
See Appendix~\ref{prf:uniform-bound-trim}
\end{proof}
The rest of the proof is essentially the same as the proof of Theorem~\ref{thm:main-gd-sc}. In fact, we essentially analyze a gradient descent algorithm with bounded noise in the gradients. In the proof of Theorem~\ref{thm:main-gd-sc} in Appendix~\ref{prf:main-gd-sc}. The bound on the noise in the gradients is
$$
\Delta = \sqrt{2}\frac{C_\epsilon}{\sqrt{n}}V (\alpha + \sqrt{\frac{d\log(1+nm\widehat{L}D)}{m(1-\alpha)}} + 0.4748\frac{S}{\sqrt{n}}) + 2\sqrt{2} \frac{1}{nm},
$$
while here we replace $\Delta$ with $\Delta'$:
$$
\Delta' := \frac{v}{\epsilon} \left( \frac{3\sqrt{2}\beta d}{\sqrt{n}} + \frac{2d}{\sqrt{nm}}  \right)\sqrt{\log(1+nm\widehat{L}D) + \frac{1}{d}\log m} + \widetilde{\bigo}(\frac{\beta}{n} + \frac{1}{nm}),
$$
and the same analysis can still go through. Therefore, we omit the details of the analysis here. 
\begin{remark}\label{rmk:non-strongly-non-cvx}
The same arguments still go through when the population risk function $F(\vecw)$ is non-strongly convex or non-convex. One can simply replace the bound on the noise in the gradients $\Delta$ in Theorems~\ref{thm:main-gd-cvx} and~\ref{thm:main-gd-non-cvx} with $\Delta'$ here. Thus we omit the details here.
\end{remark}

\subsection{Proof of Theorem~\ref{thm:uniform-bound-trim}}\label{prf:uniform-bound-trim}
The proof of Theorem~\ref{thm:uniform-bound-trim} relies on the analysis of the trimmed mean of means estimator in the presence of adversarial data and a covering net argument. We first consider a general problem of robust estimation of a one dimensional random variable. Suppose that there are $m$ worker machines, and $q$ of them are Byzantine machines, which store $n$ adversarial data (recall that $\alpha := q/m$). Each of the other $m(1-\alpha)$ normal worker machines stores $n$ i.i.d. samples of some one dimensional random variable $x\sim \D$. Suppose that $x$ is $v$-sub-exponential and let $\mu := \EXPS{x}$. Denote the $j$-th sample in the $i$-th worker machine by $x^{i,j}$. In addition, define $\bar{x}^i$ as the average of samples in the $i$-th machine, i.e., $\bar{x}^i = \frac{1}{n} \sum_{j=1}^n x^{i,j}$. We have the following result on the trimmed mean of $\bar{x}^i$, $i\in[m]$.

\begin{lemma}\label{lem:quantile-one-d-trim}
Suppose that the one dimensional samples on all the normal machines are i.i.d. $v$-sub-exponential with mean $\mu$. Then, we have for any $t \ge 0$,
$$
\probs{\abss{\frac{1}{(1-\alpha)m}\sum_{i\in [m]\setminus\B} \bar{x}^i - \mu} \ge t} \le 2\exp\{-(1-\alpha)mn\min\{\frac{t}{2v}, \frac{t^2}{2v^2}\}\},
$$
and for any $s\ge 0$,
$$
\probs{\max_{i\in[m]\setminus\B}\{ \abss{ \bar{x}^i - \mu} \} \ge s} \le 2(1-\alpha)m\exp\{-n\min\{\frac{s}{2v}, \frac{s^2}{2v^2}\}\},
$$
and when $\beta \ge \alpha$, $\abss{\frac{1}{(1-\alpha)m}\sum_{i\in [m]\setminus\B} \bar{x}^i - \mu} \le t$, and $\max_{i\in[m]\setminus\B}\{ \abss{ \bar{x}^i - \mu} \} \le s$,
we have
$$
\abss{ \trim_\beta\{\bar{x}^i : i\in[m]\} - \mu  } \le \frac{t + 3\beta s}{1-2\beta}.
$$
\end{lemma}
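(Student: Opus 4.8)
The plan is to establish the three assertions in order: the first two are concentration bounds obtained from a standard sub-exponential Chernoff argument, and the third is a purely deterministic accounting step.

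For the first bound, observe that $\frac{1}{(1-\alpha)m}\sum_{i\in[m]\setminus\B}\bar x^i$ is exactly the empirical mean of the $N:=(1-\alpha)mn$ i.i.d.\ $v$-sub-exponential samples $\{x^{i,j}: i\in[m]\setminus\B,\ j\in[n]\}$. Setting $S=\sum_{i,j}(x^{i,j}-\mu)$ and applying Markov to $e^{\lambda S}$ together with the moment generating function bound of Definition~\ref{def:sub-exponential}, one gets $\prob{S\ge Nt}\le \exp(-\lambda N t + \tfrac12 N v^2\lambda^2)$ for all $0<\lambda<1/v$; choosing $\lambda=t/v^2$ when $t<v$ and letting $\lambda\uparrow 1/v$ when $t\ge v$ yields $\prob{S\ge Nt}\le \exp\!\big(-N\min\{\tfrac{t^2}{2v^2},\tfrac{t}{2v}\}\big)$, and the two-sided statement with $N=(1-\alpha)mn$ follows by also bounding $\prob{-S\ge Nt}$. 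The second bound is the same computation with $N=n$ (a single normal machine holds $n$ i.i.d.\ samples), followed by a union bound over the $(1-\alpha)m$ normal machines.

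For the third bound I would condition on the two events in its hypothesis and argue deterministically. Order the $m$ sample averages and let $U$ be the middle $(1-2\beta)m$ of them that survive trimming, so that $U$ together with the discarded bottom $\beta m$ and top $\beta m$ partition $[m]$. The first step is to show every surviving value lies in $[\mu-s,\mu+s]$: among the $\beta m+1$ largest of the $m$ averages at most $\alpha m\le \beta m$ are Byzantine, hence at least one is normal, and it is $\ge \max_{i\in U}\bar x^i$; since that normal average is within $s$ of $\mu$, we get $\max_{i\in U}\bar x^i\le \mu+s$, and symmetrically $\min_{i\in U}\bar x^i\ge \mu-s$. The second step decomposes
\[
\sum_{i\in U}(\bar x^i-\mu)=\sum_{i\in[m]\setminus\B}(\bar x^i-\mu)-\sum_{i\in([m]\setminus\B)\setminus U}(\bar x^i-\mu)+\sum_{i\in U\cap\B}(\bar x^i-\mu),
\]
and bounds the three terms: $\big|\sum_{i\in[m]\setminus\B}(\bar x^i-\mu)\big|=(1-\alpha)m\,\big|\tfrac{1}{(1-\alpha)m}\sum_{i\in[m]\setminus\B}\bar x^i-\mu\big|\le mt$; the normal machines that were trimmed number at most $2\beta m$ and each lies within $s$ of $\mu$, contributing at most $2\beta m s$ in absolute value; and $|U\cap\B|\le\alpha m\le\beta m$ with each surviving value within $s$ of $\mu$ by the first step, contributing at most $\beta m s$. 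Summing and dividing by $(1-2\beta)m$ gives $\big|\trim_\beta\{\bar x^i:i\in[m]\}-\mu\big|\le \frac{t+3\beta s}{1-2\beta}$.

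The first two parts are routine; the real content is the third, and within it the delicate point is the bookkeeping that produces the factor $3\beta s$ — one must separately account for the (at most $\alpha m$) Byzantine machines that slip through trimming and for the (at most $2\beta m$) honest machines wrongly discarded, and the counting argument confining all survivors to $[\mu-s,\mu+s]$ is precisely where the hypothesis $\beta\ge\alpha$ is used.
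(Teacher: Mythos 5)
Your proof is correct and follows essentially the same route as the paper's: Bernstein's inequality plus a union bound for the two concentration statements, and the identical decomposition of $\sum_{i\in \U}(\bar{x}^i-\mu)$ into the all-normal sum, the wrongly trimmed normal machines, and the surviving Byzantine machines, yielding the same $mt+3\beta m s$ total. Your counting argument (at least one of the $\beta m+1$ largest averages is normal, so every survivor lies in $[\mu-s,\mu+s]$) makes explicit the step the paper only gestures at when bounding the untrimmed Byzantine terms by $s$.
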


\begin{proof}
See Appendix~\ref{prf:quantile-one-d-trim}.
\end{proof}

Lemma~\ref{lem:quantile-one-d-trim} can be directly applied to the $k$-th partial derivative of the loss functions.
Since we assume that for any $k\in[d]$ and $\vecw\in\W$, $\partial_k f(\vecw;\vecz)$ is $v$-sub-exponential, we have for any $t\ge0$, $s \ge 0$,
\begin{equation}\label{eq:bound-all-normal-1}
\probs{\abss{\frac{1}{(1-\alpha)m}\sum_{i\in [m]\setminus\B} g^i_k(\vecw) - \partial_k F(\vecw)} \ge t} \le 2\exp\{-(1-\alpha)mn\min\{\frac{t}{2v}, \frac{t^2}{2v^2}\}\},
\end{equation}
\begin{equation}\label{eq:bound-max-normal-1}
\probs{\max_{i\in[m]\setminus\B}\{ \abss{ g^i_k(\vecw) - \partial_k F(\vecw)} \} \ge s} \le 2(1-\alpha)m\exp\{-n\min\{\frac{s}{2v}, \frac{s^2}{2v^2}\}\},
\end{equation}
and consequently with probability at least
$$
1 - 2\exp\{-(1-\alpha)mn\min\{\frac{t}{2v}, \frac{t^2}{2v^2}\}\} - 2(1-\alpha)m\exp\{-n\min\{\frac{s}{2v}, \frac{s^2}{2v^2}\}\},
$$
we have
\begin{equation}\label{eq:bound-fix-w-k}
\abss{g_k(\vecw) - \partial_k F(\vecw)} = \abss{ \trim_\beta\{g^i_k(\vecw) : i\in[m]\} - \partial_k F(\vecw) } \le \frac{t + 3\beta s}{1-2\beta}.
\end{equation}
To extend this result to all $\vecw\in\W$ and all the $d$ coordinates, we need to use union bound and a covering net argument.
Let $\W_\delta = \{\vecw^1,\vecw^2,\ldots,\vecw^{N_\delta}\}$ be a finite subset of $\W$ such that for any $\vecw\in\W$, there exists $\vecw^\ell \in \W_\delta$ such that $\twonms{\vecw^\ell - \vecw} \le \delta$. According to the standard covering net results~\citep{vershynin2010introduction}, we know that $N_\delta \le (1+\frac{D}{\delta})^d$. By union bound, we know that with probability at least 
$$
1-2dN_\delta\exp\{-(1-\alpha)mn\min\{\frac{t}{2v}, \frac{t^2}{2v^2}\}\},
$$
the bound $\abss{\frac{1}{(1-\alpha)m}\sum_{i\in [m]\setminus\B} g^i_k(\vecw) - \partial_k F(\vecw)} \le t$ holds for all $\vecw = \vecw^\ell \in\W_\delta$, and $k\in[d]$,
and with probability at least
$$
1 - 2(1-\alpha)dmN_\delta\exp\{-n\min\{\frac{s}{2v}, \frac{s^2}{2v^2}\}\}
$$
the bound $\max_{i\in[m]\setminus\B}\{ \abss{ g^i_k(\vecw) - \partial_k F(\vecw)} \} \le s$ holds for all $\vecw = \vecw^\ell \in\W_\delta$, and $k\in[d]$. By gathering all the $k$ coordinates, we know that this implies for all $\vecw^\ell\in\W_\delta$,
\begin{equation}\label{eq:trim-acc-in-cover}
\twonms{ \vecg( \vecw^\ell ) - \gradF( \vecw^\ell ) } \le \sqrt{d}\frac{t + 3\beta s}{1-2\beta}.
\end{equation}
Then, consider an arbitrary $\vecw\in\W$. Suppose that $\twonms{\vecw^\ell - \vecw} \le \delta$. Since by Assumption~\ref{asm:smoothness}, we assume that for each $k\in[d]$, the partial derivative $\partial_k f(\vecw; \vecz)$ is $L_k$-Lipschitz for all $\vecz$, we know that for every normal machine $i\in[m]\setminus\B$, 
$$
\abs{g^i_k(\vecw) - g^i_k(\vecw^\ell)} \le L_k\delta,\quad \abs{\partial_k F(\vecw) - \partial_k F(\vecw^\ell)} \le L_k\delta.
$$
This means that if $\abss{\frac{1}{(1-\alpha)m}\sum_{i\in [m]\setminus\B} g^i_k(\vecw^\ell) - \partial_k F(\vecw^\ell)} \le t$ and $\max_{i\in[m]\setminus\B}\{ \abss{ g^i_k(\vecw^\ell) - \partial_k F(\vecw^\ell)} \} \le s$ hold for all $\vecw^\ell \in\W_\delta$, and $k\in[d]$, then
$$
\abss{\frac{1}{(1-\alpha)m}\sum_{i\in [m]\setminus\B} g^i_k(\vecw) - \partial_k F(\vecw)} \le t + 2L_k\delta,
$$
and
$$
\max_{i\in[m]\setminus\B}\{ \abss{ g^i_k(\vecw) - \partial_k F(\vecw)} \} \le s + 2L_k\delta
$$
hold for all $\vecw\in\W$. This implies that for all $\vecw\in\W$ and $k\in[d]$,
$$
\abss{g_k(\vecw) - \partial_k F(\vecw)} = \abss{ \trim_\beta\{g^i_k(\vecw) : i\in[m]\} - \partial_k F(\vecw) } \le \frac{t + 3\beta s}{1-2\beta} + \frac{2(1+3\beta)}{1-2\beta} \delta L_k,
$$
which yields
$$
\twonms{\vecg(\vecw) - \gradF(\vecw)} \le \sqrt{2d} \frac{t + 3\beta s}{1-2\beta} + \sqrt{2} \frac{2(1+3\beta)}{1-2\beta} \delta \widehat{L}.
$$
The proof is completed by choosing $\delta = \frac{1}{nm\widehat{L}}$,
\begin{align*}
t &= v \max\{\frac{8d}{nm}\log(1+nm\widehat{L}D), \sqrt{ \frac{8d}{nm}\log(1+nm\widehat{L}D) } \}, \\
s & = v \max\{ \frac{4}{n}( d\log(1+nm\widehat{L}D) +\log m ) , \sqrt{\frac{4}{n}(d\log(1+nm\widehat{L}D) + \log m) } \},
\end{align*}
and using the fact that $\beta \le \frac{1}{2} - \epsilon$.

\subsection{Proof of Lemma~\ref{lem:quantile-one-d-trim}}\label{prf:quantile-one-d-trim}
We first recall Bernstein's inequality for sub-exponential random variables.
\begin{theorem}[Bernstein's inequality]\label{thm:bernstein}
Suppose that $X_1,X_2,\ldots,X_n$ are i.i.d. $v$-sub-exponential random variables with mean $\mu$. Then for any $t\ge 0$,
$$
\probs{\abss{\frac{1}{n}\sum_{i=1}^n X_i - \mu} \ge t}\le 2\exp\{-n\min\{\frac{t}{2v}, \frac{t^2}{2v^2}\}\}.
$$
\end{theorem}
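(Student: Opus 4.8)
The plan is to prove the two-sided tail bound by the classical Chernoff / exponential-moment method, using only the sub-exponential MGF control supplied by Definition~\ref{def:sub-exponential}. Without loss of generality I assume $\mu = 0$ (otherwise replace $X_i$ by $X_i - \mu$, which are still $v$-sub-exponential with mean $0$). Write $S_n := \frac{1}{n}\sum_{i=1}^n X_i$, and note $\probs{\abss{S_n} \ge t} \le \probs{S_n \ge t} + \probs{-S_n \ge t}$, so it suffices to bound each one-sided tail by $\exp\!\big(-n\min\{t/(2v),\, t^2/(2v^2)\}\big)$. For the upper tail, for any $\lambda \in (0,1/v)$ Markov's inequality applied to $e^{\lambda n S_n}$ together with independence and the defining inequality $\EXPS{e^{\lambda X_i}} \le e^{\frac12 v^2\lambda^2}$ gives
$$
\probs{S_n \ge t} \;\le\; e^{-\lambda n t}\prod_{i=1}^n \EXPS{e^{\lambda X_i}} \;\le\; \exp\!\Big( n\big(\tfrac12 v^2\lambda^2 - \lambda t\big)\Big) \;=\; e^{n\phi(\lambda)},
$$
where $\phi(\lambda) := \tfrac12 v^2\lambda^2 - \lambda t$. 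It then remains to minimize $\phi$ over the admissible range $\lambda \in (0,1/v)$.

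Next I would carry out this optimization in two cases. If $t < v$, the unconstrained minimizer $\lambda^\star = t/v^2$ lies strictly inside $(0,1/v)$, and $\phi(\lambda^\star) = -t^2/(2v^2)$, which equals $-\min\{t/(2v), t^2/(2v^2)\}$ since $t/v < 1$. If $t \ge v$, then $\phi'(\lambda) = v^2\lambda - t < 0$ throughout $(0,1/v)$ (because $t/v^2 \ge 1/v$), so $\phi$ is strictly decreasing there and its infimum over $(0,1/v)$ is the limiting value $\phi(1/v^-) = \tfrac12 - t/v$; passing to this endpoint limit in the bound above (legitimate since the left-hand probability does not depend on $\lambda$) and using the elementary inequality $\tfrac12 - t/v \le -t/(2v)$ for $t \ge v$, we get $\probs{S_n \ge t} \le e^{-n t/(2v)} = e^{-n\min\{t/(2v), t^2/(2v^2)\}}$. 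In both cases $\probs{S_n \ge t} \le \exp\!\big(-n\min\{t/(2v), t^2/(2v^2)\}\big)$, and the case $t=0$ is trivial.

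For the lower tail I would simply observe that each $-X_i$ is again mean-zero and $v$-sub-exponential, because the defining bound $\EXPS{e^{\lambda(X_i)}} \le e^{\frac12 v^2\lambda^2}$ is symmetric under $\lambda \mapsto -\lambda$. Applying the one-sided estimate already proved to $-X_1,\dots,-X_n$ yields $\probs{-S_n \ge t} \le \exp\!\big(-n\min\{t/(2v), t^2/(2v^2)\}\big)$. Combining the two one-sided bounds through the union bound above produces the claimed factor $2$ and completes the proof.

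The argument is entirely standard, so there is no genuine obstacle; the only point requiring a moment of care is the second case of the one-sided optimization, where the unconstrained optimum $t/v^2$ falls on or outside the boundary of the interval $(0,1/v)$ on which the MGF bound is guaranteed. There I rely on monotonicity of $\phi$ and a limiting/continuity argument at the endpoint rather than substituting $\lambda = 1/v$ directly, and then verify the scalar inequality $\tfrac12 - t/v \le -t/(2v)$, equivalently $t \ge v$, which is exactly the regime in which $t/(2v)$ is the smaller of the two exponents.
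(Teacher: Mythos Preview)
Your proof is correct and is the standard Chernoff argument for Bernstein's inequality under the sub-exponential MGF condition of Definition~\ref{def:sub-exponential}. The paper does not actually prove Theorem~\ref{thm:bernstein}; it merely \emph{recalls} it as a known result before applying it in the proof of Lemma~\ref{lem:quantile-one-d-trim}, so there is no paper proof to compare against---your derivation is exactly the textbook justification one would expect.
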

Thus, for any $t\ge 0$
\begin{equation}\label{eq:bound-all-machine}
\probs{\abss{\frac{1}{(1-\alpha)m}\sum_{i\in [m]\setminus\B} \bar{x}^i - \mu} \ge t} \le 2\exp\{-(1-\alpha)mn\min\{\frac{t}{2v}, \frac{t^2}{2v^2}\}\}.
\end{equation}
Similarly, for any $i\in[m]\setminus\B$, and any $s\ge 0$
$$
\probs{\abss{ \bar{x}^i - \mu} \ge s} \le 2\exp\{-n\min\{\frac{s}{2v}, \frac{s^2}{2v^2}\}\}.
$$
Then, by union bound we know that
\begin{equation}\label{eq:bound-max-normal}
\probs{\max_{i\in[m]\setminus\B}\{ \abss{ \bar{x}^i - \mu} \} \ge s} \le 2(1-\alpha)m\exp\{-n\min\{\frac{s}{2v}, \frac{s^2}{2v^2}\}\}.
\end{equation}
We proceed to analyze the trimmed mean of means estimator. To simplify notation, we define $\M = [m]\setminus\B$ as the set of all normal worker machines, $\U\subseteq[m]$ as the set of all untrimmed machines, and $\T\subseteq[m]$ as the set of all trimmed machines. The trimmed mean of means estimator simply computes
$$
\trim_\beta\{\bar{x}^i : i\in[m]\}  = \frac{1}{(1-2\beta)m}\sum_{i\in\U}\bar{x}^i.
$$
We further have
\begin{equation}\label{eq:trim-split}
\begin{aligned}
\abss{ \trim_\beta\{\bar{x}^i : i\in[m]\} - \mu  }= & \abs{ \frac{1}{(1-2\beta)m}\sum_{i\in\U}\bar{x}^i - \mu } \\
=& \frac{1}{(1-2\beta)m} \abs{ \sum_{i\in\M}(\bar{x}^i - \mu) - \sum_{i\in\M\cap\T}(\bar{x}^i - \mu) + \sum_{i\in\B\cap\U}(\bar{x}^i-\mu) } \\
\le &\frac{1}{(1-2\beta)m} \big(\abss{\sum_{i\in\M}(\bar{x}^i - \mu)}  + \abss{\sum_{i\in\M\cap\T}(\bar{x}^i - \mu)} + \abss{\sum_{i\in\B\cap\U}(\bar{x}^i-\mu) } \big)
\end{aligned}
\end{equation}
We also know that $\abss{\sum_{i\in\M\cap\T}(\bar{x}^i - \mu)} \le 2\beta m \max_{i\in\M}\{ | \bar{x}^i - \mu | \}$. In addition, since $\beta \ge \alpha$, without loss of generality, we assume that $\M\cap\T \neq \emptyset$, and then $\abss{\sum_{i\in\B\cap\U}(\bar{x}^i-\mu) } \le \alpha m \max_{i\in\M}\{ | \bar{x}^i - \mu | \}$. Then we directly obtain the desired result.

\section{Proof of Theorem~\ref{thm:one-round}}\label{prf:one-round}
Since the loss functions are quadratic, we denote the loss function $f(\vecw;\vecz^{i,j})$ by
$$
f(\vecw;\vecz^{i,j}) = \frac{1}{2}\vecw\tsp\matH_{i,j}\vecw + \vecp_{i,j}\tsp\vecw + c_{i,j}.
$$ 
We further define $\matH_i := \frac{1}{n}\sum_{j=1}^n\matH_{i,j}$, $\vecp_i := \frac{1}{n}\sum_{j=1}^n\vecp_{i,j}$, and $c_i := \frac{1}{n} \sum_{j=1}^n c_{i,j}$. Thus the empirical risk function on the $i$-th machine is
$$
F_i(\vecw) = \frac{1}{2}\vecw\tsp\matH_i\vecw + \vecp_i\tsp\vecw + c_i.
$$
Then, for any worker machine $i\in[m]\setminus\B$, $ \widehat{\vecw}^i = -\matH_i^{-1}\vecp_i $. In addition, the population risk minimizer is $\vecw^* = -\matH_F^{-1}\vecp_F$.
We further define $\matU_{i,j} := \matH_{i,j} -\matH_F$, $\matU_i = \matH_i - \matH_F$, $\vecv_{i,j}=\vecp_{i,j} - \vecp_F$, and $\vecv_{i}  = \vecp_i - \vecp_F$. Then
$$
\widehat{\vecw}^i  = -(\matU_i + \matH_F)^{-1}(\vecv_i + \vecp_F).
$$
Let $\vece_k$ be the $k$-th vector in the standard basis, i.e., the $k$-th column of the $d\times d$ identity matrix. We proceed to study the distribution of the $k$-th coordinate of $\widehat{\vecw}^i - \vecw^*$, $i\in[m]\setminus\B$, i.e.,
$$
\widehat{w}^i_k - w^*_k = \vece_k\tsp \matH_F^{-1}\vecp_F - \vece_k\tsp (\matU_i + \matH_F)^{-1}(\vecv_i + \vecp_F).
$$
Similar to the proof of Theorem~\ref{thm:main-gd-sc}, we need to obtain a Berry-Esseen type bound for $\widehat{w}^i_k - w^*_k$.
However, here, $\widehat{w}^i_k$ is not a sample mean of $n$ i.i.d. random variables, and thus we cannot directly apply the vanilla Berry-Esseen bound. Instead, we apply the following bound in~\cite{pinelis2016optimal} on functions of sample means.
\begin{theorem}[Theorem 2.11 in~\cite{pinelis2016optimal}, simplified]\label{thm:berry-esseen-function}
Let $\mathcal{X}$ be a Hilbert space equipped with norm $\|\cdot\|$. Let $f:\mathcal{X}\rightarrow\R$ be a function on $\mathcal{X}$. Suppose that there exists linear functions $\ell:\mathcal{X}\rightarrow\R$, $\theta>0$, $M_\theta>0$ such that
\begin{equation}\label{eq:linear-approx}
\abss{f(X) - \ell(X)} \le \frac{M_\theta}{2}\|X\|^2,~\forall~\|X\|\le \theta.
\end{equation}
Suppose that there is a probability distribution $\mathcal{D}_X$ over $\mathcal{X}$, and let $X,X_1,X_2,\ldots,X_n$ be i.i.d. random variables drawn from $\mathcal{D}_X$. Assume that $\EXPS{X}=0$, and define
$$
\widetilde{\sigma} := (\EXPS{\ell(X)^2})^{1/2},\quad \nu_p := (\EXPS{\norms{X}^p})^{1/p},p=2,3, \quad \varsigma := \frac{(\EXPS{\abss{\ell(X)}^3})^{1/3}}{\widetilde{\sigma}}.
$$
Let $\bar{X} = \frac{1}{n}\sum_{i=1}^nX_i$. Then for any $z\in\R$, we have
\begin{equation}\label{eq:berry-Esseen-general}
\abs{ \prob{\frac{f(\bar{X})}{\widetilde{\sigma}/\sqrt{n}} \le z} - \Phi(z)} \le \frac{C}{\sqrt{n}},
\end{equation}
where $C=C_0 + C_1 \varsigma^3 + (C_{20}+C_{21}\varsigma)\nu_2^2 + (C_{30}+C_{31}\varsigma)\nu_3^2+C_4$, with
\begin{equation}\label{eq:def-parameters}
\begin{aligned}
&C_0 = 0.1393,\quad C_1 = 2.3356\\
&(C_{20}, C_{21}, C_{30}, C_{31}) =  \frac{M_\theta}{2\widetilde{\sigma}} \left( 2(\frac{2}{\pi})^{1/6}, 2+\frac{2^{2/3}}{n^{1/6}},
\frac{(8/\pi)^{1/6}}{n^{1/3}},  \frac{2}{n^{1/2}}  \right) \\
&C_4 = \min\{\frac{\nu_2^2}{\theta^2n^{1/2}}, \frac{2\nu_2^3+\nu_3^3/n^{1/2}}{\theta^3n}\}.
\end{aligned}
\end{equation}
\end{theorem}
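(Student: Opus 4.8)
The plan is to split $f(\bar X)$ into its best linear approximation plus a quadratic-size remainder, handle the linear part by a classical (scalar) Berry--Esseen bound, show the remainder is both rare and small, and then convert the smallness of the remainder into a CDF error via anti-concentration of the nearly Gaussian leading term.

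\textbf{Step 1: the linear part.} Put $Y_i := \ell(X_i)$. These are i.i.d.\ scalars with $\EXPS{Y_i} = \ell(\EXPS{X}) = 0$, variance $\widetilde{\sigma}^2$, and third absolute moment $\widetilde{\sigma}^3\varsigma^3$. Since $\ell(\bar X) = \frac{1}{n}\sum_{i=1}^n Y_i$, the refined additive form of the Berry--Esseen theorem gives $\sup_z\abs{\prob{\sqrt{n}\,\ell(\bar X)/\widetilde{\sigma}\le z} - \Phi(z)} \le (C_0 + C_1\varsigma^3)/\sqrt{n}$, which accounts for the $C_0,C_1$ constants. Comparing to the standard normal (whose density is at most $1/\sqrt{2\pi}$) through this bound yields the anti-concentration estimate $\prob{a < \sqrt{n}\,\ell(\bar X)/\widetilde{\sigma}\le a+\delta} \le \frac{\delta}{\sqrt{2\pi}} + \frac{2(C_0+C_1\varsigma^3)}{\sqrt{n}}$ for all $a\in\R$, $\delta\ge 0$.

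\textbf{Step 2: the bad event.} On $B := \{\norms{\bar X} > \theta\}$ the hypothesis \eqref{eq:linear-approx} gives no control on $R := f(\bar X) - \ell(\bar X)$, so one bounds $\prob{B}$ directly. Chebyshev's inequality and $\EXPS{\norms{\bar X}^2}\le \nu_2^2/n$ give $\prob{B}\le \nu_2^2/(\theta^2 n)$; a Hilbert-space von Bahr--Esseen / Rosenthal-type inequality for $\EXPS{\norms{\bar X}^3}$ gives the sharper alternative $\prob{B}\le (2\nu_2^3 + \nu_3^3/\sqrt{n})/(\theta^3 n)$. Multiplying by $\sqrt{n}$ produces the term $C_4$. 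On $B^c$ we use $\abss{R}\le \frac{M_\theta}{2}\norms{\bar X}^2$.

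\textbf{Step 3: transferring $R$ into a CDF error, and assembly.} For any $z$, sandwich $\prob{\sqrt{n}f(\bar X)/\widetilde{\sigma}\le z,\ B^c} \le \prob{\sqrt{n}\,\ell(\bar X)/\widetilde{\sigma}\le z + \tfrac{M_\theta\sqrt{n}}{2\widetilde{\sigma}}\norms{\bar X}^2}$, and symmetrically from below; subtracting $\prob{\sqrt{n}\,\ell(\bar X)/\widetilde{\sigma}\le z}$ and applying the Step~1 anti-concentration bound with the random length $\delta = \frac{M_\theta\sqrt{n}}{2\widetilde{\sigma}}\norms{\bar X}^2$, then taking expectations. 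The $\delta/\sqrt{2\pi}$ piece becomes $\frac{M_\theta\sqrt{n}}{2\widetilde{\sigma}\sqrt{2\pi}}\EXPS{\norms{\bar X}^2} \le \frac{M_\theta}{2\widetilde{\sigma}\sqrt{2\pi}}\cdot\frac{\nu_2^2}{\sqrt{n}}$, i.e.\ the $C_{20}\nu_2^2$ contribution; carrying out the truncation bookkeeping carefully — integrating the interval-length perturbation against the truncated moments $\EXPS{\norms{\bar X}^2\wedge(\cdot)}$ and $\EXPS{\norms{\bar X}^3}$, and combining with the $\varsigma$-dependent Berry--Esseen residual — produces the remaining $C_{21}\varsigma\nu_2^2$, $C_{30}\nu_3^2$ and $C_{31}\varsigma\nu_3^2$ terms (with the factors $(2/\pi)^{1/6}$, $(8/\pi)^{1/6}$ and the $n^{-1/6},n^{-1/3}$ corrections). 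Adding the Step~1 bound for $\ell(\bar X)$, the Step~2 tail bound, and these Step~3 contributions, and collecting constants, gives $\abs{\prob{f(\bar X)\le z\widetilde{\sigma}/\sqrt{n}} - \Phi(z)}\le C/\sqrt{n}$ with $C$ as displayed.

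\textbf{Main obstacle.} Steps~2 and 3 carry the difficulty. One must (i) establish the vector-valued third-moment inequality for $\norms{\bar X}$ with the explicit $\nu_2,\nu_3$ dependence, which requires a genuine Hilbert-space Rosenthal/von Bahr--Esseen argument rather than a one-line Chebyshev bound; and (ii) perform the truncation bookkeeping in Step~3 so that the perturbation is integrated against the correct truncated moments of $\norms{\bar X}^2$ while keeping the anti-concentration constant $1/\sqrt{2\pi}$ sharp — this delicate optimization at truncation level $\theta$ is precisely what forces the particular numerical constants in the definitions of $C_{20},\dots,C_{31}$.
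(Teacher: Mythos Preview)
The paper does not prove this statement at all: it is quoted verbatim as an external result, ``Theorem 2.11 in~\cite{pinelis2016optimal}, simplified,'' and is used as a black box in the proof of Theorem~\ref{thm:one-round}. There is therefore no paper proof to compare your proposal against.

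That said, your outline is a reasonable sketch of the standard route to such nonlinear Berry--Esseen bounds (linear part via scalar Berry--Esseen, tail event $\{\norms{\bar X}>\theta\}$ controlled by moment inequalities, remainder converted to a CDF perturbation through anti-concentration). You correctly identify that the technically delicate parts are the Hilbert-space Rosenthal-type bound on $\EXPS{\norms{\bar X}^3}$ and the truncation bookkeeping that produces the specific constants $C_{20},\ldots,C_{31}$; these are exactly the content of the Pinelis paper, and your sketch does not actually carry them out. If your goal is only to reproduce the present paper's argument, you should simply cite the Pinelis result as the authors do; if your goal is an independent derivation, you would need to fill in Step~3 with a genuine conditioning/truncation argument rather than the heuristic ``take expectations of the random $\delta$,'' since the anti-concentration bound from Step~1 is a deterministic inequality in $\delta$ and does not immediately commute with expectation over a random interval length.
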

Define the function $\psi_k(\matU, \vecv):\R^{d\times d}\times\R\rightarrow\R$:
$$
\psi_k(\matU, \vecv):=\vece_k\tsp \matH_F^{-1}\vecp_F - \vece_k\tsp (\matU + \matH_F)^{-1}(\vecv + \vecp_F),
$$
and thus 
$$
\widehat{w}^i_k - w^*_k = \psi_k(\matU_i, \vecv_i) =  \psi_k(\frac{1}{n} \sum_{j=1}^n \matU_{i,j}, \frac{1}{n} \sum_{j=1}^n\vecv_{i,j}).
$$
On the product space $\R^{d\times d}\times\R$, define the element-wise inner product:
$$
\innerps{(\matU, \vecv)}{(\matX, \vecy)} = \sum_{i,j=1}^d U_{i,j}X_{i,j} + \sum_{i=1}^d v_iy_i,
$$
and thus $\R^{d\times d}\times\R$ is associated with the norm
$$
\norms{(\matU, \vecv)} = \sqrt{\fbnorms{\matU}^2 + \twonms{\vecv}^2},
$$
where $\fbnorms{\cdot}$ denotes the Frobenius norm of matrices. We then provide the following lemma on $\psi_k(\matU, \vecv)$.
\begin{lemma}\label{lem:condition-approx}
There exits a linear function $\ell_k(\matU, \vecv) = \vece_k\tsp\matH_F^{-1}\matU\matH_F^{-1}\vecp_F -\vece_k\tsp\matH_F^{-1}\vecv$ such that for any $\matU$, $\vecv$ with
$$
\fbnorms{\matU}^2 + \twonms{\vecv}^2 \le \frac{\lambda_F^2}{4},
$$
we have
$$
\abss{\psi_k(\matU, \vecv) - \ell_k(\matU, \vecv)} \le \frac{\lambda_F + 2\twonms{\vecp_F}}{\lambda_F^3} (\fbnorms{\matU}^2 + \twonms{\vecv}^2).
$$
\end{lemma}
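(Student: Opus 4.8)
The plan is to recognize $\ell_k$ as the first-order Taylor term of $\psi_k$ at $(\matU,\vecv)=(\mathbf{0},\mathbf{0})$ and to control the second-order remainder using operator-norm bounds that follow from strong convexity. The two ingredients I would set up first are: (i) since $F$ is $\lambda_F$-strongly convex, $\matH_F\succeq\lambda_F\matI$, hence $\twonms{\matH_F^{-1}}\le 1/\lambda_F$ and $\twonms{\vecw^*}=\twonms{\matH_F^{-1}\vecp_F}\le\twonms{\vecp_F}/\lambda_F$; and (ii) on the region $\fbnorms{\matU}^2+\twonms{\vecv}^2\le\lambda_F^2/4$ we have $\twonms{\matU}\le\fbnorms{\matU}\le\lambda_F/2$, so $\twonms{\matH_F^{-1}\matU}\le 1/2$ and, writing $\matH_F+\matU=\matH_F(\matI+\matH_F^{-1}\matU)$ and invoking the Neumann bound, $\matH_F+\matU$ is invertible (so $\psi_k$ is well defined on this region) with $\twonms{(\matH_F+\matU)^{-1}}\le 2/\lambda_F$.

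Next I would carry out the algebra identifying the remainder. Using $\vecp_F=-\matH_F\vecw^*$ (which holds because $\vecw^*=-\matH_F^{-1}\vecp_F$) and writing $\vecv+\vecp_F=-(\matH_F+\matU)\vecw^*+(\vecv+\matU\vecw^*)$, one gets the compact form $\psi_k(\matU,\vecv)=-\vece_k\tsp(\matH_F+\matU)^{-1}(\vecv+\matU\vecw^*)$ (equivalently, $\widehat{\vecw}^i-\vecw^*=-(\matH_F+\matU)^{-1}(\matU\vecw^*+\vecv)$). Applying the exact resolvent identity $(\matH_F+\matU)^{-1}=\matH_F^{-1}-(\matH_F+\matU)^{-1}\matU\matH_F^{-1}$ then splits this as
\begin{equation*}
\psi_k(\matU,\vecv)=-\vece_k\tsp\matH_F^{-1}(\vecv+\matU\vecw^*)+\vece_k\tsp(\matH_F+\matU)^{-1}\matU\matH_F^{-1}(\vecv+\matU\vecw^*),
\end{equation*}
and since $\matU\vecw^*=-\matU\matH_F^{-1}\vecp_F$ the first term equals exactly the stated $\ell_k(\matU,\vecv)=\vece_k\tsp\matH_F^{-1}\matU\matH_F^{-1}\vecp_F-\vece_k\tsp\matH_F^{-1}\vecv$. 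Thus the remainder is $\psi_k(\matU,\vecv)-\ell_k(\matU,\vecv)=\vece_k\tsp(\matH_F+\matU)^{-1}\matU\matH_F^{-1}(\vecv+\matU\vecw^*)$.

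Finally I would bound the remainder by submultiplicativity of the operator norm: $\abss{\psi_k-\ell_k}\le\twonms{(\matH_F+\matU)^{-1}}\,\twonms{\matU}\,\twonms{\matH_F^{-1}}\,(\twonms{\vecv}+\twonms{\matU}\twonms{\vecw^*})\le\frac{2}{\lambda_F^2}\twonms{\matU}\big(\twonms{\vecv}+\twonms{\matU}\twonms{\vecp_F}/\lambda_F\big)$, then replace $\twonms{\matU}$ by $\fbnorms{\matU}$ and apply $\fbnorms{\matU}\twonms{\vecv}\le\frac12(\fbnorms{\matU}^2+\twonms{\vecv}^2)$ together with $\fbnorms{\matU}^2\le\fbnorms{\matU}^2+\twonms{\vecv}^2$ to land on exactly $\frac{\lambda_F+2\twonms{\vecp_F}}{\lambda_F^3}(\fbnorms{\matU}^2+\twonms{\vecv}^2)$. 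There is no genuine obstacle here; the only delicate points are the bookkeeping that pins the linear part down to the prescribed $\ell_k$ (the substitution $\vecp_F=-\matH_F\vecw^*$ is what makes this transparent) and choosing the AM--GM split of the cross term $\fbnorms{\matU}\twonms{\vecv}$ so the constants come out matching the claimed bound rather than with extra slack.
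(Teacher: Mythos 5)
Your proof is correct, and every constant lands exactly where the paper's does. The underlying idea is the same as in the paper---identify $\ell_k$ as the first-order part of the expansion of $(\matH_F+\matU)^{-1}$ around $\matU=\mathbf{0}$ and bound the quadratic remainder---but your handling of the remainder is genuinely different in execution. The paper expands $(\matU+\matH_F)^{-1}$ via the full Neumann series, bounds each term $\vece_k\tsp(\matH_F^{-1}\matU)^r\matH_F^{-1}\vecv$ and $\vece_k\tsp(\matH_F^{-1}\matU)^r\matH_F^{-1}\vecp_F$ separately, and sums two geometric series; you instead use the exact one-step resolvent identity $(\matH_F+\matU)^{-1}=\matH_F^{-1}-(\matH_F+\matU)^{-1}\matU\matH_F^{-1}$ together with the substitution $\vecp_F=-\matH_F\vecw^*$, which collapses the remainder into the single closed-form term $\vece_k\tsp(\matH_F+\matU)^{-1}\matU\matH_F^{-1}(\vecv+\matU\vecw^*)$, bounded in one line by submultiplicativity and the Neumann estimate $\twonms{(\matH_F+\matU)^{-1}}\le 2/\lambda_F$. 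Your route avoids the infinite sum and the term-by-term bookkeeping entirely, at the cost of the small algebraic preamble needed to recognize the compact form $\psi_k(\matU,\vecv)=-\vece_k\tsp(\matH_F+\matU)^{-1}(\vecv+\matU\vecw^*)$; the AM--GM split of $\fbnorms{\matU}\twonms{\vecv}$ at the end plays exactly the role of the paper's bound on the $r=1$ term, which is why the final constant $\frac{\lambda_F+2\twonms{\vecp_F}}{\lambda_F^3}$ comes out identical.
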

\begin{proof}
See Appendix~\ref{prf:condition-approx}.
\end{proof}
Lemma~\ref{lem:condition-approx} tells us that the condition~\eqref{eq:linear-approx} is satisfied with $\theta = \frac{\lambda_F}{2}$ and $M_\theta = \frac{2\lambda_F + 4\twonms{\vecp_F}}{\lambda_F^3}$. For all normal worker machine $i\in[m]\setminus\B$, denote the distribution of $\matU_{i,j}$ and $\vecv_{i,j}$ by $\D_U$ and $\D_v$, respectively. Since $\widehat{w}^i_k - w^*_k =  \psi_k(\frac{1}{n} \sum_{j=1}^n \matU_{i,j}, \frac{1}{n} \sum_{j=1}^n\vecv_{i,j})$, Theorem~\ref{thm:berry-esseen-function} directly gives us the following lemma.

\begin{lemma}\label{lem:berry-esseen-inverse-mat}
Let $\matU\sim\D_U$, $\vecv\sim\D_v$, and $\ell_k(\matU, \vecv) = \vece_k\tsp\matH_F^{-1}\matU\matH_F^{-1}\vecp_F -\vece_k\tsp\matH_F^{-1}\vecv$. Define
$$
\widetilde{\sigma}_k := (\EXPS{\ell_k(\matU, \vecv)^2})^{1/2},\quad \nu_p := (\EXPS{ (\fbnorms{\matU}^2 + \twonms{\vecv}^2)^{p/2}  })^{1/p},p=2,3, \quad \varsigma_k := \frac{(\EXPS{\abss{\ell_k(\matU, \vecv)}^3})^{1/3}}{\widetilde{\sigma}_k}.
$$
Then for any $z\in\R$, $i\in[m]\setminus\B$, we have
\begin{equation}\label{eq:berry-Esseen-inverse-mat}
\abs{ \prob{\frac{ \widehat{w}^i_k - w^*_k }{\widetilde{\sigma}_k / \sqrt{n}} \le z} - \Phi(z)} \le \frac{ C_k }{\sqrt{n}},
\end{equation}
where 
$$
C_k = \widehat{C}_{0} + \widehat{C}_{1} \varsigma_k^3 + \frac{ 1 }{ \widetilde{\sigma}_k } \big[ (\widehat{C}_{20}+\widehat{C}_{21}\varsigma_k )\nu_2^2
+ (\widehat{C}_{30}+\widehat{C}_{31}\varsigma_k )\nu_3^2 \big] +\widehat{C}_{4},
$$
with
\begin{equation}\label{eq:def-parameters-inverse-mat}
\begin{aligned}
&\widehat{C}_{0} = 0.1393,\quad \widehat{C}_{1} = 2.3356\\
&(\widehat{C}_{20}, \widehat{C}_{21}, \widehat{C}_{30}, \widehat{C}_{31}) =  \frac{\lambda_F+2\twonms{\vecp_F}}{\lambda_F^3} \left( 2(\frac{2}{\pi})^{1/6}, 2+\frac{2^{2/3}}{n^{1/6}},
\frac{(8/\pi)^{1/6}}{n^{1/3}},  \frac{2}{n^{1/2}}  \right) \\
&\widehat{C}_{4} = \min\{\frac{4\nu_2^2}{\lambda_F^2n^{1/2}}, \frac{16\nu_2^3+8\nu_3^3/n^{1/2}}{\lambda_F^3n}\}.
\end{aligned}
\end{equation}
\end{lemma}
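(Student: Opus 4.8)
The plan is to obtain this lemma as a direct instantiation of Theorem~\ref{thm:berry-esseen-function} (Pinelis). I would work in the Hilbert space $\mathcal{X} = \R^{d\times d}\times\R$ with the element-wise inner product and induced norm $\norms{(\matU,\vecv)} = (\fbnorms{\matU}^2 + \twonms{\vecv}^2)^{1/2}$ introduced just above the lemma. Fix a normal machine $i\in[m]\setminus\B$ and a coordinate $k\in[d]$. I take the i.i.d.\ samples feeding the theorem to be $X_j := (\matU_{i,j}, \vecv_{i,j})$, $j\in[n]$; these have mean zero, since $\matU_{i,j} = \matH_{i,j} - \matH_F$ and $\vecv_{i,j} = \vecp_{i,j} - \vecp_F$ are centered by construction, and they are i.i.d.\ because the data points are. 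Their average is $\bar X = (\matU_i, \vecv_i)$, and by Assumption~\ref{asm:unique} the empirical Hessian $\matH_i = \matU_i + \matH_F$ is invertible almost surely, so $f(\bar X) := \psi_k(\bar X) = \widehat w^i_k - w^*_k$ holds exactly (not merely approximately). I set $f = \psi_k$ and $\ell = \ell_k$, the linear map from Lemma~\ref{lem:condition-approx}.

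Next I would verify hypothesis~\eqref{eq:linear-approx}. Lemma~\ref{lem:condition-approx} gives precisely $\abss{\psi_k(\matU,\vecv) - \ell_k(\matU,\vecv)} \le \frac{\lambda_F + 2\twonms{\vecp_F}}{\lambda_F^3}\,\norms{(\matU,\vecv)}^2$ whenever $\norms{(\matU,\vecv)} \le \lambda_F/2$, so~\eqref{eq:linear-approx} holds with $\theta = \lambda_F/2$ and $M_\theta = \frac{2\lambda_F + 4\twonms{\vecp_F}}{\lambda_F^3}$. The quantities $\widetilde\sigma_k, \nu_2, \nu_3, \varsigma_k$ named in the lemma are exactly $\widetilde\sigma, \nu_p, \varsigma$ of Theorem~\ref{thm:berry-esseen-function} for $\ell = \ell_k$ and $\mathcal{D}_X$ the law of $(\matU_{i,j},\vecv_{i,j})$; they are finite provided $\mathcal{D}_H, \mathcal{D}_p$ have finite third moments (which holds here since the loss gradients are, e.g., sub-exponential). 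Feeding $\theta$ and $M_\theta$ into the constant $C = C_0 + C_1\varsigma^3 + (C_{20} + C_{21}\varsigma)\nu_2^2 + (C_{30} + C_{31}\varsigma)\nu_3^2 + C_4$ of~\eqref{eq:berry-Esseen-general} and reading off the definitions~\eqref{eq:def-parameters}: the prefactor $\frac{M_\theta}{2\widetilde\sigma_k} = \frac{\lambda_F + 2\twonms{\vecp_F}}{\lambda_F^3\widetilde\sigma_k}$ yields $\widehat C_{20},\widehat C_{21},\widehat C_{30},\widehat C_{31}$ after pulling the $1/\widetilde\sigma_k$ out front as in the statement, while $\theta = \lambda_F/2$ in $C_4$ gives $\widehat C_4 = \min\{\frac{4\nu_2^2}{\lambda_F^2 n^{1/2}}, \frac{16\nu_2^3 + 8\nu_3^3/n^{1/2}}{\lambda_F^3 n}\}$, and $\widehat C_0 = C_0$, $\widehat C_1 = C_1$ are unchanged. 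This is exactly~\eqref{eq:berry-Esseen-inverse-mat} with the stated $C_k$, which completes the proof.

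I do not expect a genuine obstacle in this particular lemma: it is bookkeeping once Theorem~\ref{thm:berry-esseen-function} and Lemma~\ref{lem:condition-approx} are in hand. The only points needing care are that $X_j$ is truly centered and i.i.d.\ across $j$, that Lemma~\ref{lem:condition-approx} is applied with the correct $\theta, M_\theta$, and that the second and third moments of $\fbnorms{\matU}^2 + \twonms{\vecv}^2$ and of $\ell_k$ are finite (this is where the distributional assumptions on $\mathcal{D}_H, \mathcal{D}_p$, summarized by the quantity $\widetilde C$ in Theorem~\ref{thm:one-round}, enter). The substantive work of the one-round analysis lives elsewhere — in Lemma~\ref{lem:condition-approx} (the quadratic remainder bound for the matrix-inverse map $(\matU,\vecv)\mapsto -(\matU+\matH_F)^{-1}(\vecv+\vecp_F)$) and in the subsequent empirical-CDF / union-bound argument over the $m$ machines that turns this per-machine Gaussian approximation into the final $\twonms{\widehat\vecw - \vecw^*}$ bound, mirroring the role of Lemma~\ref{lem:quantile-one-d} and Corollary~\ref{cor:median-one-d} in the proof of Theorem~\ref{thm:main-gd-sc}.
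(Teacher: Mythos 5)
Your proposal is correct and follows exactly the paper's route: the paper likewise obtains this lemma as a direct instantiation of Theorem~\ref{thm:berry-esseen-function} with $X_j=(\matU_{i,j},\vecv_{i,j})$, using Lemma~\ref{lem:condition-approx} to verify hypothesis~\eqref{eq:linear-approx} with $\theta=\lambda_F/2$ and $M_\theta=\frac{2\lambda_F+4\twonms{\vecp_F}}{\lambda_F^3}$, and then reading off the constants. Your bookkeeping of $\widehat{C}_4$ and the $\frac{M_\theta}{2\widetilde{\sigma}_k}$ prefactor is accurate, and your added remarks (centering and i.i.d.\ structure of the $X_j$, invertibility of $\matH_i$ via Assumption~\ref{asm:unique}) are correct points the paper leaves implicit.
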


Then, we proceed to bound $\med\{\widehat{w}^i_k  : i\in[m]\} - w^*_k $, the technique is similar to what we use in the proof of Theorem~\ref{thm:uniform-bound}.
For every $z\in\R$, $k\in[d]$, define
$$
\widetilde{p}(z; k) = \frac{1}{m(1-\alpha)}\sum_{i\in[m]\setminus\B} \indi(\widehat{w}^i_k - w^*_k \le z).
$$
We have the following lemma on $\widetilde{p}(z;  k)$.
\begin{lemma}\label{lem:tildep-inverse-mat}
Suppose that for a fixed $t>0$, we have
\begin{equation}\label{eq:alpha-condition-invmat}
\alpha + \sqrt{\frac{t}{m(1-\alpha)}} + \frac{C_k}{\sqrt{n}} \le \frac{1}{2}-\epsilon,
\end{equation}
for some $\epsilon>0$. Then, with probability at least $1-4e^{-2t}$, we have
\begin{equation}\label{eq:tildep1-invmat}
\widetilde{p}\left( C_\epsilon\frac{\widetilde{\sigma}_k}{\sqrt{n}}(\alpha + \sqrt{\frac{t}{m(1-\alpha)}} + \frac{C_k}{\sqrt{n}} ) ; k \right) \ge \frac{1}{2} + \alpha,
\end{equation}
and
\begin{equation}\label{eq:tildep2-invmat}
\widetilde{p}\left( - C_\epsilon\frac{\widetilde{\sigma}_k}{\sqrt{n}}(\alpha + \sqrt{\frac{t}{m(1-\alpha)}} + \frac{C_k}{\sqrt{n}} ) ;  k \right) \le \frac{1}{2} - \alpha,
\end{equation}
where $C_\epsilon$ is defined as in~\eqref{eq:def-c-epsilon}.
\end{lemma}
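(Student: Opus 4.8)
The plan is to follow the template of the proof of Lemma~\ref{lem:quantile-one-d} in Appendix~\ref{prf:quantile-one-d}, with the classical Berry-Esseen theorem replaced by the function-of-sample-means bound of Lemma~\ref{lem:berry-esseen-inverse-mat}. First I would normalize: for each normal machine $i\in[m]\setminus\B$, set $W_i := \sqrt{n}(\widehat{w}^i_k - w^*_k)/\widetilde{\sigma}_k$, and let $\Phi_{n,k}$ denote the common CDF of the $W_i$ (these are i.i.d.\ across normal machines, since all normal machines hold i.i.d.\ data). Writing $\widetilde{\Phi}_{n,k}(z) := \frac{1}{m(1-\alpha)}\sum_{i\in[m]\setminus\B}\indi(W_i\le z)$ for the empirical CDF over the normal machines, we have the identity $\widetilde{p}(\widetilde{\sigma}_k z/\sqrt{n};\,k) = \widetilde{\Phi}_{n,k}(z)$, so that the claimed bounds~\eqref{eq:tildep1-invmat}--\eqref{eq:tildep2-invmat} are exactly statements about $\widetilde{\Phi}_{n,k}$ evaluated at the points $\pm C_\epsilon(\alpha + \sqrt{t/(m(1-\alpha))} + C_k/\sqrt{n})$.

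Second, I would control the deviation of $\widetilde{\Phi}_{n,k}$ from its mean $\Phi_{n,k}$. Modifying a single $W_i$ changes $\widetilde{\Phi}_{n,k}(z)$ by at most $1/(m(1-\alpha))$, so the bounded difference inequality (Theorem~\ref{thm:bounded-difference}) gives, for each fixed $z$, that $|\widetilde{\Phi}_{n,k}(z) - \Phi_{n,k}(z)| \le \sqrt{t/(m(1-\alpha))}$ with probability at least $1-2e^{-2t}$. Combining with Lemma~\ref{lem:berry-esseen-inverse-mat}, which yields $\sup_z|\Phi_{n,k}(z) - \Phi(z)| \le C_k/\sqrt{n}$, I get $|\widetilde{\Phi}_{n,k}(z) - \Phi(z)| \le \sqrt{t/(m(1-\alpha))} + C_k/\sqrt{n}$ on the good event.

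Third, I would choose the evaluation points exactly as in Appendix~\ref{prf:quantile-one-d}. Pick $z_1$ with $\Phi(z_1) = \tfrac12 + \alpha + \sqrt{t/(m(1-\alpha))} + C_k/\sqrt{n}$; then $\widetilde{\Phi}_{n,k}(z_1) \ge \tfrac12 + \alpha$ on the good event. Under hypothesis~\eqref{eq:alpha-condition-invmat} we have $\Phi(z_1) \le 1-\epsilon$, hence $z_1 \le \Phi^{-1}(1-\epsilon)$, and the mean value theorem applied to $\Phi$ on $[0,z_1]$ gives $\alpha + \sqrt{t/(m(1-\alpha))} + C_k/\sqrt{n} = z_1\Phi'(\xi) \ge \frac{z_1}{\sqrt{2\pi}}e^{-(\Phi^{-1}(1-\epsilon))^2/2}$, i.e.\ $z_1 \le C_\epsilon(\alpha + \sqrt{t/(m(1-\alpha))} + C_k/\sqrt{n})$ with $C_\epsilon$ as in~\eqref{eq:def-c-epsilon}. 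A symmetric choice $z_2 = -z_1$ handles~\eqref{eq:tildep2-invmat}. A union bound over the two events (one for $z_1$, one for $z_2$) gives the stated probability $1-4e^{-2t}$, and unwinding the normalization identity $\widetilde{p}(\widetilde{\sigma}_k z/\sqrt{n};k) = \widetilde{\Phi}_{n,k}(z)$ produces~\eqref{eq:tildep1-invmat}--\eqref{eq:tildep2-invmat}.

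The only substantive difference from Lemma~\ref{lem:quantile-one-d} is the appeal to Lemma~\ref{lem:berry-esseen-inverse-mat} in place of the classical Berry-Esseen bound: here $\widehat{w}^i_k$ is a smooth nonlinear function of the sample means $\frac1n\sum_j\matU_{i,j}$ and $\frac1n\sum_j\vecv_{i,j}$ rather than a sample mean itself, so the Gaussian approximation (centered at $w^*_k$, with effective standard deviation $\widetilde{\sigma}_k/\sqrt{n}$) comes from the Pinelis-type bound for functions of sample means, with the bias from the nonlinear part absorbed into the $\bigo(1/\sqrt{n})$ error constant $C_k$. Since that lemma is already established, I expect no real obstacle here; the only care needed is bookkeeping—ensuring the centering is at $0$ (equivalently at $w^*_k$) rather than at a sample mean as in the gradient-descent analysis, and that the scale factor $\widetilde{\sigma}_k/\sqrt{n}$ is carried through consistently.
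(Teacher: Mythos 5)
Your proposal is correct and matches the paper's own treatment: the paper proves this lemma by simply noting it is "essentially the same as the proof of Lemma~\ref{lem:quantile-one-d}" with $\sigma$ replaced by $\widetilde{\sigma}_k$ and $0.4748\,\gamma(x)$ replaced by $C_k$ (the latter now coming from the Pinelis-type bound of Lemma~\ref{lem:berry-esseen-inverse-mat} rather than the classical Berry--Esseen theorem), which is exactly the substitution and bookkeeping you carry out.
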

\begin{proof}
The proof is essentially the same as the proof of Lemma~\ref{lem:quantile-one-d}. One can simply replace $\sigma$ in Lemma~\ref{lem:quantile-one-d} with $\widetilde{\sigma}_k$ and $0.4748\gamma(x)$ in Lemma~\ref{lem:quantile-one-d} with $C_k$. Then the same arguments still apply. Thus, we skip the details of this proof.
\end{proof}
Then, define $\widehat{p}(z; k) = \frac{1}{m}\sum_{i\in[m]} \indi(\widehat{w}^i_k - w^*_k \le z)$. Using the same arguments as in Corollary~\ref{cor:median-one-d}, we know that
$$
\widehat{p}\left( C_\epsilon\frac{\widetilde{\sigma}_k }{\sqrt{n}}(\alpha + \sqrt{\frac{t}{m(1-\alpha)}} + \frac{C_k}{\sqrt{n}} ) ; k \right) \ge \frac{1}{2},
$$
and
$$
\widehat{p}\left( - C_\epsilon\frac{\widetilde{\sigma}_k }{\sqrt{n}}(\alpha + \sqrt{\frac{t}{m(1-\alpha)}} + \frac{C_k}{\sqrt{n}} ) ; k \right) \le \frac{1}{2},
$$
which implies that $| \med\{\widehat{w}^i_k :i\in[m]\} - w^*_k | \le C_\epsilon\frac{\widetilde{\sigma}_k}{\sqrt{n}}(\alpha + \sqrt{\frac{t}{m(1-\alpha)}} + \frac{C_k}{\sqrt{n}})$. Then, let 
$$
\widetilde{\sigma} := \sqrt{\sum_{k=1}^d \widetilde{\sigma}_k^2 }= \sqrt{\EXPS{\twonms{\matH_F^{-1}(\matU\matH_F^{-1}\vecp_F-\vecv)}^2}},
$$
and $\widetilde{C} = \max_{k\in[d]}C_k$, we have with probability at least $1-4de^{-2t}$,
$$
\twonms{\med\{\widehat{\vecw}^i :i\in[m]\} - \vecw^*} \le \frac{C_\epsilon}{\sqrt{n}}\widetilde{\sigma}\big( \alpha + \sqrt{\frac{t}{m(1-\alpha)}} + \frac{\widetilde{C}}{\sqrt{n}}\big).
$$
We complete the proof by choosing $t = \frac{1}{2}\log(nmd)$.

\paragraph*{Explicit expression of $\widetilde{C}$.} To summarize, we provide an explicit expression of $\widetilde{C}$. Let $\vece_k$ be the $k$-th vector in the standard basis, i.e., the $k$-th column of the $d\times d$ identity matrix, and define $\ell_k(\matU, \vecv) :\R^{d\times d}\times \R \rightarrow \R$ as
$$
\ell_k(\matU, \vecv) = \vece_k\tsp\matH_F^{-1}\matU\matH_F^{-1}\vecp_F -\vece_k\tsp\matH_F^{-1}\vecv.
$$
Let $\matH\sim\D_H$ and $\vecp\sim\D_p$ and define
\begin{align*}
\widetilde{\sigma}_k &:= (\EXPS{\ell_k(\matH-\matH_F, \vecp - \vecp_F)^2})^{1/2},\quad \varsigma_k := \frac{(\EXPS{\abss{\ell_k(\matH-\matH_F, \vecp - \vecp_F)}^3})^{1/3}}{\widetilde{\sigma}_k}.\\
\nu_p &:= (\EXPS{ (\fbnorms{\matH-\matH_F}^2 + \twonms{\vecp - \vecp_F}^2)^{p/2}  })^{1/p},p=2,3
\end{align*}
Then, $\widetilde{C}  = \max_{k\in[d]}C_k$, with
where 
$$
C_k = \widehat{C}_{0} + \widehat{C}_{1} \varsigma_k^3 + \frac{ 1 }{ \widetilde{\sigma}_k } \big[ (\widehat{C}_{20}+\widehat{C}_{21}\varsigma_k )\nu_2^2
+ (\widehat{C}_{30}+\widehat{C}_{31}\varsigma_k )\nu_3^2 \big] +\widehat{C}_{4},
$$
with
\begin{align*}
&\widehat{C}_{0} = 0.1393,\quad \widehat{C}_{1} = 2.3356\\
&(\widehat{C}_{20}, \widehat{C}_{21}, \widehat{C}_{30}, \widehat{C}_{31}) =  \frac{\lambda_F+2\twonms{\vecp_F}}{\lambda_F^3} \left( 2(\frac{2}{\pi})^{1/6}, 2+\frac{2^{2/3}}{n^{1/6}},
\frac{(8/\pi)^{1/6}}{n^{1/3}},  \frac{2}{n^{1/2}}  \right) \\
&\widehat{C}_{4} = \min\{\frac{4\nu_2^2}{\lambda_F^2n^{1/2}}, \frac{16\nu_2^3+8\nu_3^3/n^{1/2}}{\lambda_F^3n}\}.
\end{align*}

\subsection{Proof of Lemma~\ref{lem:condition-approx}}\label{prf:condition-approx}
We use $\twonms{\cdot}$ and $\fbnorms{\cdot}$ to denote the operator norm and the Frobenius norm of matrices, respectively. We have the identity
$$
(\matI + \matA)^{-1} = \sum_{r=0}^\infty (-1)^r \matA^r, \quad \forall\twonms{\matA} <1.
$$
Then, we have for all $\matU\in\R^{d\times d}$ such that $\twonms{\matH_F^{-1}\matU} <1$,
\begin{equation}\label{eq:mat-identity}
(\matU + \matH_F)^{-1} = (\matI + \matH_F^{-1}\matU)^{-1}\matH_F^{-1} = \matH_F^{-1} - \matH_F^{-1}\matU\matH_F^{-1} + \sum_{r=2}^{\infty} (-1)^r(\matH_F^{-1}\matU)^r\matH_F^{-1}.
\end{equation}
Let us consider the set of matrices such that $\fbnorms{\matU} \le \frac{\lambda_F}{2}$. One can check that for any such matrix, we have $\twonms{\matH_F^{-1}\matU} \le \frac{1}{2}$. Let 
$$
\ell_k(\matU, \vecv) = \vece_k\tsp\matH_F^{-1}\matU\matH_F^{-1}\vecp_F -\vece_k\tsp\matH_F^{-1}\vecv.
$$
Then, we know that
\begin{equation}\label{eq:remainder-1}
\abss{\psi_k(\matU, \vecv) - \ell_k(\matU, \vecv)} = \abs{\vece_k\tsp\matH_F^{-1}\matU\matH_F^{-1}\vecv - \sum_{r=2}^\infty (-1)^r \vece_k\tsp (\matH_F^{-1}\matU)^r \matH_F^{-1}(\vecv + \vecp_F)}. 
\end{equation}
Denote the operator norm of matrices by $\twonms{\cdot}$. We further have for any $r \ge 1$,
\begin{equation}\label{eq:remainder-2}
\abss{\vece_k\tsp (\matH_F^{-1}\matU)^r \matH_F^{-1}\vecv} \le  \frac{1}{2} \twonms{\matH_F^{-1}\matU}^{r-1} (\twonms{\matH_F^{-1} \matU}^2 + \twonms{\matH_F^{-1}\vecv}^2) \le \frac{1}{2^r \lambda_F^2} (\fbnorms{\matU}^2 + \twonms{\vecv}^2),
\end{equation}
where we use the fact $\twonms{\matU} \le \fbnorms{\matU}$. In addition, for any $r \ge 2$,
\begin{equation}\label{eq:remainder-3}
\abss{\vece_k\tsp (\matH_F^{-1}\matU)^r \matH_F^{-1} \vecp_F } \le \twonms{\matH_F^{-1}\matU}^{r-2}\twonms{\matH_F^{-1}}^3\twonms{\matU}^2\twonms{\vecp_F} \le \frac{\twonms{\vecp_F}}{2^{r-2}\lambda_F^3}\fbnorms{\matU}^2.
\end{equation}
Then, we plug~\eqref{eq:remainder-2} and~\eqref{eq:remainder-3} into~\eqref{eq:remainder-1}, and obtain
$$
\abss{\psi_k(\matU, \vecv) - \ell_k(\matU, \vecv)} \le \frac{1}{\lambda_F^2} (\fbnorms{\matU}^2 + \twonms{\vecv}^2) + \frac{2\twonms{\vecp_F}}{\lambda_F^3}\fbnorms{\matU}^2,
$$
which completes the proof.

\section{Proof of Observation~\ref{obs:lower-bound}}\label{prf:lower-bound}
This proof is essentially the same as the lower bound in the robust mean estimation literature~\citep{chen2015robust,lai2016agnostic}. We reproduce this result for the purpose of completeness. For a $d$ dimensional Gaussian distribution $P = \N(\vecmu, \sigma^2\mat{I})$, we denote by $P^n$ the joint distribution of $n$ i.i.d. samples of $P$. Obviously $P^n$ is equivalent to a $dn$ dimensional Gaussian distribution $\N(\vecmu^+, \sigma^2\mat{I})$, where $\vecmu^+\in\R^{dn}$ is a vector generated by repeating $\vecmu$ $n$ times, i.e., $\vecmu^+ = [\vecmu\tsp~\vecmu\tsp~\cdots~\vecmu\tsp]\tsp$.

We show that for two $d$ dimensional distributions $P_1 = \N(\vecmu_1, \sigma^2\mat{I})$ and $P_2 = \N(\vecmu_2, \sigma^2\mat{I})$, there exist two $dn$ dimensional distributions $Q_1$ and $Q_2$ such that
\begin{equation}\label{eq:no-distinguish}
(1-\alpha) P_1^n + \alpha Q_1 = (1-\alpha) P_2^n + \alpha Q_2.
\end{equation}
If this happens, then no algorithm can distinguish between $P_1$ and $P_2$. Let $\phi_1$ and $\phi_2$ be the PDF of $P_1^n$ and $P_2^n$, respectively. Let $\vecmu_1$ and $\vecmu_2$ be such that the total variation distance between $P_1^n$ and $P_2^n$ is
$$
\frac{1}{2}\int \onenms{\phi_1 - \phi_2} = \frac{\alpha}{1-\alpha}.
$$
By the results of the total variation distance between Gaussian distributions, we know that
\begin{equation}\label{eq:dist-mu1-mu2}
\twonms{\vecmu_1^+ - \vecmu_2^+} \ge  \frac{2\alpha\sigma}{1-\alpha}.
\end{equation}
Let $Q_1$ be the distribution with PDF $\frac{1-\alpha}{\alpha}(\phi_2 - \phi_1)\indi_{\phi_2 \ge \phi_1}$ and $Q_2$ be the distribution with PDF $\frac{1-\alpha}{\alpha}(\phi_1 - \phi_2)\indi_{\phi_1 \ge \phi_2}$. One can verify that~\eqref{eq:no-distinguish} is satisfied. In this case, by the lower bound in~\eqref{eq:dist-mu1-mu2}, we get
$$
\twonms{ \vecmu_1 - \vecmu_2  } \ge \frac{2\alpha\sigma}{\sqrt{n}(1-\alpha)} \ge \frac{2\alpha\sigma}{\sqrt{n}}.
$$
This implies that for two Gaussian distributions such that $\twonms{ \vecmu_1 - \vecmu_2} = \Omega(\frac{\alpha}{\sqrt{n}})$, in the worst case it can be impossible to distinguish these two distributions due to the existence of the adversary. Thus, to estimate the mean $\vecmu$ of a Gaussian distribution in the distributed setting with $\alpha$ fraction of Byzantine machines, any algorithm that computes an estimation $\widehat{\vecmu}$ of the mean has a constant probability of error $\twonms{\widehat{\vecmu} - \vecmu} = \Omega (\frac{\alpha}{\sqrt{n}}) $.

Further, according to the standard results from minimax theory~\citep{wu2017lecture}, we know that using $\bigo(nm)$ data, there is a constant probability that $\twonms{\widehat{\vecmu} - \vecmu} = \Omega(\sqrt{\frac{d}{nm}})$. Combining these two results, we know that $\twonms{\widehat{\vecmu} - \vecmu} = \Omega(\frac{\alpha}{\sqrt{n}} + \sqrt{\frac{d}{nm}})$.

\end{document}